%% file: main.tex
\documentclass{article}

\usepackage[authoryear]{natbib}

\usepackage{amssymb}
\usepackage{amsthm}
\usepackage{hyperref}       
\usepackage{url}            
\usepackage{amsmath,amsfonts,bm}
\usepackage{cleveref}
\usepackage{bbm}
\usepackage{xcolor}         
\usepackage{mathtools}
\usepackage{graphicx}
\usepackage{wrapfig}

\usepackage[old]{old-arrows}
\usepackage{etoc}
\usepackage{setspace}
\usepackage{xspace}
\usepackage[final]{showlabels}

\usepackage{color}

\usepackage{tikz}
\usetikzlibrary{automata,arrows,positioning,calc,shadows.blur}
\usetikzlibrary {arrows.meta}
\input{src/settings/defs}

\usepackage{algorithm}
\usepackage[noend]{algpseudocode}

\usepackage{booktabs}
\usepackage{comment}
\usepackage{array}
\usepackage{nicefrac}

\title{Towards Theoretical Understanding of the Max@K Criterion in Reinforcement Learning}
\title{How to optimize for \maxatk objectives in RL}
\title{The Learnability of \maxatk Reinforcement Learning}
\title{Optimizing \maxatk objectives in Reinforcement Learning}
\title{Learning to Optimize \maxatk in Reinforcement Learning}
\title{Sample-Efficient Reinforcement Learning for \maxatk Objectives}
\title{\maxatk Reinforcement Learning}
\title{On $\mathrm{max}$@$k$ Reinforcement Learning}
\title{Theoretical Foundations of $\max$@$k$ Reinforcement Learning}

\date{}

\author{Riccardo Poiani, Martino Bernasconi \& Andrea Celli\\
Bocconi University, Milan, Italy\\
\texttt{\{riccardo.poiani,martino.bernasconi,andrea.celli2\}@unibocconi.it} 
}

\author{
\begin{tabular}{ccc}
& \\
{Riccardo Poiani}
& 
{Martino Bernasconi} &
{Andrea Celli}\\
\small{Bocconi University} & \small{Bocconi University} & \small{Bocconi University}\\
{\textcolor{black}{\small\texttt{riccardo.poiani@unibocconi.it}}} & %
{\textcolor{black}{\small\texttt{martino.bernasconi@unibocconi.it}}} &
{\textcolor{black}{\small\texttt{andrea.celli2@unibocconi.it}}}
\end{tabular}
}

\begin{document}
\etocdepthtag.toc{mtchapter}
\etocsettagdepth{mtchapter}{subsection}
\etocsettagdepth{mtappendix}{none}

\maketitle

\input{src/abstract}
\input{src/introduction}
\input{src/setting}
\input{src/planning}
\input{src/learning}

\input{src/conclusion}

\bibliographystyle{plainnat}
\bibliography{bibliography}

\appendix
\etocdepthtag.toc{mtappendix}
\etocsettagdepth{mtchapter}{none}
\etocsettagdepth{mtappendix}{subsection}

\appendix
\begin{spacing}{1}
\tableofcontents
\end{spacing}

\input{src/appendix_planning}
\input{src/appendix_learning_old}

\input{src/appendix_aux}

\end{document}

%% file: src/settings/defs.tex
\input{src/settings/colors}
\usepackage{fullpage}
\hypersetup{
	colorlinks = true,
	linkcolor = red!70!black,
	citecolor = blue!75!black,
	linktocpage = true,
	urlcolor = blue!75!black
}

\newtheorem{lemma}{Lemma}

\newtheorem{theorem}{Theorem}

\newtheorem{proposition}{Proposition}

\newcommand{\Ber}{\mathsf{Ber}}

\newcommand{\E}{\mathbb{E}}

\newcommand{\Var}{\mathbb{V}\mathrm{ar}}
\newcommand{\Reals}{\mathbb{R}}
\newcommand{\Naturals}{\mathbb{N}}

\newcommand{\argmin}{\operatornamewithlimits{argmin}}

\newcommand{\Prob}{\mathbb{P}}
\newcommand{\OPT}{\mathsf{Opt}}

\newcommand{\histPol}{\Pi_{\textup{hist}}}
\newcommand{\compPol}{\Pi_{\textup{C}}}
\newcommand{\markPol}{\Pi_{\textup{M}}}

\makeatletter
\newcommand{\mapstoto}{\mathpalette\@mapstoto\relax}
\newcommand*{\@mapstoto}[2]{%
    \mathrel{%
      \vcenter{%
         \vbox{%
            \baselineskip\z@skip
            \lineskip\z@
            \ialign{##\cr$#1\mapstochar\varrightarrow$\cr
            $#1\mapstochar\varrightarrow$\cr}%
         }%
      }%
   }%
}
\makeatother

\newcommand{\ie}{\emph{i.e.},\xspace}
\newcommand{\eg}{\emph{e.g.},\xspace}
\crefname{assumption}{Assumption}{Assumptions}

\usepackage{dsfont}

\protected\def\myfavoriteat{%
  {\fontfamily{ptm}\selectfont @}%
}
\begingroup\lccode`~=`@ \lowercase{\endgroup\let~}\myfavoriteat
\catcode`@=\active

\newcommand{\maxatk}{$\mathrm{max}$@$k$\xspace}
\newcommand{\passatk}{$\mathrm{pass}$@$k$\xspace}

%% file: src/settings/colors.tex
\definecolor{niceRed}{RGB}{190,38,38}
\definecolor{Red2}{RGB}{219, 50, 54}
\definecolor{mgreen}{RGB}{160, 200, 140}
\definecolor{blueGrotto}{RGB}{5,157,192}
\definecolor{limeGreen}{HTML}{81B622}
\definecolor{myellow}{rgb}{0.88,0.61,0.14}
\definecolor{darkGreen}{HTML}{2E8B57}
\definecolor{navyBlueP}{HTML}{03468F}
\definecolor{Sepia}{HTML}{7F462C}
\definecolor{red2}{HTML}{1F462C}
\definecolor{orange2}{HTML}{FF8000}
\definecolor{mgray}{HTML}{ABB3B8}
\definecolor{lgray}{HTML}{E5E8E9}
\definecolor{myPurple}{RGB}{175,0,124}
\definecolor{mypurple2}{rgb}{0.8,0.62,1}
\definecolor{royalBlue}{HTML}{057DCD}
\definecolor{mpink}{HTML}{FC6C85}
\definecolor{lblue}{RGB}{74,144,226}
\definecolor{peagreen}{RGB}{152,193,39}
\definecolor{typ_navy}{HTML}{001f3f}
\definecolor{typ_blue}{HTML}{0074d9}
\definecolor{typ_aqua}{HTML}{7fdbff}
\definecolor{typ_teal}{HTML}{39cccc}
\definecolor{typ_eastern}{HTML}{239dad}
\definecolor{typ_purple}{HTML}{b10dc9}
\definecolor{typ_fuchsia}{HTML}{f012be}
\definecolor{typ_maroon}{HTML}{85144b}
\definecolor{typ_red}{HTML}{ff4136}
\definecolor{typ_orange}{HTML}{ff851b}
\definecolor{typ_yellow}{HTML}{ffdc00}
\definecolor{typ_olive}{HTML}{3d9970}
\definecolor{typ_green}{HTML}{2ecc40}
\definecolor{typ_lime}{HTML}{01ff70}
\definecolor{newgreen}{HTML}{83c702}
\definecolor{newpurp}{RGB}{97,96,121}
\definecolor{brickred}{RGB}{203,65,84}

%% file: src/abstract.tex
\begin{abstract}
Reinforcement Learning is a cornerstone technique for modern large reasoning models. Usually, for difficult tasks such as code generation and theorem proving, the agent is evaluated by generating $K$ responses rather than sampling a single response, and performance is then measured using a retry-aware metric such as $\max$@$k$.
Despite their practical importance, the theoretical foundations of learning under such criteria remain limited. In this work, we provide a theoretical study of the $\max$@$k$ learning problem in finite-horizon reinforcement learning. 
We show that optimizing the $\max$@$k$ objectives is fundamentally different from standard expected-return maximization. In particular, we prove that Markovian policies are in general insufficient, identify a compact state augmentation that restores optimality, and explicitly characterize the performance gap that can arise between history-dependent and non-history-dependent policies. Moreover, we show that learning $\max$@$k$-optimal policies is statistically harder than standard reinforcement learning and provide an efficient algorithm that achieves the optimal sample complexity rate. 
\end{abstract}

%% file: src/introduction.tex
\section{Introduction}

Reinforcement Learning is one of the main drivers of the recent successes of Large Language Models (LLMs) in complex coding and mathematical tasks (see, \eg \citep{jaech2024openai,lambert2024tulu,xin2024deepseek,guo2025deepseek}). When dealing with challenging tasks such as theorem proving and code generation, the LLM performance can improve substantially if the model is allowed to generate several responses and is evaluated on the best one.
Early evidence of this behavior appeared in code generation and mathematics \citep{chen2021evaluating,cobbe2021training,li2022competition,xin2024deepseek}, and subsequent studies showed inference-compute scaling laws across a broad range of tasks and models (see, \eg \citep{wang2022self,brown2024large,snell2025scaling}). 
This metric, which scores the model by the highest reward obtained across $K$ generations, is called \maxatk and extends \passatk beyond binary outcomes \citep{bagirov2025best}.

A growing body of literature has shown that optimizing for the classic single-sample objective and then evaluating with the \maxatk metric has suboptimal performance \citep{cui2025entropy,chen2026does}. 
A natural solution to this mismatch is to optimize the \maxatk objective directly. Recent work has therefore proposed gradient estimators for \maxatk, sometimes referred to as inference-aware estimators, as well as modifications of advantage estimates to better align policy updates with the objective \citep{chow2025inference,chen2025pass,tang2025optimizing,walder2026pass,bagirov2025best,takashiro2026advantage}. While these approaches correctly identify the relevant optimization problem, we still lack a complete theoretical understanding of the \maxatk objective in reinforcement learning. Indeed, most existing RL theory relies, explicitly or implicitly, on the linearity of expected cumulative reward, whereas the \maxatk objective is nonlinear in the distribution over trajectories.

In this paper, we take on the challenging task of analyzing the finite-horizon model under the \maxatk objective from a theoretical perspective, clearly highlighting the differences between classic RL and \maxatk RL. 
Our main contributions can be summarized as follows.
\begin{enumerate}
    \item In \Cref{sec:planning} we show that Markovian policies are not optimal for this problem. 
    This is a pivotal difference between standard RL and RL with \maxatk objectives. We also prove that the gap between these two classes of policies can be large, and that the ratio between them is lower-bounded by a constant greater than $1$. This shows that considering history-dependent policies yields a material improvement in performance over Markovian policies.
    \item In \Cref{sec:policy-space}, we then show that full history dependency is not necessary. Indeed, we demonstrate that for optimal policies, the history can be compressed into only two relevant pieces of information: the maximum reward accrued in past rollouts $\{1,\ldots, j-1\}$, and the cumulative reward inside the current $j$-th rollout. This defines a class of policies that is much more manageable than full history-dependent policies, making working with history-dependent policies not computationally much more demanding than standard RL. 
    \item In \Cref{sec:approx-planning}, we exploit our compressed history representation for the offline planning problem, and we show that we can efficiently approximate optimal history-dependent policies. In particular, we demonstrate that computing an $\epsilon$-optimal policy can be done in time polynomial in $1/\epsilon,K,S,A,H$. This shows that the computational problem of finding an optimal policy remains tractable even if optimal policies cannot be oblivious to history.
    \item In \Cref{sec:hardness}, we show that \emph{approximating} the optimal history-dependent policy is necessary, and that inverse polynomial dependence on the approximation is necessary, since even for inverse exponential approximations the problem becomes NP-hard. This contrasts with what happens in a standard finite-horizon MDP, where the \emph{exact} planning problem can be solved in polynomial time.
    \item Then, in \Cref{sec:lb}, we turn to the learning problem, where we first show that, compared to the classic bound for learning under a generative model in a finite-horizon MDP \citep[\eg][]{gheshlaghi2013minimax,sidford2019nearoptimaltimesamplecomplexities}, the sample complexity suffers an additional multiplicative linear dependence on the number of rollouts $K$. Intuitively, this follows from the fact that small probabilities of reaching a large payoff must be estimated with much greater precision, since even small probabilities can yield a large payoff when sampled multiple times. Formally, we prove a $\Omega(KSAH^3\epsilon^{-2}\log(\delta^{-1}))$ lower bound for any $(\epsilon,\delta)$-correct algorithm. In \Cref{sec:algo} we show a matching $\widetilde O(KSAH^3\epsilon^{-2}\log(\delta^{-1}))$ upper bound for the time homogeneous case and a $\widetilde O(KSAH^4\epsilon^{-2}\log(\delta^{-1}))$ in the inhomogeneous case. In general, this result shows that learning for \maxatk is $K$ times more demanding than classical RL.
\end{enumerate}

We now discuss some high-level connections between our theoretical contributions and some state of the art approaches to reasoning tasks. A recurring recent trend in this literature is inference-time refinement, where multiple candidate solutions are generated sequentially, with later attempts leveraging information from earlier ones. Examples include self-refinement and reflection methods \citep{zelikman2022star,madaan2023self,shinn2023reflexion,dhuliawala2024chain,zhang2025beyond}, search-based reasoning methods \citep{yao2023tree,besta2024graph,xie2024monte}, recursive self-aggregation \citep{venkatraman2025recursive}, and multi-attempt training with verifier feedback \citep{lightman2024let,ildiz2026learning}. Recent proof-oriented systems also use population-level generation, verification, refinement, and selection at test time \citep{chen2026maxproof,tsoukalas2026advancing,chung2026goedel}.
Our theoretical model shows that independent sampling of several responses is quantitatively suboptimal with respect to these recent adaptive approaches. At a conceptual level, these methods can be interpreted as a richer implementation of the idea that history-dependent policy over reasoning traces yields larger performance. At the same time, our compression result shows that the full history is not necessary: for optimal control, it suffices to retain the best return from previous rollouts and the cumulative return from the current rollout. This mirrors, at an abstract level, the practical design principle behind many adaptive reasoning methods: useful information from previous attempts should be retained, but only through a compressed summary rather than by conditioning on the entire history. Overall, our theoretical model not only explores an interesting new objective function for RL motivated by the current ways reasoning models are deployed and evaluated, but also aligns with the evidence and principles developed by practitioners facing these problems.

\paragraph{Additional Related Works} From a more theoretical perspective, our work is related to RL where the objective is a nonlinear function of the policy-induced occupancy measure \citep[\eg][]{hazan2019provably,zhang2020variational,zahavy2021reward,kumar2022policy,mutti2022challenging,mutti2023convex,chen2024robust}. These papers are related as nonlinear objectives might invalidate the optimality of Markovian policies. The main difference is that \maxatk is not a generic function of the state frequencies but a structured order statistic across several rollouts. Indeed, our study and many of our considerations are tailored around our performance criteria. 
Another strand of theoretical literature that is close in spirit to our work is risk-sensitive~\citep{chow2014algorithms,fei2020risk,bastani2022regret,wang2023near} and distributional RL~\citep{bellemare2017distributional,dabney2018distributional,bellemare2023distributional,zhang2023estimation,rowland2024near}. Here, however, the focus is primarly either on the problem of precisely learning the distribution of returns or a generic functional of the return distribution. 
Within this context and close to our work, \citet{wu2023risk} study general utilities of the cumulative reward and recover dynamic programming by augmenting the state with the cumulative reward. Our compressed representation extends this idea to the multi-rollout \maxatk setting, where the policy may adapt both across rollouts and within the current rollout.  
Furthermore, it is worth mentioning the literature that directly studies generic non-Markovian reward decision processes, where the transition dynamics are Markovian but rewards depend on the history~\citep{bacchus1996rewarding,brafman2018ltlf,icarte2022reward,gaon2020reinforcement,trapasso2025model}. Our setting differs in that the one-step reward is Markovian and history dependence directly arises from the \maxatk evaluation criterion. 
Finally, we conclude by discussing related work in bandits. Here, it is relevant to mention  that \citet{tong2026finite} study retry-aware stochastic bandits where the actions are selected according to the posterior expected maximum over virtual retries. Their setting shares with ours the fact that retry-aware objectives value diversity and high-tail outcomes. However, their regret guarantees are stated in terms of the standard regret notion and not w.r.t.~a retry-aware objective. Attention to high-tail outcomes has also been explored in extreme bandits \citep[\eg][]{carpentier2014extreme,nishihara2016no,achab2017max,baudry2022efficient}, where the mean-reward objective is replaced by order-statistic criteria. 

%% file: src/setting.tex
\section{Preliminaries} \label{sec:prelimnaries}

\paragraph{Markov Decision Process.}
A finite horizon \emph{time inhomogeneous} Markov Decision Process \citep[MDP,][]{puterman1990markov} is a tuple $\mathcal{M} =
(\mathcal{S},\mathcal{A},H,\{p_h\}_{h=1}^{H},\{r_h\}_{h=1}^{H})$, where $\mathcal{S}$ is a finite set of states, $\mathcal{A}$ is a finite set of actions, $H \in \Naturals$ is the interaction horizon, $\{p_h\}_{h=1}^H$ is the transition kernel with $p_h: \mathcal{S} \times \mathcal{A} \to \Delta(\mathcal{S})$\footnote{Given a set $\mathcal{X}$, we denote by $\Delta(\mathcal{X})$ the set of probability distributions over $\mathcal{X}$.} assigning to each state-action pair $(s,a)$ a probability distribution $p_h(\cdot\mid s,a)$ over the next states, and $\{r_h\}_{h=1}^H$ is the the reward function with $r_h: \mathcal{S} \times \mathcal{A} \to [0,1]$ assigning the immediate reward obtained after taking action $a$ in state $s$ at step $h$. The MDP $\mathcal{M}$, instead, is said to be \emph{time homogeneous} if the transition kernel and reward function are fixed across the interaction horizon, \ie all $p_h$'s and $r_h$'s are equal to some fixed $p$ and $r$.
A history-dependent stochastic policy $\pi$ models the behavior of the agent and is defined as a sequence $\pi = \{\pi_h\}_{h \in [H]}$, where each $\pi_h$ maps the history $\mathcal{H}_h$ observed up to the beginning of step $h$ to $\Delta(\mathcal{A})$. Precisely, the history $\mathcal{H}_h$ is an element that belongs to $\mathcal{S} \times (\mathcal{S} \times \mathcal{A})^{h-1}$, and the interaction with the environment proceeds as follows. The agent starts in some state $s_1 \in \mathcal{S}$, and for each $h \in [H]$, it selects an action $a_h \sim \pi_h(\cdot \mid \mathcal{H}_{h})$, receives a reward $r_h(s_h, a_h)$ and transitions to the next state $s_{h+1} \sim p_h(\cdot \mid s_h, a_h)$. The interaction with the environment ends after $H$ actions.
Given an initial state $s \in \mathcal{S}$, the objective of the agent is to maximize the expected cumulative reward over the horizon $\mathbb{E}_{\pi}\left[ \sum_{h \in [H]} r_h(s_h, a_h) \mid s_1 = s \right]$, where the expectation is taken with respect to the stochasticity of the environment and the policy. A particularly important class of policies is the Markovian one, for which the action distribution at step $h$ depends only on the current state, \ie $\pi_h(\cdot \mid s_h) $. It is well known that in finite-horizon MDPs, Markovian policies are sufficient for optimality: there always exists a Markovian policy that maximizes the expected cumulative reward.

\paragraph{The \maxatk Decision Process.} 
In this work, we extend the above formulation by considering an agent with a budget of $K \in \Naturals$ rollouts. For a starting state $s \in \mathcal{S}$, the agent interacts with the MDP for $K \in \Naturals$ rollouts, each of length $H$. At the beginning of every rollout, the environment is reset to the initial state $s$, while the agent retains the complete history collected during all previous rollouts. Formally, we denote by $\mathcal{H}_{h,i} \in (\mathcal{S} \times (\mathcal{S} \times \mathcal{A})^H)^{i-1} \times (\mathcal{S} \times (\mathcal{S} \times \mathcal{A})^{h-1})$ the \emph{entire} history observed up to the beginning of step $h$ in rollout $i$. Then, the behavior of the agent is modeled by  history-dependent stochastic policy $\pi = \{\pi_{h,i}\}_{h \in [H], i \in [K]}$, where each $\pi_{h,i}$ maps the history $\mathcal{H}_{h,i}$
to a distribution over actions. In this document, we denote by $\histPol$ the set of all the history dependent policies. For an initial state $s \in \mathcal{S}$, the performance of any policy $\pi \in \histPol$ is no longer the expected return of a single trajectory, but the \emph{expected best return} observed among the $K$ rollouts, \ie
\begin{align}\label{eq:objective}
    \E_{\pi} \left[ \max_{i \in [K]} \sum_{h=1}^H r_h(s_{h,i}, a_{h,i})  \mid s_{1,i} = s~\forall i \in [K] \right],
\end{align}
where the expectation is taken with respect to the joint distribution over the $K$ rollouts induced by $\pi$ and the stochasticity of the environment, and a state-action pair $(s_{h,i}, a_{h,i})$ denotes the state-action pair of the $i$-th rollout at step $h$.
We first observe that any history dependent policy $\pi$ satisfies the following Bellman recursive equations. Precisely, for a fixed history-dependent policy $\pi$, step $h \in [H+1]$ and rollout index $i \in [K]$, and history $\mathcal{H}_{h,i}$, define 
\[
V^\pi_{h,i}(\mathcal{H}_{h,i}) =  \E_{\pi} \left[ \max_{j\in[K]} \sum_{l=1}^H r_l(s_{l,j}, a_{l,j}) \mid \mathcal{H}_{h,i} \right],
\] 
as the value function conditioned on the history up to the beginning of step $h$ in rollout $i$. Then, an application of the tower law of expectation yields
\[
V^\pi_{h,i}(\mathcal{H}_{h,i}) = \sum_{a \in \mathcal{A}} \pi_{h,i}(a \mid \mathcal{H}_{h,i}) \sum_{s' \in \mathcal{S}} p_h(s' \mid s_{h,i},a) V^\pi_{h+1,i}\left( \mathcal{H}_{h,i} \circ (a,s') \right),
\]
where $\mathcal{H}_{h,i}\circ(a,s')$ denotes the history obtained from $\mathcal{H}_{h,i}$ by appending the state-action pair $(a,s')$ to the history. Now, observe that, at the end of rollout $i < K$, the recursion moves to the next rollout, meaning that $V^\pi_{H+1,i}(\mathcal{H}_{H+1,i}) = V^\pi_{1,i+1}(\mathcal{H}_{1,i+1})$. At the end of the last rollout, we have that $V^\pi_{H+1,K}(\mathcal{H}_{H+1,K}) = \max_{j\in[K]} \sum_{h=1}^H r_h(s_{h,j}, a_{h,j})$. Finally, at the beginning, we have that $V^\pi_{1,1}(s)$ is exactly the expected value of the best return gathered in $K$ rollouts under policy $\pi$ starting from state $s$.

The optimal value function is defined as the supremum over all the history-dependent policies, \ie
$V^\star_{h,i}(\mathcal{H}_{h,i}) = \sup_{\pi \in \histPol} V^\pi_{h,i}(\mathcal{H}_{h,i})$. Via backward induction one can easily show that it satisfies the following Bellman optimality equations for all $h \in [H]$
\[
V^\star_{h,i}(\mathcal{H}_{h,i}) = \max_{a\in\mathcal{A}} \sum_{s'\in\mathcal{S}} p_h(s'\mid s_{h,i},a) V^\star_{h+1,i} \left(\mathcal{H}_{h,i}\circ(a,s')\right),
\]
with the following boundary conditions: $V^\star_{H+1,i}(\mathcal{H}_{H+1,i}) = V^\star_{1,i+1}(\mathcal{H}_{1,i+1})$ for all $i < K$ and, for the last rollout,  $V^\star_{H+1,K}(\mathcal{H}_{H+1,K}) = \max_{j\in[K]} \sum_{h=1}^H r_h(s_{h,j}, a_{h,j})$. An optimal deterministic history-dependent policy is obtained by choosing, for every $\mathcal{H}_{h,i}$ with $h \in [H]$, any action that attains the max in the definition of the optimal value function.

%% file: src/planning.tex
\section{Planning}\label{sec:planning}

In this section, we study the problem of computing an optimal policy for the \maxatk problem when $\mathcal{M}$ is known and given as input to the agent. Precisely, in \Cref{sec:policy-space}, we first show that the \maxatk criteria change the nature of the planning problem as Markovian policies are no longer sufficient for optimality. Furthermore, we prove that the full history is unnecessary for the agent: it suffices to keep track of the current state, the best return obtained so far, and the cumulative reward in the current rollout. Building on this characterization, in \Cref{sec:approx-planning}, we propose an approximate algorithm that yields a polynomial-time approximation scheme for the optimal policy. Finally, in \Cref{sec:hardness}, we show that this is essentially all we can ask for, as we prove that exact planning is computationally hard.

\subsection{On the Policy Space}\label{sec:policy-space}

\paragraph{Markovian Policies Are Not Enough.}
\begin{wrapfigure}{r}{0.4\textwidth}
  \centering
  \scalebox{0.8}{\input{imgs/mdp_small}}
  \caption{
  Visualization of the MDP that illustrates the sub-optimality of Markovian policies. Dashed edges denote stochastic transitions, and leaves are associated with the rewards at the end of the horizon.
  }
  \label{fig:mdpsmall}
\end{wrapfigure}
The main challenge in \maxatk decision processes is that, unlike the classic MDP setting, we are no longer guaranteed to find optimal policies among Markovian policies, denoted in the following as $\markPol$. Within this context, we refer to Markovian policies as policies that depend on the current state $s$, the step $h$, but also the rollout index $i$.\footnote{One can show stronger sub-optimality by restricting attention to policies that depend only on $h$ and $s$.}
We demonstrate the issue with policies in $\markPol$ with a simple example. Consider an MDP, visualized in \Cref{fig:mdpsmall}, in which, at a starting state $s$, there are two actions: one safe, $a_s$, and one risky, $a_r$. The risky action leads to a total reward of $\tfrac12$ with probability $1-\gamma$ for some small $\gamma>0$ (and $0$ otherwise), and the risky action leads to a reward of $1$ with probability $\tfrac14$ (otherwise it gives a reward of zero).\footnote{To implement the reward process that we described, it is sufficient that, when, \eg taking $a_s$, the agent reaches a state where all actions are identical and lead to a leaf with reward $\tfrac12$ and $0$ with probability $1-\gamma$ and $\gamma$, respectively.} Any Markovian policy will either play $a_s$ in one rollout and $a_r$ in the other, or play both $a_s$ and $a_r$.\footnote{Indeed, we can ignore stochastic policies since the objective is multilinear in the policy. Hence, there exists a deterministic policies that attains the optimum within $\markPol$.} It is easy to see that for small $\gamma$ the best is to play once $a_r$ and once $a_s$, which gives an expected reward of $\mathbb{E}[\max(\tfrac12\Ber(1-\gamma),\Ber(\tfrac14))]=\tfrac58-\tfrac{3\gamma}8$.\footnote{With $\Ber(p)$ we indicate a Bernoulli random variable with parameter $p$.}
On the other hand, the non-Markovian policy which plays $a_s$ in the first rollout and $a_r$ in the second rollout only if the first rollout had a reward of $1/2$ will have an expected reward of $\mathbb{E}[\max(\tfrac12,\Ber(\tfrac14))]=\tfrac58$ conditioned on the first rollout being positive (which happens with probability $1-\gamma$), and $\mathbb{E}[\tfrac12\Ber(1-\gamma)]=\tfrac{1-\gamma}2$ conditioned on the first rollout being negative (which happens with probability $\gamma$). In total, this history-dependent policy gives an expected reward of $\tfrac58(1-\gamma)+\gamma\tfrac{1-\gamma}2=\tfrac58-\gamma(\tfrac18+\gamma)>\tfrac58-\tfrac{3\gamma}8$ for small $\gamma>0$. 

\paragraph{The Price of Non-Adaptivity}
We just saw that history-dependent policies $\histPol$ might outperform Markovian ones. We now complement the previous example, with a quantitative analysis of the gap between these two classes of policies. Specifically, we show a multiplicative gap between the performance of history dependent policies and markovian ones is some fixed constant $c>1$. 

\begin{proposition}[Multiplicative Gap of Markovian Policies]\label{prop:comp_LB}
    For any $K\ge 2$, there is an instance $\mathcal{M}$ and a state $s$ for which 
    \[
    \max_{\pi \in \histPol} V^\pi_{1,1}(s) \ge \left(1.24-O(\tfrac1{K})\right)\cdot \max_{\pi \in \markPol} V_{1,1}^\pi(s). 
    \]
\end{proposition}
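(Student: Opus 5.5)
The plan is to reduce the Markovian side to a \emph{non-adaptive} stochastic selection problem and the history-dependent side to an \emph{adaptive} one. Then I would take a Poisson limit as $K\to\infty$ and compare the two closed-form values for a carefully parametrized family of instances.

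\textbf{Step 1: menu of arms.} The instance has a root $s$ with actions $a_1,\dots,a_m$ (``arms''). Action $a_j$ leads, through a chain of states where all actions are identical (as in the footnote to the example of \Cref{fig:mdpsmall}), to a terminal reward $v_j$ with probability $q_j$ and $0$ otherwise. Rewards are rescaled so that all returns lie in $[0,1]$, which is without loss of generality since the ratio is scale-invariant.

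\textbf{Step 2: the Markovian side.} For $\pi\in\markPol$, the action at rollout $i$ depends only on $(s,h,i)$. Hence the $K$ returns are independent, and rollout $i$'s return is a mixture of the arm distributions. By multilinearity of $\E[\max_i X_i]$ in the mixing weights (the same argument as in the footnote above), a deterministic choice is optimal. So $\max_{\pi\in\markPol}V^\pi_{1,1}(s)=\max_{n_1+\dots+n_m=K}\E[\max \text{ of } n_j \text{ independent copies of arm } j]$. With $q_j=c_j/K$, the number of successes of arm $j$ is asymptotically Poisson. The non-adaptive value becomes $\sum_j v_j\,(1-e^{-\lambda_j})\prod_{l:v_l>v_j}e^{-\lambda_l}+O(1/K)$, where $\lambda_j=n_jc_j/K$ and $\sum_j \lambda_j/c_j=1$. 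I would bound this uniformly over allocations, either by a Lagrangian argument or by noting it is a concave-type optimization in $\lambda$.

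\textbf{Step 3: the history-dependent side.} Here I would exhibit one explicit adaptive policy. It plays the riskiest arm (largest $v$, smallest $c$) for a fraction of the budget. Upon any success it switches permanently to arms that can only beat the current maximum. Otherwise it escalates down toward safer arms as the remaining budget shrinks, which is the ``descending'' threshold policy.

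The example of \Cref{fig:mdpsmall} shows the key mechanism: a fallback whose failure is detected early can be retried. The single-safe-arm construction gives no gap in the limit, because the non-adaptive allocation ``$K-1$ risky, one safe'' already matches it. So the fallback arms themselves must be risky, with success probabilities of order $1/K$, at several geometric levels. A natural choice is a continuum of levels $v\in[1,x]$ whose rates $c(v)$ decay like a power of $v$. In the limit, the adaptive value is the solution of an ODE or dynamic-programming equation in the remaining-budget fraction $t\in[0,1]$ and the current maximum, and it can be lower bounded by the explicit threshold policy.

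\textbf{Step 4: the obstacle.} The main difficulty is choosing the parameters (the number of levels, the values $v_j$ and the rates $c_j$) so that the ratio of the Step 3 lower bound to the Step 2 upper bound is at least $1.24$. I would first try two or three levels, optimize numerically, and then certify the Step 2 bound rigorously via its KKT conditions. Controlling the Poisson approximation error, which is uniform since $m$ is fixed, yields the $O(1/K)$ term. Finally, for small $K\ge 2$ the bound is trivial once the constant in $O(1/K)$ is taken large enough, so that $1.24-O(1/K)\le 1$.
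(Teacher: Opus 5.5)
Your Steps 1--2 are sound. Under a Markovian policy the $K$ rollouts are independent, and multilinearity lets you restrict to deterministic allocations. Your remark that a (near-)deterministic fallback cannot create a constant-factor gap is also correct.

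The gap is that the proposition \emph{is} the existence of an instance with ratio at least $1.24$, and your Step 4 leaves exactly this open. You never fix the values and rates, never evaluate the adaptive policy, and never bound the non-adaptive optimum. So nothing in the proposal certifies the constant, or even any constant above $1$. As written this is a search strategy rather than a proof. The machinery you anticipate (several levels or a continuum of them, KKT certificates, an ODE for the adaptive value) is also not needed.

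The missing idea is a two-arm instance with two probability scales:
\begin{itemize}
\item a fallback arm paying $\tfrac1K$ with probability $\tfrac1K$;
\item a jackpot arm paying $\mu=\tfrac12$ with probability $\tfrac1{K^2}$.
\end{itemize}
The jackpot succeeds with overall probability $O(\tfrac1K)$. Hence its contribution to $\E[\max]$ is, up to $O(K^{-2})$, linear in its number of pulls (namely $\mu n/K^2$ for $n$ pulls), and both sides become explicit.
\begin{itemize}
\item \textbf{Markovian side.} A Markovian policy pulling the fallback $xK$ times gets at most $\tfrac1K\bigl[1-e^{-x}+\mu(1-x)\bigr]+O(K^{-2})$. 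Maximizing at $x=\ln(1/\mu)$ gives at most $\tfrac1K(1+\mu\ln\mu)+O(K^{-2})$.
\item \textbf{Adaptive side.} Consider the policy that pulls the fallback until its first success and the jackpot in all remaining rollouts. The first fallback success occurs at an approximately $\mathrm{Exp}(1)$ fraction of the budget, and every later pull is redirected to the jackpot. This gives $\tfrac1K\bigl[1-(1-\mu)e^{-1}\bigr]-O(K^{-2})$.
\end{itemize}
At $\mu=\tfrac12$ the ratio is $(1-\tfrac1{2e})/(1-\tfrac12\ln 2)\approx 1.249$.

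In your fixed-rate parametrization this instance appears only as the corner $c_2\to 0$ with $c_2v_2$ fixed. It also shows that your Step 3 heuristic points the wrong way. The gain comes from playing the \emph{fallback first} and gambling only once a success is secured. By contrast, an initial phase of $\theta K$ pulls on the riskiest arm gives value $\tfrac1K\bigl[1-(1-\mu)e^{-(1-\theta)}\bigr]$, which is strictly decreasing in $\theta$.
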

Recall that a Markovian policy is allowed to depend on the rollout index as well, and thus is a stronger benchmark than using the same Markovian policy repeatedly.
This shows that using Markovian policies generally comes at a high cost, and thus it is of paramount importance learning in a larger policy space.

\paragraph{History Can Be Compressed.} In the following, we show that it is possible to search for an optimal policy in a space which is smaller than $\histPol$. Specifically, we prove that at rollout $i$ and step $h$, it is sufficient to condition $\pi_{h,i}$ on the current state, the maximum return collected in the previous $i-1$ rollouts, and the cumulative reward that has been obtained in the previous $h-1$ steps, at the $i$-th rollout. To formally state the result, let us first introduce some notation. We denote by $G_{h,i}$ the cumulative reward gathered in rollout $i$ at step $h$, namely $G_{h,i} = \sum_{k=1}^h r_h(s_{h,i}, a_{h,i})$.  
Furthermore, we denote by $M_i = \max_{j < i} G_{H,j}$ the maximum return collected in the first $i-1$ rollouts, with the convention that $M_0 = 0$. Then, for each step $h$ and rollout $i$, let $\Phi(\mathcal{H}_{h,i})= \left(s_{h,i}, M_i, G_{h,i} \right)$ be a compressed representation of the history and let $\compPol$ be the set of policies that, at each step, are conditioned only on the compressed history. Then, we prove the following result.

\begin{proposition}[Sufficient conditions for optimality]\label{prop:sufficient-condition-for-optimality}
    In the \maxatk problem, it is without loss of optimality to consider only deterministic policies that belong to $\compPol$.
\end{proposition}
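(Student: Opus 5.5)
The plan is to show that the optimal value function $V^\star_{h,i}$ depends on the history $\mathcal{H}_{h,i}$ only through $\Phi(\mathcal{H}_{h,i})=(s_{h,i},M_i,G_{h,i})$ together with the indices $(h,i)$. Once this is shown, choosing a maximizing action in the Bellman optimality equations gives a deterministic policy whose action at $\mathcal{H}_{h,i}$ is a function of $\Phi(\mathcal{H}_{h,i})$ only, so it lies in $\compPol$. Throughout I read $G_{h,i}$ as the reward accumulated before step $h$ of rollout $i$, and $M_i=\max_{j<i}G_{H+1,j}$ as the best completed return, with $M_1=0$. This convention is harmless because rewards are nonnegative, so $\max(0,\cdot)$ leaves the objective unchanged.

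I will prove by backward induction over the lexicographically ordered pairs $(i,h)$, starting from $(K,H+1)$ and ending at $(1,1)$, the following claim: there are functions $W_{h,i}:\mathcal{S}\times\Reals\times\Reals\to\Reals$ with $V^\star_{h,i}(\mathcal{H}_{h,i})=W_{h,i}(\Phi(\mathcal{H}_{h,i}))$ for every history.

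The base case is the last rollout's terminal condition. There $V^\star_{H+1,K}=\max_{j\in[K]}\sum_{h}r_h(s_{h,j},a_{h,j})=\max(M_K,G_{H+1,K})$, which is a function of $\Phi$.

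Next comes the transition between rollouts. We have $V^\star_{H+1,i}(\mathcal{H}_{H+1,i})=V^\star_{1,i+1}(\mathcal{H}_{1,i+1})$. The new history resets to state $s$ and has $G_{1,i+1}=0$ and $M_{i+1}=\max(M_i,G_{H+1,i})$. By induction this equals $W_{1,i+1}(s,\max(M_i,G_{H+1,i}),0)$, a function of $(M_i,G_{H+1,i})$. The state $s_{H+1,i}$ is irrelevant here.

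For the within-rollout step with $h\le H$, the Bellman optimality equation gives
\[
V^\star_{h,i}(\mathcal{H}_{h,i})=\max_{a}\sum_{s'}p_h(s'\mid s_{h,i},a)\,W_{h+1,i}\bigl(s',M_i,G_{h,i}+r_h(s_{h,i},a)\bigr).
\]
This uses the fact that appending $(a,s')$ updates the compressed state deterministically, namely $\Phi(\mathcal{H}_{h,i}\circ(a,s'))=(s',M_i,G_{h,i}+r_h(s_{h,i},a))$. The right-hand side depends only on $\Phi(\mathcal{H}_{h,i})$, which closes the induction and defines $W_{h,i}$.

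The delicate point is not the recursion itself but justifying that the Bellman optimality equations hold with a maximum attained by some action. I will argue that the history space up to any $(h,i)$ is finite, because $\mathcal{S}$, $\mathcal{A}$, $H$ and $K$ are finite. The problem is therefore a finite-horizon MDP over histories, and deterministic optimality and the recursion are standard there. The attained maximum at each history is a function of $\Phi$, so the argmax can be chosen as a function of $\Phi$, using any fixed tie-breaking rule on $\mathcal{A}$. The resulting policy $\pi^\star\in\compPol$ is deterministic.

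Finally, I will verify that $\pi^\star$ achieves $V^\star_{1,1}(s)$. To do so I check by the same backward induction that $V^{\pi^\star}_{h,i}=W_{h,i}\circ\Phi$, using the policy-evaluation recursion stated in the preliminaries. This gives $V^{\pi^\star}_{1,1}(s)=\sup_{\pi\in\histPol}V^\pi_{1,1}(s)$.
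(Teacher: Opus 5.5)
Your proof is correct, but it is organized differently from the paper's.

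\textbf{The paper's route.} It first proves that, for every $\mu\in\compPol$, the value $V^\mu_{h,i}$ depends on the history only through $\Phi$. It then introduces the restricted optimum $\overline{V}^\star_{h,i}=\sup_{\mu\in\compPol}V^\mu_{h,i}$ and asserts that this satisfies a Bellman optimality equation. It takes $\mu^\star$ greedy with respect to $\overline{V}^\star$, and closes with an induction showing $\overline{V}^\star=V^\star$.

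\textbf{Your route.} You work with $V^\star$ over all of $\histPol$ from the outset. You use the Bellman optimality equations of the finite history MDP and the deterministic update $\Phi(\mathcal{H}_{h,i}\circ(a,s'))=(s',M_i,G_{h,i}+r_h(s_{h,i},a))$ to show $V^\star_{h,i}=W_{h,i}\circ\Phi$ directly. The greedy policy is then automatically in $\compPol$, and a single policy-evaluation induction confirms that it attains $V^\star$.

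\textbf{What each buys.} Your route is more direct and logically tighter: it never needs the claim that a supremum over the restricted class $\compPol$ obeys a Bellman equation. The paper supports that claim only with a footnote. Its nontrivial ($\ge$) direction implicitly requires a single policy in $\compPol$ that is optimal simultaneously from all compressed states, and that is exactly what your backward induction produces. The paper's route, in turn, yields the invariance statement for \emph{every} $\mu\in\compPol$. That statement is used again in the learning analysis, where value functions of policies in $\compPol$ are treated as vectors on the augmented state space $\mathcal{X}$. Your final verification gives invariance only for the constructed $\pi^\star$, although the same induction extends verbatim to any $\mu\in\compPol$.

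Finally, your explicit conventions ($G_{h,i}$ as the reward collected before step $h$, and $M_1=0$) are the right reading of the paper's notation, consistent with its base case $\max\{M_K,G_{H+1,K}\}$.
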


The proof of \Cref{prop:sufficient-condition-for-optimality} is informative for understanding the problem as it works by construction. Specifically, it shows that there exists a policy $\mu^\star \in \compPol$ that attains optimality in terms of \Cref{eq:objective}. To this end, we first prove that value functions of policies in $\compPol$ are only a function of the compressed history $\Phi(\mathcal{H}_{h,i})$. As a consequence the value $V_{h,i}^\mu(\mathcal{H}_{h,i})$ is constant across histories that have the same shared representation. Furthermore, we also observe that, given two histories with the same $\Phi$'s and appending any $(a, s')$ to these histories, we obtain again two histories with the same compressed representation. This, in turn, implies that the optimization problem $\overline{V}^\star_{h,i}(\mathcal{H}_{h,i}) \coloneqq \sup_{\mu \in \compPol} V^\mu_{h,i}(\mathcal{H}_{h,i})$ obeys the following Bellman-like optimality equations
\begin{align}\label{eq:compressed-bellman-opt}
    \overline{V}^\star_{h,i}(\mathcal{H}_{h,i}) = \max_{a \in \mathcal{A}} \sum_{s' \in \mathcal{S}} p_h(s' \mid s,a) \overline{V}^\star_{h+1, i}(\mathcal{H}_{h,i} \circ (a, s')). 
\end{align}
Then, the last step of the proof is to show by induction that $\mu^\star$ attains the value of the optimal policy among all history-dependent policies, \ie that $\overline{V}^{\mu^\star} = V^\star$.

\paragraph{Remark.} We conclude this section with the following remark. In \Cref{prop:sufficient-condition-for-optimality}, we showed that the class of policies $\compPol$ is sufficient for optimality. The class of policies $\compPol$ depends on the maximum rewards collected at previous rollouts and the current cumulative reward of the rollout. We complement this result by showing that any policy that removes either the current cumulative reward of the maximum return collected at previous rollouts is sub-optimal.

\begin{proposition}[Ignoring Previous Best Return or Current Cumulative Return is Sub-optimal]\label{prop:sub-optimality}
    There exists an instance in which a history-dependent policy gets an expected best return strictly larger than (i) any policy that depends on the current state, rollout index, rollout step, and best return obtained in the previous rollouts, and (ii) any policy that depends on the current state, rollout index, rollout step, and cumulative return in the current rollout.
\end{proposition}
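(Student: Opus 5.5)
The plan is to build explicit small instances, possibly one for each of (i) and (ii), and then combine them. Since $K$ is ours to choose, $K=2$ suffices. By \Cref{prop:sufficient-condition-for-optimality}, the history-dependent optimum equals the optimum over deterministic policies in $\compPol$, so on each instance we only need to exhibit one policy in $\compPol$ and compare it with the best policy in each restricted class. The restricted classes are finite once we fix deterministic policies: their value is multilinear in the action probabilities, exactly as in the footnote to the Markovian example. The comparison therefore reduces to enumerating finitely many deterministic policies.

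For (i), where the policy ignores the current cumulative return, the instance must require a within-rollout decision that depends on the reward already collected in that rollout. I would take $K=1$ embedded in $K=2$, or simply make the first rollout trivial. At step $1$ from $s$, a stochastic transition gives reward $1$ or $0$ with probability $\tfrac12$ each, and both outcomes lead to the \emph{same} state $s'$. At $s'$, a safe action gives reward $1$ deterministically and a risky action gives $2$ with probability $\tfrac12$, else $0$. Take the best previous return fixed at $M_2=1$, for example by making rollout $1$ deterministic with return $2$. If $G=1$ at $s'$, playing safe yields $2$ and the risky action yields $\max(2,\cdot)$; the numbers need tuning so that the optimal action genuinely differs with $G$. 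The guiding idea is threshold behaviour: the agent should gamble when $G+M$-relative slack makes variance valuable and play safe otherwise. Because $s'$, $h$, $i$ and $M$ coincide across the two branches, any policy in class (i) must take the same action in both, and a history-dependent policy that distinguishes them is strictly better.

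For (ii), where the policy ignores $M_i$, the instance should be essentially the example following \Cref{fig:mdpsmall}. That example already shows the optimal second-rollout action depends on whether the first return was $\tfrac12$ or $0$, while state, step, rollout index and current cumulative reward ($0$ at step $1$) all coincide. I would check that every deterministic policy depending on $(s,h,i,G)$ is Markovian in that instance, because the decision is made at step $1$ with $G=0$. Its value is then at most $\tfrac58-\tfrac{3\gamma}{8}$, strictly less than the history-dependent value $\tfrac58-\gamma(\tfrac18+\gamma)$ for small $\gamma$.

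Finally, I would merge the two instances into one. The start state has an initial stochastic split, with reward $0$ on both branches, into sub-MDP (i) and sub-MDP (ii); alternatively, each rollout chooses deterministically via a first action which gadget to enter. The main obstacle is this combination step together with tuning the numbers in (i). A combined instance must not let a restricted policy circumvent the missing information, for instance by routing rollouts differently. Randomizing the gadget at the start by nature, rather than by the agent's action, seems the cleanest fix, but it requires a careful verification by enumeration that each restricted class loses strictly in at least one gadget with positive probability.
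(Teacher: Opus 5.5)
Your part (ii) is fine. In the instance of \Cref{fig:mdpsmall} the only non-trivial decision is made at step $1$, where $G=0$, so policies depending on $(s,h,i,G)$ are Markovian there and are capped at $\tfrac58-\tfrac{3\gamma}{8}$.

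\textbf{Part (i) has a genuine gap.} It is not yet a construction, and the gadget you sketch cannot be fixed by tuning $M$. Your risky action (reward $2$ w.p.\ $\tfrac12$) has the same mean as the safe one (reward $1$), so it is a mean-preserving spread. Since $x\mapsto\max\{M,G+x\}$ is convex, Jensen's inequality makes risky weakly better than safe for every $(M,G)$:
\begin{itemize}
\item at $M=1$: $\tfrac32$ vs.\ $1$ for $G=0$, and a tie ($2$ vs.\ $2$) for $G=1$;
\item at $M=2$: a tie for $G=0$, and risky strictly better for $G=1$.
\end{itemize}
The same holds inside the first rollout, because the optimal continuation value is convex in $M$. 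So the $G$-blind rule \emph{always play risky at $s'$} is optimal, and no way of generating $M$ yields a strict gap.

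The missing idea is that the gamble must have strictly \emph{lower} mean than the safe option, so that the preferred action flips according to whether $G$ is behind or ahead of $M$. With safe $+\tfrac12$ and risky $+1$ w.p.\ $\tfrac14$, at $M=\tfrac12$ risky wins at $G=0$ ($\tfrac58$ vs.\ $\tfrac12$) while safe wins at $G=1$ ($\tfrac32$ vs.\ $\tfrac54$). Even then, you must beat the best $G$-blind policy over \emph{all} first-rollout behaviours, since such a policy may avoid generating the value of $M$ where the ambiguity bites. Two smaller issues: a deterministic first rollout needs an extra action at $s$, and a first-rollout return of $2$ gives $M_2=2$, not $1$.

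\textbf{The merge is the other missing piece.} It is exactly where the single-instance claim lives. With a nature-randomized gadget, values of $M$ produced in one gadget feed into decisions in the other, and you have not carried out the resulting case analysis.

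The paper needs no merge, because a single gadget already requires both statistics. Take $K=2$; a fair coin adds reward $0$ or $1$ before a common state $\bar s$, where safe adds $\tfrac12$ and risky adds $1$ w.p.\ $\tfrac14$. Writing the second-rollout behaviour as $(a_0,a_1)$ (action at $G=0$, action at $G=1$), the optimum is $(a_s,a_s)$ at $M=0$, $(a_r,a_s)$ at $M=\tfrac12$, and $(a_s,a_r)$ or $(a_r,a_r)$ at $M=\tfrac32$. The paper then argues as follows:
\begin{itemize}
\item playing safe in rollout $1$ and then optimally gives $\tfrac{42}{32}$;
\item enumerating the two first-rollout actions caps $G$-blind policies at $\tfrac{41}{32}$;
\item for $M$-blind policies, first-rollout behaviours $rr$ and $sr$ are capped at $\tfrac54$ even with adaptive continuation;
\item $ss$ and $rs$ each produce, with positive probability, two values of $M$ whose optimal behaviours are incompatible, forcing a strict loss.
\end{itemize}
Either carry out the analogous verification for your merged instance, or switch to such a single gadget, which removes the merging problem entirely.
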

The proof of this statement relies on an instance similar to that of \Cref{fig:mdpsmall}, except that each rollot begins by randomly receiving a reward of either $0$ or $1$.
The agent then reaches a common decision state $\bar s$, where it chooses between a safe action, which adds a deterministic small reward, and a risky action, which adds a larger reward with probability $1/4$ and reward $0$ otherwise (see \Cref{fig:mdp}). This construction isolates the two pieces of information that are jointly needed for optimality. At $\bar s$, the correct local exploration-exploitation tradeoff depends on the current cumulative return $G$, because the same last action leads to different final rollout returns depending on the progress already made in the current rollout. It also depends on the previous best return $M$, because the value of taking risk is determined by whether the current rollout only needs to beat a low benchmark or must improve on an already large return. In particular, the optimal policy tolerates high variance when the downside is already protected by $M$: if the previous best return is large enough, a failed risky attempt does not reduce the \maxatk value, while a successful one can still improve it. By contrast, when the current rollout is already on track to produce a competitive return, and the previous best return is low, the optimal policy may avoid unnecessary variance and secure a safe improvement. If the policy observes $M$ but not $G$, then it knows how good the current rollout must be, but it cannot distinguish the low and high branches within the rollout, and it must take the same last action in situations where the optimal policy takes different actions. Conversely, if the policy observes $G$ but not $M$, then it can react to the progress of the current rollout, but it must use the same map from $G$ to actions regardless of whether the previous best return is low or high. Therefore, no policy that drops either $G$ or $M$ can implement all optimal continuation decisions simultaneously.

\subsection{Approximate Planning Algorithm}\label{sec:approx-planning} 

Above, we have shown that, when solving the \maxatk problem, one can restrict the search to policies that belong to $\compPol$. However, we observe that solving the optimization problem in this restricted space, \eg via backward induction, is computationally demanding. Indeed, suppose to apply backward induction to find the optimal policy $\mu^\star \in \compPol$. Then, at the beginning of the induction, we need to consider all the possible compressed history representations $\Phi(\mathcal{H}_{H+1,K})$. The cardinality of this set is $S \cdot |\mathcal{B}| \cdot |\mathcal{C}_{H+1}|$, where $\mathcal{B}$ is the set of possible previous best returns, and $\mathcal{C}_h$ is the set of possible partial returns at step $h$. More generally, when solving the maximum over actions in \Cref{eq:compressed-bellman-opt} at step $h$ in rollout $i$, we need to consider all the compressed history $\Phi(\mathcal{H}_{h,i})$, whose size is $S \cdot |\mathcal{B}| \cdot |\mathcal{C}_h|$. From this argument, it follows that the complexity of tabular backward induction is of order $\mathcal{O}\left(K A S^2 |\mathcal{B}| \sum_h |\mathcal{C}_h| \right)$. The main issue, now, is that $|\mathcal{C}_h|$ and $|\mathcal{B}|$ can be exponential in $H$.\footnote{To see this, consider an MDP with two actions, and suppose that, at step $h$, one gives reward $0$ and the other $2^{-h}$. Then, the cardinality of $|\mathcal{C}_h|$ and $\mathcal{B}$ are clearly $2^h$ and $2^H$ respectively.} In the following, we discuss how we solve this problem.

The main idea to circumvent this issue is that, intuitively, small changes both in the optimal return observed so far and in the cumulative reward of the current rollout shuold not lead to significant changes in terms of optimal value that the agent can achieve. 
Therefore, we propose to discretize the compressed history representations $\Phi(\mathcal{H}_{h,i})$ along these two dimensions. To this end, we proceed in the following way. We first discretize the interval $[0,1]$ of possible rewards using a uniform grid and then project the true reward function onto this set. Precisely, for a given $\kappa > 0$, denote by $\mathcal{R}_{\kappa} = \{ \kappa \cdot (j - \tfrac{1}{2})\}_{j=1}^{\lceil 1/\kappa \rceil}$ the uniform discretization of the $[0,1]$ interval. Then, for all state-action pair $(s,a)$ and all steps $h$, let $\tilde r_{h, \kappa}(s,a) = \argmin_{r \in \mathcal{R}_{\kappa}} |r - r_h(s,a)| $. Once this is done, we can solve by backward induction the \maxatk problem for the MDP ${\mathcal{M}}_{\kappa} = (\mathcal{S}, \mathcal{A}, H, \{ p_h \}_{h \in [H]}, \{ \tilde r_{h,\kappa} \}_{h \in [H]})$ where we replaced the true reward function with the discretized one. Let us denote by $\tilde \mu^\star_{\kappa}$ the resulting policy for $\mathcal{M}_\kappa$ and observe that $\tilde \mu^\star_{\kappa}$ can be adopted in $\mathcal{M}$ as-is, with the single remark that the compressed history representation uses the discretized rewards rather than the true ones. The following result summarizes the theoretical guarantees of the proposed method. 

\begin{theorem}[Efficient Approximation Algorithm]\label{theo:approximation}
    Let $\kappa > 0$ and denote by $\tilde \mu^\star_{\kappa}$ the deterministic optimal policy for ${\mathcal{M}}_{\kappa}$ which is found by backward induction on $\mathcal{M}_\kappa$. Then, for any rollout $i \in [K]$, step $h \in [H+1]$ and history $\mathcal{H}_{h,i}$, it holds that:
    \begin{align}\label{eq:approximation-eq-main}
        |V^\star_{h,i}(\mathcal{H}_{h,i}) - V^{\tilde \mu^\star_\kappa}_{h,i}(\mathcal{H}_{h,i}) | \le  2\kappa H.
    \end{align}
    Furthermore, $\tilde \mu^\star_{\kappa}$ can be computed in $\mathcal{O}\left( \frac{K S^2A H^3}{\kappa^2} \right)$ time.
\end{theorem}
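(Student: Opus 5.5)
The plan is a simulation argument: couple $\mathcal{M}$ and $\mathcal{M}_\kappa$ on the same probability space, then split the error into two approximation terms. Both MDPs share the transitions $\{p_h\}$. Any history-dependent policy $\pi$ therefore induces the same distribution over state–action trajectories in both, because the rewards are a deterministic function of state–action pairs. A policy conditioned on discretized compressed histories is still a history-dependent policy of $\mathcal{M}$, so $\tilde\mu^\star_\kappa$ is well defined in $\mathcal{M}$ and its trajectory law is identical in both models.

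Since $|\tilde r_{h,\kappa}(s,a)-r_h(s,a)|\le \kappa/2\le\kappa$ pointwise, every rollout return changes by at most $\kappa H$. Because $\max$ is $1$-Lipschitz in the sup-norm, the realized $\max_{j}\sum_h r_h$ changes by at most $\kappa H$ along every trajectory. One subtlety is that the remaining future rollouts in the value $V_{h,i}(\mathcal{H}_{h,i})$ include the past part of the given history. That part is also a sum of at most $H$ reward terms per rollout, so the same bound applies. Writing $V^{\pi}_{\kappa}$ for values in $\mathcal{M}_\kappa$, this gives
\[
|V^\pi_{h,i}(\mathcal{H}_{h,i})-V^{\pi}_{\kappa,h,i}(\mathcal{H}_{h,i})|\le\kappa H
\]
for every $\pi\in\histPol$ and every history. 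Taking suprema gives the same bound for the optimal values: $|V^\star_{h,i}-V^\star_{\kappa,h,i}|\le\kappa H$.

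Next I apply \Cref{prop:sufficient-condition-for-optimality} to $\mathcal{M}_\kappa$. It gives that the backward-induction policy $\tilde\mu^\star_\kappa$, computed on the compressed representation with discretized rewards, satisfies $V^{\tilde\mu^\star_\kappa}_{\kappa,h,i}=V^\star_{\kappa,h,i}$ on every history, not only from the start. Combining this with the previous step,
\[
0\le V^\star_{h,i}-V^{\tilde\mu^\star_\kappa}_{h,i}\le |V^\star_{h,i}-V^\star_{\kappa,h,i}|+|V^{\tilde\mu^\star_\kappa}_{\kappa,h,i}-V^{\tilde\mu^\star_\kappa}_{h,i}|\le 2\kappa H,
\]
which is \eqref{eq:approximation-eq-main}. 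The step I expect to need the most care is the per-history claim, in particular the histories off the support of $\tilde\mu^\star_\kappa$. For those I would rely on the backward induction being carried out over all compressed states, so that \Cref{prop:sufficient-condition-for-optimality}'s induction gives optimality at every history of $\mathcal{M}_\kappa$.

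For the running time, the discretized rewards lie in $\{\kappa(j-\tfrac12)\}$, so $G_{h,i}$ is $\kappa$ times a half-integer multiple. Explicitly, $G_{h,i}\in\kappa(\mathbb{Z}/2)\cap[0,H]$, giving $|\mathcal{C}_h|=O(H/\kappa)$ and $|\mathcal{B}|=O(H/\kappa)$. Substituting into the backward-induction complexity $\mathcal{O}(KAS^2|\mathcal{B}|\sum_h|\mathcal{C}_h|)$ derived above gives $\mathcal{O}(KAS^2\cdot\tfrac{H}{\kappa}\cdot H\cdot\tfrac{H}{\kappa})=\mathcal{O}(KS^2AH^3/\kappa^2)$. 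This count also uses that the rollout transition $(M,G_{H+1})\mapsto(\max(M,G_{H+1}),0)$ is computable in constant time.
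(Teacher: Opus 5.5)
Your proposal is correct and follows essentially the same route as the paper. The paper proves the same-policy/different-reward bound $|V^\pi_{h,i}-\widetilde V^\pi_{h,i}|\le H\|r-\tilde r\|_\infty$ (\Cref{lemma:same-pol-diff-rew}) by backward induction on the Bellman recursion rather than your pathwise coupling, routes the error through the optimality of $\tilde\mu^\star_\kappa$ in $\mathcal{M}_\kappa$ (comparing against $\mu^\star$ instead of taking suprema), and counts $|\mathcal{B}_\kappa|,|\mathcal{C}_{h,\kappa}|\le 1+H/\kappa$ as you do; your only slip is cosmetic, since the top grid point $\kappa(\lceil 1/\kappa\rceil-\tfrac12)$ can exceed $1$ by up to $\kappa/2$, so $G_{h,i}$ need not lie in $[0,H]$, which leaves the $O(H/\kappa)$ count unaffected.
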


\Cref{theo:approximation} shows that the proposed method is an \emph{efficient} approximation algorithm for the \maxatk objective in the true MDP $\mathcal{M}$. Indeed, setting $\kappa = \epsilon/H$, we have that we can learn an $\epsilon$-optimal policy for $\mathcal{M}$ in $\mathcal{O}(H^5 S^2 A \epsilon^{-2})$ time, which is polynomial in the size of the instance. The underlying result behind \Cref{eq:approximation-eq-main} was indeed to show that, for fixed a policy $\pi$, evaluated in $\mathcal{M}$ and $\mathcal{M}_\kappa$, yields a performance difference that is at most $H\kappa$ (\Cref{lemma:same-pol-diff-rew}). Then, a simple error decomposition argument yields \Cref{eq:approximation-eq-main}. Concerning the efficiency of the algorithm, instead, it follows the fact that now both $|\mathcal{B}|$ and $| \mathcal{C}_h|$ are bounded by $\mathcal{O}(\tfrac{H}{\kappa})$. Hence, when using backward induction, only a polynomial number of states needs to be considered.

\subsection{Impossibility of an Exact Planning Algorithm}\label{sec:hardness}

In \Cref{sec:approx-planning}, we showed how to obtain an $\epsilon$-approximation of the planning problem in time inversely polynomial in the approximation error. In this section, we show that this is necessary, and we cannot hope to solve the exact planning problem efficiently. We use tools from computational complexity to establish this lower bound. We reduce from the problem of \textsc{Subset-Sum}, which is the problem of deciding if there exists a subset of natural numbers from a given multiset $S\subset \mathbb{N}$ that sums up to a given threshold $L\in \mathbb{N}$. We consider a decision problem of selecting an optimal action given a certain history.
\begin{theorem}[Hardness of Exact Planning]\label{th:NP-hard}
    There is a polynomial time reduction from \textsc{Subset-Sum} to the following problem: given $\eta>0$, an MDP $\mathcal{M}$, a history $\mathcal{H}_{h,i}$ and an action $a\in \mathcal{A}$ decide if $V_{h,i}^{\pi}(\mathcal{H}_{h,i})\ge V_{h,i}^{\pi'}(\mathcal{H}_{h,i})-\eta$ 
    or 
    $V_{h,i}^{\pi}(\mathcal{H}_{h,i})< V_{h,i}^{\pi'}(\mathcal{H}_{h,i})-\eta$, where $\pi$ is a policy playing deterministically action $a$ conditioned on the history $\mathcal{H}_{h,i}$ and $\pi'$ is any policy playing deterministically any action different than $a$. 
\end{theorem}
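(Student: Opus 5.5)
The plan is a direct reduction from \textsc{Subset-Sum}. Given a multiset $\{w_1,\dots,w_n\}\subset\mathbb{N}$ and a threshold $L$, I would build an instance with $K=2$ and a history $\mathcal{H}_{h,i}$ at the start of the second rollout. The history fixes the previous best return to $M = c + L/W$, where $W=\sum_j w_j$ and $c$ is a constant offset. The second rollout then generates a random-subset-sum return $G$ whose distribution has an atom at $M$ if and only if the instance is a yes-instance. The key device is a comparison between two actions whose value difference depends only on the mass of $G$ exactly at $M$. This is needed because comparing against a single quantity such as $\E[\max(M,G)]$ would require computing a $\#$P-hard tail probability.

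\textbf{Construction.} In rollout $2$, at the decision state $s$ at step $h=1$, the agent has two actions, $a$ and $b$.

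Action $a$ leads to a chain with steps $1,\dots,n$. At step $j$ the transition is uniform over two states: one gives reward $w_j/W$ and the other gives $0$. After the chain, a final deterministic step gives reward $c$.

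Action $b$ leads to the same chain. At the final step, however, the transition goes with probability $\tfrac12$ to a state giving reward $c+\delta$ and with probability $\tfrac12$ to a state giving $c-\delta$.

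I would take $c=\tfrac12$ and $\delta = 1/(4W)$, so all rewards lie in $[0,1]$. All actions at non-decision states are identical, so only the choice at $s$ matters.

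The history consists of a first rollout through a deterministic branch from the initial state whose total reward is exactly $c+L/W$. This can be realized with a few extra steps of fractional reward, padded with zero-reward steps to horizon $H=n+2$. For this branch I may need to shift $c$ so that the sum fits in $H$ steps with per-step rewards in $[0,1]$; this is routine. Every number has polynomial bit length, so the reduction is polynomial.

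\textbf{Key computation.} Let $X=\sum_j \xi_j w_j/W$ with $\xi_j\sim\Ber(\tfrac12)$ i.i.d., so $X$ takes values in $\tfrac1W\mathbb{Z}$. Writing $Z=c+X$, the two values are
\[
V_a=\E[\max(M,Z)], \qquad V_b=\E\left[\tfrac12\max(M,Z+\delta)+\tfrac12\max(M,Z-\delta)\right].
\]
I would compare the integrands outcome by outcome.
\begin{itemize}
\item If $Z\ge M+\tfrac1W$, both integrands equal $Z$, since $\delta<\tfrac1W$.
\item If $Z\le M-\tfrac1W$, both integrands equal $M$.
\item If $Z=M$, the first integrand is $M$ and the second is $M+\delta/2$.
\end{itemize}
Hence
\[
V_b-V_a=\tfrac{\delta}{2}\,\Prob(X=L/W)=\tfrac{\delta}{2}\,2^{-n}\,\#\{T\subseteq[n]:\textstyle\sum_{j\in T}w_j=L\}.
\]
This quantity is $0$ on a no-instance and at least $\tfrac{\delta}{2}2^{-n}$ on a yes-instance. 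Setting $\eta=\tfrac{\delta}{4}2^{-n}$, which has polynomial bit length, the instance satisfies $V^{\pi}\ge V^{\pi'}-\eta$ for $\pi$ playing $a$ if and only if there is no subset summing to $L$. Here $\pi'$ plays $b$, which is the only other action, and behaviour after $s$ is irrelevant. This completes the reduction.

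\textbf{Main obstacle.} The delicate step is isolating the atom of $G$ at $M$. The mean-preserving spread $\pm\delta$ is chosen for exactly this purpose: by convexity of $x\mapsto\max(M,x)$, the spread changes the value only where $Z$ sits at the kink $M$, and $\delta<1/W$ guarantees that no other lattice point is affected. The remaining work is bookkeeping to keep every reward in $[0,1]$ and the horizon polynomial. This includes realizing the prescribed $M$ in the first rollout; if that is awkward, it can be achieved by a deterministic branch chosen by the history.
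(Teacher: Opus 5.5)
Your reduction is correct and essentially the paper's: $K=2$, a deterministic first rollout pins the previous best at the subset-sum target, and a mean-preserving spread in the second rollout changes $\E[\max(M,\cdot)]$ only through the atom $\Prob(X=L)$ at the kink. The paper realizes the first rollout with a third, safe action at the start state, which is harmless for you too since replaying it gives exactly $M\le V_a$. The one real difference is that the paper also gives the non-spread action a tiny bias (plus an extra $(L+3)Z_0$ term so that $\Prob(X\ge L)\ge\tfrac12$), which makes the better action flip with a two-sided gap; you instead obtain $0$ versus at least $\tfrac{\delta}{2}2^{-n}$ and place the input tolerance $\eta$ in between, which suffices for the statement as posed.
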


The practical takeaway from this result is that we cannot hope to solve the planning problem efficiently exactly, and that the fully polynomial approximation scheme of \Cref{sec:approx-planning} is thus tight in this sense.
Moreover, the gap $\eta$ used in the proof of \Cref{th:NP-hard} is exponentially small (in the description of the problem). Such a small gap is actually needed as any hardness for a polynomial gap would contradict the existence of the approximation algorithm of \Cref{sec:approx-planning}.

Interestingly, the proof of \Cref{th:NP-hard} already holds for $K=2$ and when the set of policies $\histPol$ has constant size. This means that what is truly difficult is not computing an optimal policy but even knowing the value of a given one, showing that the exact problem is not even in \textsf{NP}. Given the existence of a fully polynomial approximation scheme, our primary interest was to show that no better algorithm could be found, rather than to provide a full characterization of the problem's computational complexity. However, we would like to point out that the Bellman equations reported in \Cref{sec:prelimnaries} can likely be turned into a \textsf{PSPACE} membership result. We leave to future work the question of whether this problem is actually complete for \textsf{PSPACE}.

%% file: imgs/mdp_small.tex
\begingroup
\thinmuskip=0mu
\medmuskip=0mu
\thickmuskip=0mu

\tikzset{every picture/.style={line width=0.75pt}}

\begin{tikzpicture}[
    node distance=3.1cm,
    >=Latex,
    state/.style={circle,draw=black!75,fill=white,minimum size=2mm,inner sep=3pt,align=center,font=\small},
    tstate/.style={draw=black!00,fill=white,minimum size=7mm,inner sep=0pt,align=center,font=\small},
    redstate/.style={state,draw=red!75!black,very thick,text=red!70!black,minimum size=7mm},
    sinkstate/.style={circle,draw=black!75,fill=white,minimum size=7mm,inner sep=0pt,align=center,font=\small},
    forced/.style={->,double,draw=black!80,line width=0.7pt,double distance=1.4pt},
    blueaction/.style={blue!75!black,->,draw=blue!75!black,very thick},
    greenaction/.style={green!55!black,->,draw=green!55!black,very thick,dashed},
    terminal/.style={->,draw=black!80,line width=0.85pt},
    lab/.style={font=\small,align=center,fill=white,inner sep=1.2pt},
    smalllab/.style={font=\scriptsize,align=center,fill=white,inner sep=1.1pt}
]

\def\downone{-1.}
\def\downtwo{-1.8}
\def\dxone{2}
\def\dxtwo{1.3}
\node[state] (s0) at (0,0) {$s$};

\node[state] (u0) at ($(s0)+(-\dxone,\downone)$) {};
\node[state] (u1) at ($(s0)+(\dxone,\downone)$) {};

\node[tstate] (x1) at ($(u0)+(-\dxtwo,\downtwo)$) {$\tfrac12$};
\node[tstate] (x2) at ($(u0)+(\dxtwo,\downtwo)$) {$0$};
\node[tstate] (x3) at ($(u1)+(-\dxtwo,\downtwo)$) {$1$};
\node[tstate] (x4) at ($(u1)+(\dxtwo,\downtwo)$) {$0$};

\draw[forced] (s0) edge node[lab] {$a_s$} (u0);
\draw[forced] (s0) edge node[lab] {$a_r$} (u1);
\draw[forced,dashed] (u0) edge node[lab] {$1-\gamma$} (x1);
\draw[forced,dashed] (u0) edge node[lab] {$\gamma$} (x2);
\draw[forced,dashed] (u1) edge node[lab] {$\tfrac14$} (x3);
\draw[forced,dashed] (u1) edge node[lab] {$\tfrac34$} (x4);

\end{tikzpicture}
\endgroup

%% file: src/learning.tex
\section{Learning}\label{sec:learning}

In this section, we study the statistical learning problem associated with the \maxatk objective with a generative model. Precisely, in \Cref{sec:pac-framework} we first introduce the PAC framework that we adopt. Then, in \Cref{sec:lb}, we derive a lower bound for the problem, and finally, in \Cref{sec:algo}, we propose a uniform sampling algorithm and analyze its theoretical guarantees.

\subsection{PAC Framework}\label{sec:pac-framework}
We consider the case where the learner has access to a generative model of the environment and seeks to learn an approximately optimal policy for the \maxatk problem in an MDP $\mathcal{M}$ with an unknown transition kernel. The interaction between the agent and the environment is as follows. For a time inhomogeneous MDP, during each step $t \in \Naturals$, the agent selects a state-action pair $(s,a) \in \mathcal{S} \times \mathcal{A}$ and a step $h \in [H]$ and observes a sample $s' \sim p_h(\cdot |s,a)$; for the time homogeneous case, instead, the algorithm simply select $(s,a)$ and observes a sample $s' \sim p(\cdot \mid s,a)$. The learning algorithm takes as input a maximum risk parameter $\delta \in (0,1)$ and an accuracy level $\epsilon$, and its goal is to output a policy $\hat \pi \in \compPol$ such that 
\begin{align}\label{eq:pac-obj}
    \Prob\left( V^\star_{1,1}(s) - V^{\hat \pi}_{1,1}(s) \le \epsilon,~\forall s \in \mathcal{S}  \right) \ge 1- \delta,
\end{align}
while minimizing the number of interactions with the environment. In words, we want to efficiently find an $\epsilon$-optimal policy with high probability. In the following, we refer to algorithms that satisfy \Cref{eq:pac-obj} as $(\epsilon,\delta)$-correct. Finally, we conclude this section with some notation. Specifically, we denote by $\tau_\delta$ the stopping time that controls the end of the data acquisition phase, \ie when the number of samples gathered from the generative model is $\tau_\delta$, the algorithm stops and returns $\hat \pi$. 

\paragraph{Remark on the Generative Model.} The choice of studying learning problems in RL with a generative model has a long-standing history in the RL theory community \citep[see, e.g.,][but also references therein]{gheshlaghi2013minimax,sidford2019nearoptimaltimesamplecomplexities,li2020breaking,agarwal2020model,wang2021sample}. 
This framework is primarily motivated by the fact that it decouples the estimation problem from the exploration challenges that arise in MDPs. Furthermore, it also allows for simpler algorithms and theoretical analyses.
In this work, we considered access to a  generative model to isolate the estimation problem and understand the level of accuracy required to achieve near-optimal performance. 
Extending our results to the standard forward interaction setting \citep[adopted, e.g., in][]{dann2015sample,domingues2021episodic,zhang2025settling} is an important direction for future work.
Nonetheless, as we shall see in this section, adopting a generative model already allows us to understand some important features of the \maxatk problem and to draw a clear distinction with traditional RL.

\subsection{Lower Bound}\label{sec:lb}

We start by deriving a lower bound on the statistical complexity of the problem.

\begin{theorem}[Statistical Lower Bound]\label{thm:lb}
    For every $K \ge 1$, there exists a time-homogeneous MDP instance such that, for any $(\epsilon,\delta)$-correct algorithm, it holds that:
    \begin{align*}
        \E[\tau_\delta] \ge \Omega\left( \frac{KH^3SA}{\epsilon^2} \log\left( \frac{1}{\delta} \right) \right).
    \end{align*}
\end{theorem}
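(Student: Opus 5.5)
The plan is to adapt the standard lower-bound construction for generative-model RL \citep{gheshlaghi2013minimax} and to locate where the additional factor $K$ comes from in the \maxatk structure. The mechanism is the intuition given in the introduction. Consider a rare event of probability $q \approx 1/K$ that yields a large return. Its contribution to the \maxatk value is $1-(1-q)^K \approx \Theta(1)$ rather than $q$, so a perturbation of $q$ by $\Delta$ changes the objective by about $K\Delta \cdot H$. At the same time, telling $q$ from $q+\Delta$ costs only about $q/\Delta^2 \approx 1/(K\Delta^2)$ samples. Taking $K\Delta H \approx \epsilon$ gives $\Delta\approx \epsilon/(KH)$, and hence per-pair complexity $\frac{1}{K}\cdot \frac{K^2H^2}{\epsilon^2} = \frac{KH^2}{\epsilon^2}$. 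The remaining factor of $H$ will come from a "staying" gadget, as in the classical $H^3$ bound: the effective horizon multiplies the sensitivity while the variance stays controlled.

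\textbf{Construction.} I would build a time-homogeneous MDP with $\Theta(S)$ decision states, each reachable from the start state (for example, through a uniform random initial transition, or by treating each decision state as an initial state, since \Cref{eq:pac-obj} requires correctness for all $s$). At each decision state there are $A$ actions. Action $a$ leads to a "good" absorbing-like state $g$ with probability $q_a$ and to a "bad" state $b$ otherwise. State $g$ collects reward $1$ for the remaining $\Theta(H)$ steps; state $b$ collects a baseline reward that equalizes means if needed. To obtain the extra $H$ factor, I would instead make the rare event a self-loop that is kept with probability $1-1/H$ per step, with the small perturbation placed on the per-step exit probability. In the base instance all $q_a=q\approx 1/K$. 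In the alternative instance $(s,a')$, one action has $q_{a'}=q+\Delta$. All other features are shared, so instances differ at a single state-action pair.

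\textbf{Steps.} (i) Compute the \maxatk value gap. A history-dependent optimal policy can only help, but in every rollout the agent must pick an action at a decision state before seeing that rollout's outcome. I would show that in the alternative instance, any policy that plays $a'$ with probability less than $1/2$ in the relevant rollouts loses at least $\epsilon$. This step needs the derivative of $1-\prod_i(1-q_{a_i})$ with respect to one $q$, which is of order $(1-q)^{K-1}\cdot(\text{reward gap}) \approx e^{-1}\,\Theta(H)$ per rollout where $a'$ is used, summed over roughly $K$ rollouts. (ii) Apply the change-of-measure / transportation lemma: $\sum_{(s,a)} \E[N_{s,a}]\,\mathrm{KL}(p_{s,a}\|p'_{s,a}) \ge \mathrm{kl}(\delta,1-\delta)\gtrsim \log(1/\delta)$. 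Since $\mathrm{KL}$ is of order $\Delta^2/q$ (times the $H$-dependent factor), this forces $\E[N_{s,a'}] \gtrsim \frac{q}{\Delta^2}\log(1/\delta)$. (iii) Sum over the $SA$ alternatives, each targeting a distinct pair, to get $\E[\tau_\delta]\gtrsim SA\cdot \frac{KH^3}{\epsilon^2}\log(1/\delta)$.

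\textbf{Main obstacle.} The main obstacle is step (i): lower-bounding the suboptimality gap under \emph{adaptive} policies. Adaptivity could let the agent hedge, for instance by switching to safe actions once a good rollout has occurred. I would neutralize this by making all alternatives have identical mean-reward structure and by choosing $q\approx 1/K$, so that with constant probability no success occurs in the first $K/2$ rollouts, and in that event each rollout's choice matters linearly. The per-state gap then aggregates to $\Omega(\epsilon)$ by taking $\Delta = c\,\epsilon/(KH)$ with the $H$ scaling tuned against the gadget's variance. I expect getting $H^3$ rather than $H^2$ together with the factor $K$ to be the most delicate calibration, so I would handle it first.
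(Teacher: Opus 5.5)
There is a genuine gap, and it sits in the step you defer as ``the most delicate calibration'': nothing in the proposal gets $H^3$ and $K$ at the same time. The computation you actually carry out uses one decision per rollout, success probability $q\approx 1/K$, payoff $\Theta(H)$ and $\Delta\approx\epsilon/(KH)$. It gives $q/\Delta^2\approx KH^2/\epsilon^2$ per pair. The ``staying gadget'' cannot supply the missing $H$ as you describe it.

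Suppose the rare outcome leads into a rewarding self-loop kept with probability $1-1/H$, and the perturbation $\Delta'$ sits on that loop's exit probability. Then the perturbed transition is no longer rare. One generative sample from the loop state carries divergence of order $H\Delta'^2$, independently of $K$. Meanwhile the \maxatk value moves by only about $Kq\cdot\Theta(H^2)\Delta'\approx H^2\Delta'$, because the $K$-fold amplification is cancelled by the entry probability $q\approx 1/K$. You then need $\Delta'\approx\epsilon/H^2$ and get $H^3/\epsilon^2$ samples per pair: the classical rate, with the factor $K$ gone. If you keep the perturbation on the entry probability $q_{a'}$ instead, the loop merely rescales the payoff by $\Theta(H)$ and you are back at $KH^2$.

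The missing idea, which the paper uses, is to put the self-loop on the \emph{decision state itself}. From $s$, every action reaches an absorbing rewarding goal with per-step probability $p\approx 1/(KH)$ and otherwise stays at $s$; the alternative raises this to $p+\Delta$ for a single action. Each rollout then presents the same choice at each of its first $H/4$ steps, giving $\Theta(KH)$ decision points. Since $KH(p+\Delta)$ is a small constant, with constant probability no earlier success has occurred. On that event the compressed history of any $\hat\pi\in\compPol$ is pinned to $(s,0,0)$, so the probabilities $\alpha_{h,i}$ of playing the good action depend only on $(h,i)$, and each miss costs $\Delta\cdot\Theta(H)$. Taking $\Delta\approx\epsilon/(KH^2)$, $\epsilon$-optimality forces $\sum_{i\le K,\,h\le H/4}\alpha_{h,i}$ above half of its maximum $KH/4$. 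The per-sample divergence $\Delta^2/p\approx\epsilon^2/(KH^3)$ then gives $KH^3\epsilon^{-2}\log(1/\delta)$ per pair. The extra $H$ comes from the multiplicity of decisions within a rollout, not from the variance of a reward loop.

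A smaller problem is in steps (ii)--(iii). To add the $SA$ per-pair bounds into a bound on a single $\E[\tau_\delta]$, you must use the base instance $\mathcal{M}_0$ as the common reference measure. Under $\mathcal{M}_0$ every action is optimal, so the event $\mathcal{E}_{s,a}$ that $\hat\pi$ concentrates its mass on $a$ need not have probability at most $\delta$. Hence $\mathrm{kl}(\delta,1-\delta)$ is not available. Use instead $\mathrm{kl}(x,y)\ge(1-x)\log\frac{1}{1-y}-\log 2$. Combine it with the following fact: the masses over actions at a fixed state sum to the same total, and $\mathcal{E}_{s,a}$ can be taken to require more than half of it, so $\sum_a\Prob_{\mathcal{M}_0}(\mathcal{E}_{s,a})\le 1$. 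This yields $\E_{\mathcal{M}_0}[\tau_\delta]\gtrsim\frac{p}{\Delta^2}\bigl(S(A-1)\log\frac{1}{\delta}-SA\log 2\bigr)$, which is of the required order for $A\ge 2$ and small $\delta$.
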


\paragraph{Results.} To better understand \Cref{thm:lb}, it is helpful to first recall the standard result for the classical RL setting with a generative model. In this latter case, for finite-horizon time-homogeneous MDPs, the lower bound is well-known to be $\Omega\left( {H^3SA \epsilon^{-2}} \log\left( \frac{1}{\delta} \right)\right)$ \citep{gheshlaghi2013minimax,sidford2019nearoptimaltimesamplecomplexities}. This rate is also tight in the minimax sense, meaning that there exists an $(\epsilon, \delta)$-correct algorithm that stops with a number of samples that is $\widetilde{\mathcal{O}}\left({H^3SA \epsilon^{-2}} \log\left( \frac{1}{\delta} \right) \right)$ \citep{sidford2019nearoptimaltimesamplecomplexities,li2020breaking}. Therefore, our \Cref{thm:lb} establishes a clear-cut, in terms of statistical efficiency, between standard RL and the \maxatk criterion.\footnote{As a minor note, we observe that \Cref{thm:lb} recovers the standard RL result for the $K=1$ case.} Specifically, it shows that the number of samples required grows linearly in the number of rollouts $K$. Furthermore, $K$ actually multiplies the entire rates of standard RL, meaning that the \maxatk criterion is, from a statistical viewpoint, at least $K$ times more demanding than classical RL. In words, the takeaway from this result is that although the budget $K$ allows the agent to potentially achieve higher returns by using multiple rollouts, identifying a near-optimal behavior for this objective comes at a non-negligible cost.

\paragraph{The Linear Dependency in $K$} 
\begin{wrapfigure}{r}{0.4\textwidth}
  \centering
  \scalebox{0.8}{\input{imgs/mdp_hard}}
  \caption{
  Visualization of the MDP for the linear lower bound in $K$. There is a state $s$ and two actions, $a_1$ (blue) and $a_2$ (red), which both lead to rewards of $1$ and $0$ with probabilities $p$ and $1-p$, respectively.
  }
  \label{fig:mdphard}
\end{wrapfigure}
We now provide a sketch behind the argument that allows one to show a linear lower bound in $K$. The example we are going to discuss is illustrative in that it serves as the building block behind the result of \Cref{thm:lb}. Indeed, the proof of \Cref{thm:lb} essentially follows, from a high-level perspective, the same arguments we present now, with the main difference that the presence of the planning horizon $H$ introduces additional technicalities. %
Consider the simple instance $\mathcal{M}_0$  depicted in \Cref{fig:mdphard}. There is a state $s$ with two equal actions $a_1$ and $a_2$. Both actions result in a reward of $1$ with probability $p$ and $0$ with probability $1-p$. Immediately afterward, the rollout terminates and a new one starts from $s$, so the agent repeat this experiment for $K$ times. Next,  consider alternative instances, $\mathcal{M}_1$ and $\mathcal{M}_2$, obtained by making one of the two actions slightly better, \ie by increasing the success probability from $p$ to $p+\Delta$. Here, both $p$ and $\Delta$ should be seen as small numbers, so that the probability of reaching the goal is small in any instance. 
Then, since the learning algorithm needs to return an $\epsilon$-optimal policy, it should be able to distinguish among these instances. Indeed, suppose that $\mathcal{M}_1$ is true, and consider any policy $\pi$. With some algebra and probabilistic arguments, one can show that:
\begin{align}\label{eq:lb-key-eq-main}
V^\star_{1,1}(s) - V^\pi_{1,1}(s) \gtrsim \Delta K  \sum_{i \in [K]} (1 - \alpha_{i}),
\end{align}
where $\alpha_{i}$ is the probability that $\pi$ selects $a_1$ given that the goal was not reached in the $i$-th rollout.\footnote{The intuitive reason behind \Cref{eq:lb-key-eq-main} is that what matters in terms of difference of value functions is just the probability with which the policies reach the goal state (\ie of having one success in $K$ trials). Now, every time that we miss playing $a_1$, we lose a success probability of $\Delta$. Since $p$ and $\Delta$'s are small, the probability that this missed success impacts the value function is bounded away from $0$ and, therefore, these losses accumulate over all rollouts in which the policy does not select $a_1$.} This, in turn, implies that any $(\epsilon, \delta)$-correct algorithm should put enough mass on the optimal action $a_1$ throughout the rollouts, \ie that $m_1 \coloneqq \sum_{i} \alpha_i \gtrsim K - \frac{\Delta}{\epsilon}$. In particular, choosing $\Delta = c \frac{\epsilon}{K}$ for some appropriate constant $c$, one can show that $m_1 \ge \tfrac34 K$, and, thus, for any $(\epsilon, \delta)$-correct algorithm it should hold that $\Prob_{\mathcal{M}_1}(m_1 \ge \tfrac34 K) \ge 1-\delta$. Following an analogous argument, we have that, for $\mathcal{M}_2$, $\Prob_{\mathcal{M}_2}(m_2 \ge \tfrac34 K) \ge 1-\delta$, where $m_2$ is the total mass that the policy puts on $a_2$ across the different rollouts. Clearly, however, only one of these two events holds, as $m_1 + m_2 = K$. Thus, using change-of-measure arguments, the algorithm must collect enough samples to statistically distinguish whether the Bernoulli parameter of $a_1$ or $a_2$ is $p+\Delta$. The information gained from one sample is of order $\textup{KL}(p, p+\Delta) \lesssim \Delta^2/p$, and therefore any correct algorithm requires at least order $p \Delta^{-2} \log\left(1 /\delta \right)$. Choosing $p \approx \tfrac1K$ and since $\Delta \approx \tfrac \epsilon K$, we obtain a lower bound whose order is $K \epsilon^{-2} \log(1/\delta)$.

\paragraph{Technical Remarks.} Before moving to the algorithmic section, we make a remark behind the proof of \Cref{thm:lb}. The main challenge, compared to the lower bound for PAC-RL, arises directly from the \maxatk objective and is already present in the illustrative example that we presented above. The point is that, in RL, taking the wrong action immediately results in a loss of return. In \maxatk, instead, not playing the optimal action has an impact only if the maximum over $K$ rollouts would have changed. Therefore, we need to construct instances in a way that keeps ``successes" sufficiently rare so that not playing the optimal action is decisive across several rollouts. This, in turn, also affects the statistical argument. While in standard RL lower bounds, one often argues that the learning agent should identify the unique optimal action in a certain state, here, the agent can distribute its action choices across several rollouts. As a consequence, we needed to develop an argument that accounts for the total mass assigned to the optimal action across several rollouts. The proof must therefore show that $\epsilon$-optimality forces large inter-rollout mass on the good action. 

\subsection{Algorithm}\label{sec:algo}
We now show that there exists an (efficient) algorithm that attains the same rate of \Cref{thm:lb}. The algorithm combines a standard uniform sampling method \citep[e.g.,][]{gheshlaghi2013minimax} with the approximation scheme we presented in \Cref{sec:planning}. Precisely, the approach works as follows. First, we sample every state-action pair $(s,a) \in \mathcal{S} \times \mathcal{A}$ and every step $h \in [H]$ for $m \in \Naturals$ times (for time-homogeneous systems, instead, we only query every state-action pair $(s,a)$ times). The collected samples are used to construct the empirical transition kernel $\{\hat p _h \}_{h \in [H]}$. Then, the interval $[0,1]$ of possible rewards is discretized uniformly using $\mathcal{O}(\kappa^{-1})$ points and a discretized reward function $\{ \tilde r_{h, \kappa} \}_{h \in [H]}$ is obtained by projecting the true rewards onto the points within the discretization. Once this is done, the algorithm simply stops and return an optimal policy for $\widehat{\mathcal{M}}_\kappa = (\mathcal{S}, \mathcal{A}, s_0, H, \{\hat p _h \}_{h \in [H]}, \{ \tilde r_{h, \kappa} \}_{h \in [H]} )$ via backward induction. The following result shows that choosing $m$ and $\kappa$ appropriately ensures that \Cref{eq:pac-obj} holds.

\begin{theorem}[Upper Bound]\label{thm:learning}
    Choose $m = \widetilde{\mathcal{O}}\left( SK^2H^2 \log\left(\frac{1}{\delta} \right) + {KH^3\log(\frac{1}{\delta})}{\epsilon^{-2}} \right)$ and $\kappa = \mathcal{O}(\epsilon H^{-1})$. Then, for any time homogeneous MDP $\mathcal{M}$, the proposed algorithm is $(\epsilon, \delta)$-correct and its sample complexity is given by:
    \begin{align}\label{eq:ub-rates}
        \widetilde{\mathcal{O}}\left( A S^2 K^2 H^2 \log\left( \frac{1}{\delta} \right)+ \frac{KSAH^3}{\epsilon^2} \log\left( \frac{1}{\delta} \right) \right).
    \end{align}
    If $\mathcal{M}$ is time inhomogeneous, instead, the sample complexity is $\widetilde{\mathcal{O}}\left( A S^2 K^2 H^3 \log\left( \frac{1}{\delta} \right)+ \frac{KSAH^4}{\epsilon^2} \log\left( \frac{1}{\delta} \right) \right)$.
\end{theorem}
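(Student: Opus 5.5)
The argument splits the error of the returned policy $\hat\pi = \hat\mu^\star_\kappa$, the optimal policy for $\widehat{\mathcal{M}}_\kappa$, into a discretization part and an estimation part. Write $V_{\mathcal{N}}^\pi$ for the \maxatk value of $\pi$ in an MDP $\mathcal{N}$. Then
\begin{align*}
V^\star_{\mathcal{M}} - V^{\hat\pi}_{\mathcal{M}} \le \bigl(V^\star_{\mathcal{M}} - V^{\mu^\star_\kappa}_{\mathcal{M}_\kappa}\bigr) + \bigl(V^{\mu^\star_\kappa}_{\mathcal{M}_\kappa} - V^{\mu^\star_\kappa}_{\widehat{\mathcal{M}}_\kappa}\bigr) + \bigl(V^{\hat\pi}_{\widehat{\mathcal{M}}_\kappa} - V^{\hat\pi}_{\mathcal{M}_\kappa}\bigr) + \bigl(V^{\hat\pi}_{\mathcal{M}_\kappa} - V^{\hat\pi}_{\mathcal{M}}\bigr),
\end{align*}
using $V^{\mu^\star_\kappa}_{\widehat{\mathcal{M}}_\kappa}\le V^{\hat\pi}_{\widehat{\mathcal{M}}_\kappa}$. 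The first and last terms are at most $O(\kappa H)$ by the argument behind \Cref{theo:approximation}, namely \Cref{lemma:same-pol-diff-rew}, which compares a fixed policy under true and discretized rewards. Taking $\kappa=\epsilon/(8H)$ makes them at most $\epsilon/4$.

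The remaining task is to bound $|V^\mu_{\mathcal{M}_\kappa}-V^\mu_{\widehat{\mathcal{M}}_\kappa}|\le\epsilon/4$ for the two specific policies. Both live on the finite augmented state space $(s,M,G)$ with $O(H/\kappa)$ values of $M$ and of $G$. I would view the $K$ rollouts as one long MDP of horizon $KH$ on augmented states. Its terminal reward is $\max(M,G)\in[0,H]$, and its kernel is $p$ acting on the $s$ coordinate while $(M,G)$ moves deterministically.

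The plan is a simulation-lemma decomposition over the $KH$ steps:
\begin{align*}
V^\mu_{\mathcal{M}_\kappa}-V^\mu_{\widehat{\mathcal{M}}_\kappa}=\E^{\mu}_{\mathcal{M}_\kappa}\Bigl[\sum_{i,h}(p-\hat p)(\cdot\mid s_{h,i},a_{h,i})^\top \widehat V^\mu_{h+1,i}\Bigr].
\end{align*}
Each summand is controlled by Bernstein inequalities: $\sqrt{\Var_{p}(\widehat V)\,\iota/m}+H\iota/m$. The policy $\hat\pi$ depends on the data, so I would handle it either with an absorbing-MDP / leave-one-out argument as in \citet{agarwal2020model}, or by replacing the value with $V^\mu$ of $\mathcal{M}_\kappa$ plus a correction term that costs $S$. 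This correction produces the $S K^2H^2$ burn-in term of $m$.

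After Cauchy--Schwarz, the sum of square roots becomes $\sqrt{KH\cdot\sum\Var/m}$. The essential ingredient is a law of total variance. The total conditional variance along the $KH$ steps equals $\Var(\max_i G_{H,i})\le H^2$, not the naive $KH\cdot H^2$. This yields an error $\sqrt{KH\cdot H^2\iota/m}$, and setting it equal to $\epsilon$ gives $m\approx KH^3\iota/\epsilon^2$, matching the claimed $m$.

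I expect this variance step to be the main obstacle. The values $\widehat V$ are on scale $H$ while the horizon is $KH$, and obtaining only a linear, not quadratic, factor of $K$ requires the total-variance identity for the $\max$ objective together with error terms in which $\widehat V$ replaces $V$. Those replacement errors recurse, and they are absorbed once $m\gtrsim SK^2H^2\iota$, which gives the lower-order term.

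A union bound over $(s,a)$, the support of the augmented values, and, for the inhomogeneous case, the steps $h$ only affects logarithms. Multiplying $m$ by $SA$ gives \Cref{eq:ub-rates}. In the inhomogeneous case, each $p_h$ must be sampled separately. This costs an extra factor $H$ in samples, because the variance bound can no longer share samples across steps, and it yields the stated $H^4$ and $H^3$ rates.
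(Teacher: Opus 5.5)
\paragraph{Where the proposal falls short.} Your skeleton is the paper's. You use the same four-term decomposition, the same horizon-$KH$ MDP on $(s,M,G)$, per-step Bernstein bounds, and an $\ell_1$ burn-in $\|\hat p-p\|_1\lesssim 1/(KH)$, hence $m\gtrsim SK^2H^2$, to absorb the cost of swapping $\widehat V$ for a data-independent value. For $\mu^\star_\kappa$ this works once Bernstein is applied to $V^{\mu^\star_\kappa,\kappa}$, the value in $\mathcal M_\kappa$, rather than to $\widehat V^{\mu^\star_\kappa}$; that is exactly \Cref{lemma:simulation}.

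The gap is the term for the data-dependent policy $\hat\pi$, and neither of your two fixes works as stated.
\begin{itemize}
\item Replacing $\widehat V^{\hat\pi}$ by $V^{\hat\pi,\kappa}$ decouples nothing, because $\hat\pi$ is itself a function of $\hat p$. Bernstein on $(p-\hat p)^\top V^{\hat\pi,\kappa}$ would need a union bound over exponentially many policies. The fallback $|(p-\hat p)^\top V|\le H\|p-\hat p\|_1$ accumulates to $KH^2\sqrt{S/m}$ over the $KH$ steps, which forces $m\gtrsim SK^2H^4/\epsilon^2$.
\item The leave-one-out/absorbing-MDP device of \citet{agarwal2020model} does not transfer cheaply. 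The samples at $(s,a)$ enter the augmented value through a whole table indexed by $(M,G,h,i)$, not a single scalar. The net you would need to cover is therefore high dimensional, and its log-size is no longer a logarithmic factor.
\item Your total-variance identity $\sum\E[\Var]=\Var(\max_i G_{H,i})$ holds only for a policy's own value along its own trajectory. That is precisely the data-dependent pair $(\hat\pi,V^{\hat\pi,\kappa})$ that you cannot feed to Bernstein.
\end{itemize}

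\paragraph{The paper's missing idea.} The paper tests the transition error of \emph{both} policies against the single fixed function $V^{\mu^\star_\kappa,\kappa}$ (\Cref{lemma:simulation-lemma-part-2}).
\begin{itemize}
\item The resulting mismatch $KH\,\max\|\widehat T^{\hat\pi}-T^{\hat\pi}\|_1\,\|V^{\hat\pi,\kappa}-V^{\mu^\star_\kappa,\kappa}\|_\infty$ is self-bounding. The factor $\|V^{\mu^\star_\kappa,\kappa}-V^{\hat\pi,\kappa}\|_\infty$ is the very quantity being controlled. Once $\|\hat p-p\|_1\le 1/(4KH)$, the mismatch contributes at most half of it and is absorbed (\Cref{lemma:refined-error-decomposition}).
\item Bernstein is then needed for only one data-independent function, uniformly over $(x,a,h,i)$, at logarithmic cost.
\item The variance is handled by \Cref{lemma:variance-lemma}, not the law of total variance. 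Along the trajectory of an \emph{arbitrary} policy $\mu$, the cumulative variance of $V=V^{\mu^\star_\kappa,\kappa}$ is at most $3H^2$. Bellman optimality gives $V_{\bar h,\bar i}\ge T^\mu V_{\bar h^+,\bar i^+}$ pointwise. Hence $V_{\bar h,\bar i}^2-(T^\mu V_{\bar h^+,\bar i^+})^2\le 2H\,(V_{\bar h,\bar i}-T^\mu V_{\bar h^+,\bar i^+})$, which telescopes in expectation to at most $2H^2$. The remaining second-moment part telescopes to at most $H^2$.
\end{itemize}
With these three pieces your outline yields the stated choice of $m$. Multiplying by $SA$ (or $SAH$ in the inhomogeneous case) then gives the stated rates.
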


\paragraph{Results.}
We start by noticing that, in the small-accuracy regime, the dominant term in \Cref{eq:ub-rates} is $\widetilde{\mathcal{O}}(KSAH^3/\epsilon^2)$. Therefore, for sufficiently small $\epsilon$, our algorithm matches the lower bound in \Cref{thm:lb}, thereby establishing tight rates for the \maxatk problem. In particular, we have proven that the linear dependency on $K$ is the correct scaling for our setting. As a takeaway, we can conclude that when using a small budget $K$, the number of samples required to achieve $\epsilon$-optimal behavior is still comparable to that needed for standard RL.
The algorithm is also computationally efficient since, after a uniform sampling phase, it solves the empirical discretized problem by dynamic programming on $\widehat{\mathcal{M}}_\kappa$. Due to \Cref{theo:approximation}, this procedure is efficient. Finally, as is usual in PAC learning for MDPs \citep[see, e.g.,][]{li2020breaking}, an additional $ H$-factor is incurred in the sample complexity when the MDP is time-inhomogeneous.

\paragraph{Technical Remarks.} To obtain \Cref{thm:learning}, we first account for the error introduced by planning with discretized rewards. Using arguments similar to \Cref{theo:approximation} (\ie small changes in rewards lead to small changes in value functions), this only introduces an additive error term of $\mathcal{O}(H\kappa)$ which we balance by choosing $\kappa$ as  $\mathcal{O}(\epsilon H^{-1})$. The remaining analysis is left to control the planning error that arises from planning with an empirical model. In other words, we need to study how errors propagate within the Bellman equations introduced in \Cref{sec:approx-planning}. Here, we note that directly applying existing finite-horizon guarantees on the augmented MDP would yield suboptimal dependence on $K$ and $H$, since the effective horizon is $KH$. We therefore establish a refined analysis for the simulation error for planning with the empirical model. The techniques behind this are inspired by existing analyses that yield tight rates for classical RL \citep[e.g.,][]{gheshlaghi2013minimax}, but they are tailored to exploit the structure of the \maxatk objective to obtain the desired scale in $K$ and $H$.

%% file: imgs/mdp_hard.tex
\begingroup
\thinmuskip=0mu
\medmuskip=0mu
\thickmuskip=0mu

\tikzset{every picture/.style={line width=0.75pt}}

\begin{tikzpicture}[
    >=Latex,
    state/.style={
        circle,
        draw=black!75,
        fill=white,
        minimum size=7mm,
        inner sep=0pt,
        font=\small
    },
    aone/.style={
        ->,
        draw=blue!75!black,
        line width=0.7pt,
        dashed
    },
    atwo/.style={
        ->,
        draw=red!75!black,
        line width=0.7pt,
        dashed
    },
    lab/.style={
        font=\small,
        fill=white,
        inner sep=1pt
    }
]

\node[state] (s) at (0,0) {$s$};
\node[state] (one) at (-1.8,-2.3) {$1$};
\node[state] (zero) at (1.8,-2.3) {$0$};

\node[blue!75!black] at (-2.25,-0.5) {$a_1$};
\node[red!75!black] at (2.25,-0.5) {$a_2$};

\draw[aone] (s) to[bend right=25] node[lab] {$p$} (one);
\draw[aone] (s) to[bend left=25] node[lab] {$1-p$} (zero);

\draw[atwo] (s) to[bend left=12] node[lab] {$p$} (one);
\draw[atwo] (s) to[bend right=12] node[lab] {$1-p$} (zero);

\end{tikzpicture}
\endgroup

%% file: src/conclusion.tex
\section*{Acknowledgments}
This work was partially funded by the European Union. Views and opinions expressed are however those of the author(s) only and do not necessarily reflect those of the European Union or the European Research Council Executive Agency. Neither the European Union nor the granting authority can be held responsible for them.
This work is supported by an ERC grant (Project 101165466 — PLA-STEER).

%% file: src/appendix_planning.tex
\section{Proofs of \Cref{sec:planning}: Planning}

\subsection{Proof of \Cref{prop:comp_LB}}

\begin{proof}[Proof of \Cref{prop:comp_LB}]
    Consider any $K\in\mathbb{N}$, and an MDP with a single state and two actions $a_s$ and $a_r$. Moreover, let $\mu>1/K$ be a constant defined in the following.
    $a_s$ leads to a reward of $1/K$ with probability $1/K$ (and $0$ otherwise), while $a_r$ leads to a reward of $\mu$ with probability $1/K^2$ and $0$ otherwise. Consider the following history dependent policy that plays $a_s$ until it observes the first success, then plays $a_r$. Let $W$ be the number of turns spent by playing $a_s$ by the policy. Conditioning on $W=w\le K$, the expected reward of this policy is $\tfrac{1}{K}(1-\tfrac1{K^2})^{K-w}+\mu(1-(1-\tfrac1{K^2})^{K-w})$ and thus
    \begin{align*}
    \max_{\pi \in \histPol} V_{1,1}^\pi(s) & \ge \sum_{w=1}^K\tfrac1K(1-\tfrac1K)^{w-1}\Big[\tfrac{1}{K}(1-\tfrac1{K^2})^{K-w}+\mu\big(1-(1-\tfrac1{K^2})^{K-w}\big)\Big]\\
    & = \tfrac1{K^2}\sum_{w=1}^K(1-\tfrac1K)^{w-1}(1-\tfrac1{K^2})^{K-w}+\tfrac{\mu}{K}\sum_{w=1}^K(1-\tfrac1K)^{w-1}-\tfrac{\mu}{K}\sum_{w=1}^K(1-\tfrac1K)^{w-1}(1-\tfrac1{K^2})^{K-w}\\
    &=\tfrac{1}{K}(\tfrac1K-\mu)\left[\sum_{w=1}^K(1-\tfrac1K)^{w-1}(1-\tfrac1{K^2})^{K-w}\right]+\tfrac{\mu}{K}\sum_{w=1}^K(1-\tfrac1K)^{w-1}\\
    &=\tfrac{1}{K}(\tfrac1K-\mu)\frac{\beta^K-\alpha^K}{\beta-\alpha}+\frac\mu K\frac{1-\alpha^K}{1-\alpha}\tag{$\alpha=1-\tfrac1K$, $\beta=1-\tfrac1{K^2}$}\\
    &=(\tfrac1K-\mu)\frac K{K-1}(\beta^K-\alpha^K)+\mu (1-\alpha^K)\\
    &=\frac1{K-1}(\beta^K-\alpha^K)+\mu\big[1-\tfrac K{K-1}\beta^K+\tfrac{1}{K-1}\alpha^K\big].
    \end{align*}

    Consider now the first term. Clearly $\beta^K\ge 1-1/K$ and $\alpha^K\le e^{-1}$ and thus the first term can be lower bounded by $(1-1/e)/K-1/K^2$. For the second term, instead, we can upper bound $\beta^K\le 1-\tfrac1K+\binom{K}{2}\tfrac1{K^4}$ by the binomial expansion and $\tfrac{1}{K-1}\alpha^K\ge \tfrac{1}{eK}$ and thus, simplifying, one can lower bound the second term with $\tfrac1{eK}-\tfrac1{2K^2}$. Thus, we obtained that
    \[
    \max_{\pi \in \histPol} V_{1,1}^\pi(s) \ge \frac{1-e^{-1}(1-\mu)}{K}-\tfrac{1}{K^2}(1+\tfrac\mu2).
    \]

    Conversely, consider any deterministic Markovian policy.\footnote{Recall that the objective is multi-linear in the policy and hence deterministic policies attains the max in $\markPol$. Therefore, we can focus on deterministic policies and ignore the stochastic ones.} 
    It is parametrized by the fixed number $W$ of times it will play the safe action $a_s$ (and $K-W$ times it will thus play the risky action $a_r$). The value is then
    \begin{align*}
    \max_{\pi \in \markPol} V_{1,1}^\pi(s) &\le \max_{W\in[K]} \Big[\mu(1-\beta^{K-W})+\tfrac1K\beta^{K-W}(1-\alpha^{W})\Big]\\
    &\le \max_{x\in[0,1]}\Big[\tfrac\mu k (1-x)+\tfrac1k (1-\alpha^{xK})\Big]\tag{$x:=W/k$ and $1-\beta^{xK}\le (1-x)/K$}\\
    &\le \max_{x\in[0,1]}\Big[\tfrac\mu K (1-x)+\tfrac1k (1-e^{-x}+K^{-1})\Big],
    \end{align*}
    which is maximized for $x=\ln(1/\mu)$ and gives
    \[
    \max_{\pi \in \markPol} V_{1,1}^\pi(s) \le \tfrac\mu K(1-\ln(\mu^{-1}))+\tfrac1K(1-\mu+K^{-1})=\tfrac1K(1+\tfrac1{K}+\mu\ln(\mu)).%
    \]
    Finally, for any $K\ge 2$ and $\mu=1/2$:
    \[
    \frac{\max_{\pi \in \histPol} V_{1,1}^\pi(s)}{\max_{\pi \in \markPol} V_{1,1}^\pi(s)}\ge \frac{1-e^{-1}(1-\mu)}{1+\mu\ln(\mu)}-O(K^{-1})\ge 1.24-O(K^{-1}),
    \]
    concluding the proof.
\end{proof}

\subsection{Proof of \Cref{prop:sufficient-condition-for-optimality}}

Here, we prove that restricting the attention to policies in $\compPol$ is sufficient for optimality.

\begin{proof}[Proof of \Cref{prop:sufficient-condition-for-optimality}]
    We prove the result by construction. Fix any starting $s \in \mathcal{S}$.
    Specifically, we show that there exists a deterministic policy $\mu^\star \in \compPol$ such that for any history $\mathcal{H}_{h,i}$ the following holds: 
    \[
        V_{h,i}^\star(\mathcal{H}_{h,i}) - V^{\mu^\star}_{h,i}(\mathcal{H}_{h,i}) = 0.
    \]
    To this end, we first show that value functions of policies in $\compPol$ depend on the full history only through the compressed history. Precisely, fix $\mu \in \compPol$, and consider two histories $\mathcal{H}_{h,i}^{(1)}$ and $\mathcal{H}_{h,i}^{(2)}$ such that $\Phi(\mathcal{H}_{h,i}^{(1)}) = \Phi(\mathcal{H}_{h,i}^{(2)})$. Then, we prove by induction that $V^\mu_{h,i}(\mathcal{H}_{h,i}^{(1)}) = V^\mu_{h,i}(\mathcal{H}_{h,i}^{(2)})$. As a base case, consider rollout $K$ and step $H+1$. Since the two histories have the same compressed history, they have the same $M_K$ and the same $G_{H+1,K}$. Hence,
    \[
    V^\mu_{H+1,K}(\mathcal{H}_{H+1,K}^{(1)}) = \max\{M_K, G_{H+1,K} \}
    =V^\mu_{H+1,K}(\mathcal{H}_{H+1,K}^{(2)}).
    \]
    Then, suppose that the claim holds for histories at rollout $i$ and step $h+1$, \ie that for all $\mathcal{H}^{(1)}_{h+1,i}, \mathcal{H}^{(2)}_{h+1,i}$ such that $\Phi(\mathcal{H}^{(1)}_{h+1,i}) = \Phi(\mathcal{H}^{(2)}_{h+1,i})$ we have that $V^\mu_{h+1,i}(\mathcal{H}_{h+1,i}^{(1)}) = V^\mu_{h+1,i}(\mathcal{H}_{h+1,i}^{(2)})$. We prove that the result holds for histories at rollout $i$ and step $h \in [H]$. First, observe that, since $\Phi(\mathcal{H}_{h,i}^{(1)}) = \Phi(\mathcal{H}_{h,i}^{(2)})$, the two histories have the same current state $s_{h,i}$. Furthermore,
    \begin{align*}
        V^\mu_{h,i}(\mathcal{H}_{h,i}^{(1)}) & = \sum_{a \in \mathcal{A}} \mu_{h,i}(a \mid \mathcal{H}_{h,i}^{(1)}) \sum_{s' \in \mathcal{S}} p_h(s' \mid s_{h,i},a) V^\mu_{h+1,i}(\mathcal{H}^{(1)}_{h,i} \circ(a, s')) \\
        &  = \sum_{a \in \mathcal{A}} \mu_{h,i}(a \mid \mathcal{H}_{h,i}^{(2)}) \sum_{s' \in \mathcal{S}} p_h(s' \mid s_{h,i},a) V_{h+1,i}^\mu(\mathcal{H}^{(1)}_{h,i} \circ(a, s')) \tag{$\mu \in \compPol$} \\
        & = \sum_{a \in \mathcal{A}} \mu_{h,i}(a \mid \mathcal{H}_{h,i}^{(2)}) \sum_{s' \in \mathcal{S}} p_h(s' \mid s_{h,i},a) V_{h+1,i}^\mu(\mathcal{H}^{(2)}_{h,i} \circ(a, s')) \\
        & = V^\mu_{h,i}(\mathcal{H}^{(2)}_{h,i}),
    \end{align*}
    where, in the third equality, we have used the inductive hypothesis. Indeed, observe that $\Phi(\mathcal{H}^{(1)}_{h,i}) = \Phi(\mathcal{H}^{(2)}_{h,i})$ implies that
    $\Phi(\mathcal{H}^{(1)}_{h,i} \circ(a, s')) = \Phi(\mathcal{H}^{(2)}_{h,i} \circ(a, s'))$ for any state action pair $(s',a)$. It remains to handle the boundary between two consecutive rollouts, but those are direct since $V^\mu_{H+1,i}(\mathcal{H}_{H+1,i}) = V^\mu_{H+1,i}(\mathcal{H}_{1,i+1})$ by definition. This conclude the inductive argument.

    Now, for any history $\mathcal{H}_{h,i}$, let us define the optimal value over policies in $\compPol$ as follows:
    \[
    \overline V^\star_{h,i}(\mathcal{H}_{h,i})(\mathcal{H}_{h,i}) \coloneqq \sup_{\mu \in \compPol} V^\mu_{h,i}(\mathcal{H}_{h,i}).
    \]
    Since $V^\mu_{h,i}(\mathcal{H}_{h,i})$ is constant over all histories with the same compressed representation, we also have that $\overline V^\star_{h,i}(\mathcal{H}_{h,i})$ is constant among histories with the same compressed representation. Furthermore, since the optimization is only over policies within $\compPol$, we also have that $\overline V^\star_{h,i}$ satisfies Bellman-like optimality conditions.\footnote{To this end, it is important to recall again that $\Phi(\mathcal{H}^{(1)}_{h,i}) = \Phi(\mathcal{H}^{(2)}_{h,i})$ implies that
    $\Phi(\mathcal{H}^{(1)}_{h,i} \circ(a, s')) = \Phi(\mathcal{H}^{(2)}_{h,i} \circ(a, s'))$ for any state action pair $(s',a)$.} Specifically, we have that:
    \[
    \overline V^\star_{h,i}(\mathcal{H}_{h,i})(\mathcal{H}_{h,i}) = \max_{a \in \mathcal{A}} \sum_{s' \in \mathcal{S}} p_h(s' \mid s_{h,i}, a) \overline{V}^\star_{h+1,i}(\mathcal{H}_{h,i} \circ (a, s')).
    \]
    with boundary conditions $\overline{V}^\star_{H+1,i}(\mathcal{H}_{H+1,i}) = \overline{V}^\star_{1,i+1}(\mathcal{H}_{1,i+1})$ for all rollouts $i < K$ and, for the last rollout, $\overline{V}^\star_{H+1,K}(\mathcal{H}_{H+1,K}) = \max_{j\in[K]} \sum_{h=1}^H r_h(s_{h,j}, a_{h,j})$. Then, an optimal deterministic policy $\mu^\star \in \compPol$ can be found by any action that attains the max in the definition of $\overline V^\star$.

    Finally, in the last step of the proof, we show that $\mu^\star$ is optimal in the sense that it attains the same value of the optimal history dependent policy. This is proved with a recursion argument. Specifically, we prove that $\overline{V}^{\star}_{h,i}(\Phi(\mathcal{H}_{h,i})) = V^\star_{h,i}(\mathcal{H}_{h,i})$, from which the claim follows directly. As a base case, consider step $H+1$ at rollout $K$. Observe that
    \[
    \overline{V}^\star_{H+1,K}(\Phi(\mathcal{H}_{H+1,K})) = \max_{j \in [K]} \sum_{h=1}^H r_h(s_{h,j},a_{h,j})
    = V^\star_{H+1,K}(\mathcal{H}_{H+1,K}).
    \]
    Now, the induction has two parts. First, suppose that the claim holds at step $h+1$ in rollout $i$, with $h \in [H]$. Then,
    \begin{align*}
        \overline{V}^\star_{h,i}(\mathcal{H}_{h,i}) & = \max_{a\in\mathcal{A}} \sum_{s'\in\mathcal{S}} p_h(s'\mid s_{h,i},a) \overline{V}^\star_{h+1,i} \left(  \mathcal{H}_{h,i}\circ(s',a)\right) \\
        & = \max_{a \in \mathcal{A}} \sum_{s'\in\mathcal{S}} p_h(s'\mid s_{h,i},a) V^\star_{h+1,i} \left( \mathcal{H}_{h,i}\circ(s',a)\right) \\
        & = V^\star_{h,i}(\mathcal{H}_{h,i}).
    \end{align*}
    Second, suppose that $i<K$ and that the claim holds at the beginning of rollout $i+1$. Then, from the boundary conditions on $\overline{V}^\star$ and $V^\star$, we have that:
    \[
        \overline{V}^\star_{H+1,i}(\mathcal{H}_{H+1,i})
        =
        \overline{V}^\star_{1,i+1}(\mathcal{H}_{1,i+1})
        =
        V^\star_{1,i+1}(\mathcal{H}_{1,i+1})
        =
        V^\star_{H+1,i}(\mathcal{H}_{H+1,i}).
    \]
    Combining the base case with the two induction steps proves that $\overline{V}^{\star}_{h,i}(\mathcal{H}_{h,i}) = V^\star_{h,i}(\mathcal{H}_{h,i})$. This yields the optimality of $\mu^\star$ and, consequently, to the desired result.
\end{proof}

\subsection{Proof of \Cref{prop:sub-optimality}}

\begin{figure}[ht]
  \centering
  \input{imgs/mdp}
  \caption{MDP that is used as example to prove \Cref{prop:sub-optimality}.}
  \label{fig:mdp}
\end{figure}

\begin{proof}[Proof of \Cref{prop:sub-optimality}]
The proof is by construction. Consider the \maxatk objective with $K=2$ rollouts and the following finite-horizon MDP (see also \Cref{fig:mdp}). There are two actions $\mathcal{A} = \{a_r, a_s \}$, $a_r$ should be interpreted as a risky action, while $a_s$ as a safe one. Each rollout starts in state $s_1$. At the first step both actions yield $0$ reward, and the state transitions to $u_0$ or $u_1$, each with probability $\tfrac12$. Then, at the second step, both actions are equivalent: from $u_0$ the reward is 0, from $u_1$ the reward is 1, and in both cases the process moves deterministically to a state $\bar s$. Once $\bar s$ is reached, the safe action $a_s$ provides a deterministic reward of $\tfrac12$, while the risky action $a_r$ gives reward 1 with probability $\tfrac14$ and reward 0 with probability $\tfrac34$. Therefore, at at state $\bar s$ the cumulative reward accumulated before choosing the last action is either 0 or 1 each with probability $\tfrac12$. Let $G_i$ be the total return in the $i$-the $i$-th rollout, and let $G_i^s$ (resp., $G_i^r$) be the total return in rollout $i$ if the safe (resp., risky) action is selected. Then, we have 
\[
\Prob(G_i^s=\tfrac12)=\tfrac12,\quad \Prob(G_i^s=\tfrac32)=\tfrac12,\quad
\Prob(G_i^r=0)=\tfrac38,\quad
\Prob(G_i^r=1)=\tfrac12,\quad
\Prob(G_i^r=2)=\tfrac18.
\]
We denote by $G=\max\{G_1,G_2\}$ the best return among the two rollouts. 

Let $M$ be the previous best return (\ie the return in the first rollout). We define $U^s(M)$ (resp., $U^r(M)$) as the expected \maxatk value conditional on entering the second rollout with previous best return equal to $M$ and playing the safe (resp., risky) action. Using the return distributions computed above we have that
\[
U^s(M):=\mathbb E[\max\{M,G^s\}]=\frac12 \max\left\{M,\frac12\right\}+\frac12 \max\left\{M,\frac32\right\},
\]
\[
U^r(M):=\mathbb E[\max\{M,G^r\}]=\frac38 M+\frac12 \max\{M,1\}+\frac18 \max\{M,2\}.
\]

We define the optimal continuation value of a rollout as a function of the previous best return $M$ and the current cumulative return $G\in\{0,1\}$ at $\bar s$. In particular, the two last actions are worth
\[
q_s(M,G)=\max\!\Big\{M,\,G+\tfrac12\Big\},
\qquad
q_r(M,G)=\tfrac14\max\{M,\,G+1\}+\tfrac34\max\{M,\,G\}.
\]
Since $G=0$ and $G=1$ each occur with probability $\tfrac12$, the optimal (adaptive) value of a rollout entered with previous best $M$ is
\[
U(M):=\frac12\max\{q_s(M,0),q_r(M,0)\}+\frac12\max\{q_s(M,1),q_r(M,1)\}.
\]
Evaluating $U$ at the only values of $M$ that can arise (a first rollout returns $G_1\in\{\tfrac12,\tfrac32\}$ under $a_s$ and $G_1\in\{0,1,2\}$ under $a_r$) gives
\[
U(0)=1,\quad U\!\left(\frac12\right)=\frac{17}{16},\quad U(1)=\frac54,\quad U\!\left(\frac32\right)=\frac{25}{16},\quad U(2)=2.
\]
We record which action attains the maximum at each of the two branches $G\in\{0,1\}$ through a pair $(a_0,a_1)$ specifying the action played at $G=0$ and $G=1$, respectively. Direct comparison of $q_s$ and $q_r$ yields:
\begin{align*}
M=0:&\quad q_s>q_r \text{ at } G=0 \text{ and } G=1, &&\text{optimal behavior } (a_s,a_s);\\
M=\tfrac12:&\quad q_r>q_s \text{ at } G=0,\ \ q_s>q_r \text{ at } G=1, &&\text{optimal behavior } (a_r,a_s);\\
M=\tfrac32:&\quad q_s=q_r \text{ at } G=0,\ \ q_r>q_s \text{ at } G=1, &&\text{optimal behavior } (a_s,a_r)\text{ or }(a_r,a_r).
\end{align*}
 These optimal behaviors are pairwise incompatible: no single behavior $(a_0,a_1)$, constant across $M$, attains $U(M)$ simultaneously at $M=0$, $M=\tfrac12$, and $M=\tfrac32$.

\paragraph{Lower bound for history-dependent policies.}
Consider the policy that plays $a_s$ throughout the first rollout, so $G_1=\tfrac12$ or $G_1=\tfrac32$ each with probability $\tfrac12$, and then acts greedily in the second rollout. Its value is exactly $\mathbb E[U(G_1)]$, so
\[
\OPT_{\mathrm{adap}}\ \ge\ \frac12\,U\!\left(\frac12\right)+\frac12\,U\!\left(\frac32\right)
=\frac12\cdot\frac{17}{16}+\frac12\cdot\frac{25}{16}=\frac{42}{32}. 
\]

\paragraph{Policies depending on current state, rollout index, and previous best return.} A policy in this class cannot condition on $G$ within the current rollout, so in each rollout it commits to a single last action, independently of $G$. In the second rollout, entered with previous best $M$, its value is 
\[
\max\{U^s(M),U^r(M)\}.
\]
In the first rollout the policy commits to a single action. If it plays $a_s$, then $G_1\in\{\tfrac12,\tfrac32\}$ each with probability $\tfrac12$, and its value is
\[
\frac12\max\left\{S\left(\frac12\right),R\left(\frac12\right)\right\}+\frac12\max\left\{S\left(\frac32\right),R\left(\frac32\right)\right\}
=\frac12\cdot 1+\frac12\cdot\frac{25}{16}=\frac{41}{32}.
\]
If it plays $a_r$, then $G_1\in\{0,1,2\}$ with probabilities $\tfrac38,\tfrac12,\tfrac18$, and its value is
\[
\frac38\max\left\{U^s(0),U^r(0)\right\}+\frac12\max\left\{U^s(1),U^r(1)\right\}+\frac18\max\{U^s(2),U^r(2)\}
=\frac38\cdot 1+\frac12\cdot\frac54+\frac18\cdot 2=\frac{10}{8}.
\]
Let $\OPT_{\mathrm{noG}}$ the optimal value attained by the class of policies that cannot condition on $G$. Taking the best of the two options above we have
\[
\OPT_{\mathrm{noG}}=\max\left\{\frac{41}{32},\frac{10}{8}\right\}=\frac{41}{32}<\OPT_{\mathrm{adap}}.
\]

\paragraph{Policies depending on current state, rollout index, and cumulative return.}
We now consider policies that can condition on the current cumulative return $G$ in the current rollout, but not on the previous best return $M$. At $\bar s$, such a policy selects an action as a function of $G\in\{0,1\}$ only. Hence, in each rollout its behavior at $\bar s$ is described by a pair
\[
(a_0,a_1)\in\{a_s,a_r\}^2,
\]
where $a_g$ is the action selected when the cumulative return at $\bar s$ is $G=g$. We denote the four possible behaviors by
\[
ss=(a_s,a_s),\qquad rr=(a_r,a_r),\qquad sr=(a_s,a_r),\qquad rs=(a_r,a_s).
\]

Fix any such policy. Let $b_1$ be its behavior in the first rollout and $b_2$ its behavior in the second rollout. Since the policy cannot observe the previous best return, $b_2$ is fixed independently of the realized first-rollout return $M=G_1$. Conditional on $M$, the value obtained in the second rollout under $b_2$ is at most $U(M)$, with equality only if the fixed behavior $b_2$ is optimal for that value of $M$. Therefore, the value of the policy is bounded by
\[
\mathbb E\big[U(G_1)\big],
\]
with strict inequality whenever $b_2$ fails to attain $U(M)$ on a value of $M$ that occurs with positive probability under $b_1$.

We first observe that, if the first-rollout behavior is $rr$ or $sr$, then
\[
\mathbb E\big[U(G_1)\big]
=
\frac38 U(0)+\frac12 U(1)+\frac18 U(2)
=
\frac54
\]
for $rr$, and
\[
\mathbb E\big[U(G_1)\big]
=
\frac12 U\!\left(\frac12\right)+\frac38 U(1)+\frac18 U(2)
=
\frac54
\]
for $sr$. In both cases, even the adaptive continuation upper bound is strictly smaller than $42/32$. Hence these first-rollout behaviors cannot achieve value $42/32$.

It remains to consider $b_1=ss$ and $b_1=rs$. If $b_1=ss$, then
\[
G_1\in\left\{\frac12,\frac32\right\}
\]
with positive probability on both values. To attain the upper bound $\mathbb E[U(G_1)]$, the second-rollout behavior would have to be optimal both at $M=\frac12$ and at $M=\frac32$. However, from the comparison above, optimality at $M=\frac12$ requires behavior $(a_r,a_s)$, whereas optimality at $M=\frac32$ requires behavior $(a_s,a_r)$ or $(a_r,a_r)$. No single behavior is optimal for both values of $M$. Therefore,
\[
\mathbb E\!\left[\max\{G_1,G_2\}\mid b_1=ss,b_2\right]
<
\mathbb E\big[U(G_1)\mid b_1=ss\big]
=
\frac12 U\!\left(\frac12\right)+\frac12 U\!\left(\frac32\right)
=
\frac{42}{32}.
\]

Similarly, if $b_1=rs$, then
\[
G_1\in\left\{0,1,\frac32\right\},
\]
and both $0$ and $\frac32$ occur with positive probability. To attain the upper bound $\mathbb E[U(G_1)]$, the second-rollout behavior would have to be optimal both at $M=0$ and at $M=\frac32$. Optimality at $M=0$ requires behavior $(a_s,a_s)$, whereas optimality at $M=\frac32$ requires behavior $(a_s,a_r)$ or $(a_r,a_r)$. Again, no single behavior is optimal for both values. Hence,
\[
\mathbb E\!\left[\max\{G_1,G_2\}\mid b_1=rs,b_2\right]
<
\mathbb E\big[U(G_1)\mid b_1=rs\big]
=
\frac38 U(0)+\frac18 U(1)+\frac12 U\!\left(\frac32\right)
=
\frac{42}{32}.
\]

Combining the four cases for $b_1$, every policy that conditions on the current state, rollout index, and current cumulative return, but not on the previous best return, has value $\OPT_{\mathrm{noM}}$ strictly smaller than $42/32$. Since we already exhibited a history-dependent policy with value at least $42/32$, we obtain
\[
\OPT_{\mathrm{noM}} < \OPT_{\mathrm{adap}}.
\]
This proves the result.
\end{proof}

\subsection{Proof of \Cref{theo:approximation}}

To prove \Cref{theo:approximation} we first derive the following result that studies the difference, in term of performance, of a given history dependent policy $\pi$ which is evaluated on two different MDPs that differs only in term of the reward function.

\begin{lemma}[Value Difference: Same Policy, Different Rewards]\label{lemma:same-pol-diff-rew}
    Consider two MDPs $\mathcal{M}$ and $\widetilde{\mathcal{M}}$ that differs only in the reward function and denote by $\{r_h)\}_{h=1}^{H}$ and $\{\widetilde r_h)\}_{h=1}^{H}$ the reward functions of $\mathcal{M}$ and $\widetilde{\mathcal{M}}$ respectively. Consider any history dependent policy $\pi \in \histPol$. Then, for any rollout $i \in [K]$, step $h$, and history $\mathcal{H}_{h,i}$ it holds that:
    \begin{align*}
        \Big|V^\pi_{h,i}(\mathcal{H}_{h,i}) - \widetilde{V}^\pi_{h,i}(\mathcal{H}_{h,i})\Big| \le H \| r- \widetilde r  \|_{\infty},
    \end{align*}
    where $V^\mu_{h,i}(\cdot)$ and $\widetilde V^\mu_{h,i}(\cdot)$ denotes the value functions of policy $\pi$ in the MDPs $\mathcal{M}$ and $\widetilde{\mathcal{M}}$ respectively.
\end{lemma}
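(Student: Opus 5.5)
The plan is a coupling argument: run $\pi$ on $\mathcal{M}$ and on $\widetilde{\mathcal{M}}$ from the same history $\mathcal{H}_{h,i}$ on the same probability space. The key observation is that the two MDPs share the transition kernel $\{p_h\}$. Moreover, the history $\mathcal{H}_{h,i}$, as defined in the paper, contains only states and actions and no rewards. Hence the law of the continuation trajectory (all future state-action pairs in rollouts $i,\dots,K$) induced by $\pi$ conditioned on $\mathcal{H}_{h,i}$ is exactly the same in both MDPs. Only the terminal payoff differs: $\max_{j\in[K]}\sum_{l}r_l(s_{l,j},a_{l,j})$ in one case and $\max_{j\in[K]}\sum_l \widetilde r_l(s_{l,j},a_{l,j})$ in the other.

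With the coupling in place, I would write
\[
V^\pi_{h,i}(\mathcal{H}_{h,i}) - \widetilde V^\pi_{h,i}(\mathcal{H}_{h,i}) = \E_\pi\Big[\max_{j} \sum_{l=1}^H r_l(s_{l,j},a_{l,j}) - \max_j \sum_{l=1}^H \widetilde r_l(s_{l,j},a_{l,j}) \,\Big|\, \mathcal{H}_{h,i}\Big].
\]
I would then apply pointwise the elementary fact $|\max_j x_j - \max_j y_j| \le \max_j |x_j-y_j|$. Since each rollout has $H$ steps, $|\sum_{l} r_l - \sum_l \widetilde r_l| \le H\|r-\widetilde r\|_\infty$ for every rollout $j$, and this holds whether or not the steps are already in the past part of the history. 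Jensen's inequality (the absolute value of an expectation is at most the expectation of the absolute value) then gives the claim.

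As an alternative presentation consistent with the paper's style, I could argue by backward induction on the Bellman recursion of Section~\ref{sec:prelimnaries}. The base case $(H+1,K)$ is exactly the max-difference bound above. The inductive step passes the bound through the convex combination $\sum_a\pi(a\mid\cdot)\sum_{s'}p_h(s'\mid\cdot)$, which is identical in both MDPs, and the boundary $V_{H+1,i}=V_{1,i+1}$ is trivial.

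The main subtlety, which I expect to be the only real obstacle, is to make sure that $\pi$ behaves identically in both MDPs. This holds because $\pi\in\histPol$ conditions on state-action histories only, not on rewards. If one instead worries about policies in $\compPol$ that condition on (discretized) reward sums, the lemma is to be read for a fixed history-dependent map from state-action histories to actions, which covers those policies once the reward sums are viewed as deterministic functions of the history. Note also that the bound is $H\|r-\widetilde r\|_\infty$ rather than $KH\|r-\widetilde r\|_\infty$, thanks to the max-contraction across rollouts.
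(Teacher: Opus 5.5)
Your proof is correct and matches the paper's argument. The coupling argument is the unrolled form of the paper's backward induction: the same max-contraction bound on the terminal payoff at $(H+1,K)$ is propagated through the shared averaging $\sum_a \pi_{h,i}(a\mid\cdot)\sum_{s'} p_h(s'\mid\cdot)$ and the identity $V_{H+1,i}=V_{1,i+1}$, and your alternative presentation is the paper's proof step for step. Your remark that a policy in $\compPol$ must be read as a fixed map from state--action histories is also how the lemma is used in the proof of \Cref{theo:approximation}, where $\tilde\mu^\star_\kappa$ is run in $\mathcal{M}$ using discretized reward sums and $\mu^\star$ is evaluated in $\mathcal{M}_\kappa$.
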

\begin{proof}
    We prove this result by induction. As a base case, consider step $H+1$ at rollout $K$. Here, for any history $\mathcal{H}_{H+1, K}$ it holds that:
    \begin{align*}
    \Big|V^\pi_{H+1,K}(\mathcal{H}_{H+1,K}) - \widetilde{V}^\pi_{H+1,K}(\mathcal{H}_{H+1,K}) \Big| & = \Big| \max_{j \in [K]} \sum_{h \in [H]} r_h(s_{h,j}, a_{h,j}) - \max_{j \in [K]} \sum_{h \in [H]} \widetilde r_h(s_{h,j}, a_{h,j}) \Big| \\
    & \le \sum_{h \in [H]} \max_{s,a,h} |r_{h}(s,a) - \widetilde{r}_h(s,a)| \\
    & = H \| r - \widetilde r  \|_{\infty} 
    \end{align*}
    Next, suppose that the claims holds at rollout $i$ and step $h+1$. Then, we have that:
    \begin{align*}
        \Big|V^\pi_{h,i}(\mathcal{H}_{h,i}) - \widetilde{V}^\pi_{h,i}(\mathcal{H}_{h,i}) \Big| & = \Big| \sum_{a \in \mathcal{A}} \pi_{h,i}(a \mid \mathcal{H}_{h,i}) \sum_{s' \in \mathcal{S}} p_h(s'\mid s,a) \left( V^\pi_{h+1,i}(\mathcal{H}_{h,i} \circ(a, s') ) - \widetilde{V}^\pi_{h+1,i}(\mathcal{H}_{h,i} \circ(a, s')) \right)  \Big|  \\
        & \le H \| r - \widetilde r  \|_{\infty} \Big| \sum_{a \in \mathcal{A}} \pi_{h,i}(a \mid \mathcal{H}_{h,i}) \sum_{s' \in \mathcal{S}} p_h(s'\mid s,a)  \Big| \tag{Inductive hp.} \\
        & = H \| r - \widetilde r  \|_{\infty}.
    \end{align*}
    Finally, suppose that $i < K$ and that the claims holds at the beginning of rollout $i+1$. Then, from the boundary conditions of the value function, we have that:
    \begin{align*}
        \Big|V^\pi_{h,i}(\mathcal{H}_{H+1,i}) - \widetilde{V}^\pi_{H+1,i}(\mathcal{H}_{H+1,i}) \Big| = \Big|V^\pi_{1,i+1}(\mathcal{H}_{1,i+1}) - \widetilde{V}^\pi_{1,i+1}(\mathcal{H}_{1,i+1}) \Big| \le H \| r - \widetilde r \|_{\infty}.
    \end{align*}
    This concludes the proof.
\end{proof}

We now prove \Cref{theo:approximation}.

\begin{proof}[Proof of \Cref{theo:approximation}]
    We first prove \Cref{eq:approximation-eq-main} by induction. To formally prove it, we need to introduce some auxiliary notation. Fix $\kappa > 0$; then, for any history dependent policy $\pi \in \histPol$, we write $V^{\pi, \kappa}$ for the value function of policy $\pi$ in $\mathcal{M}_\kappa$.
    Now, let $\mu^\star$ be the optimal policy in $\compPol$ for $\mathcal{M}$, and let $\tilde \mu^\star_\kappa \in \compPol^\kappa$ be the optimal policy for $\mathcal{M}_\kappa$. It holds that:
    \begin{align*}
        V^{\mu^\star}_{h,i}(\mathcal{H}_{h,i}) - V_{h,i}^{\tilde \mu_\kappa^\star}( \mathcal{H}_{h,i}) & = V^{\mu^\star}_{h,i}(\mathcal{H}_{h,i}) - V^{\tilde \mu^\star_\kappa, \kappa}_{h,i}(\mathcal{H}_{h,i}) + V^{\tilde \mu^\star_\kappa, \kappa}_{h,i}(\mathcal{H}_{h,i}) - V_{h,i}^{\tilde \mu_\kappa^\star} (\mathcal{H}_{h,i}) \\
        & \le V^{\mu^\star}_{h,i}(\mathcal{H}_{h,i}) - V^{\mu^\star, \kappa}_{h,i}(\mathcal{H}_{h,i}) + V^{\tilde \mu^\star_\kappa, \kappa}_{h,i}(\mathcal{H}_{h,i}) - V_{h,i}^{\tilde \mu_\kappa^\star}(\mathcal{H}_{h,i}) \\
        & \le 2 H \|r - \tilde r \|_{\infty} \tag{\Cref{lemma:same-pol-diff-rew}} \\
        & \le 2 H \kappa, \tag{Def. of $\kappa$}
    \end{align*}
    where the first inequality follows from the fact that $\tilde \mu^\star_\kappa$ is an optimal policy for for $\mathcal{M}_\kappa$. This concludes the proof of \Cref{eq:approximation-eq-main}.

    Finally, we conclude by analyzing the computational complexity of solving $\mathcal{M}_{\kappa}$. At the first step of the backward induction, the compressed and discretized history representations are $|\mathcal{B}_\kappa|$ and $|\mathcal{C}_{H+1, \kappa}|$, where $\mathcal{B}_\kappa$ is the set of possible previous best returns in the discretized MDP $\mathcal{M}_\kappa$, and $\mathcal{C}_{h,\kappa}$ is the set of possible partial returns at step $h \in [H+1]$ in $\mathcal{M}_\kappa$. More generally, when solving the maximum over actions in the backward induction at step $h$ and rollout $i$, the set of discretized compressed histories has size $S \cdot |\mathcal{B}_\kappa| \cdot |\mathcal{C}_{h,\kappa}|$. From this argument, it follows that the complexity of tabular backward induction is of order $\mathcal{O}\left(K A S^2 |\mathcal{B}_\kappa| \sum_{h=1}^{H+1} |\mathcal{C}_{h,\kappa}| \right)$. Indeed, for each of the $K$ rollouts, at step $h$ there are $|C_{h,\kappa}| \cdot |\mathcal{B}_\kappa| \cdot S$ compressed history representations, and the per-step complexity of this method is $SA$. Now, observe that both $|\mathcal{B}_\kappa|$ and $|C_{h,\kappa}|$ can be bounded by $1 + \tfrac{H}{\kappa}$. This yields the result.
\end{proof}

\subsection{Proof of \Cref{th:NP-hard}}

\begin{proof}[Proof of \Cref{th:NP-hard}]
    We are given a \textsc{Subset-Sum} instance $S=w_1,\ldots, w_n$, and target value $L$. 
    We define a few parameters first. Let $\eta=\frac{2^{-n}}8$, a normalization constant $C=L+5+\sum_{j=1}^{n}w_j$, and a threshold $\beta=\frac{2^{-n}}{16 C}$.
    
    \paragraph{Construction.} We build the following MDP.
    We consider only $K=2$ rollouts. We start from the state $s_0$, in which we have three actions $a^\textsf{safe}$, $a^+$, and $a^-$. 

    \begin{itemize}
    \item Playing $a^\textsf{safe}$ leads deterministically to a reward of $(1+L)/C$ and the state transition deterministically to an absorbing state.
    
    \item Playing $a^+$ leads to an immediate return of $(1+\eta)/C$ after which the state transition to the state $s_1$. 
    
    \item Action $a^-$ leads to an immediate reward of $0$ with probability $1/2$ and of $2/C$ with probability $1/2$, and in both cases it transitions to the state $s_1$; 
    
    \item At each state $s_i$ with $i\in\{1,\ldots, n\}$, there is only a dummy action $a_0$, which collects with probability $1/2$ a reward of $w_i/C$ and $0$ otherwise, and transits deterministically to the state $s_{i+1}$. State $s_n$, with probability $1/2$ collects a reward of $(L+3)/C$ and $0$ otherwise, after which the game ends.
    \end{itemize}

    Consider the random variable: $X=\sum_{j=1}^nw_j Z_j+(L+3)Z_0$, where $Z_0,Z_1,\ldots, Z_n$ are i.i.d.~Bernoulli of parameter $1/2$. There are essentially only three actions in the MDP. Playing $a^\textsf{safe}$ leads to the deterministic reward of $R^\textsf{safe}=(1+L)/C$. Playing $a^+$ leads to a stochastic reward of $R^+=(1+\eta+X)/C$, while playing $a^-$ leads to a stochastic reward of $R^-=(X+2Z)/C$, where $Z$ is a Bernoulli of parameter $1/2$.%

    \paragraph{Useful Claims.} The main property that we need in this reduction is that $\mathbb{P}(X=L)>0$ if and only if the \textsc{Subset-Sum} instance is in the language, i.e., if there exists $S^*\subseteq S$ such that $\sum_{j\in S^*} w_j=L$. In particular, if there exists such a set $S^*$ we get that
    \begin{align}\label{eq:NP_Tmp1}
    \mathbb{P}[X=L]\ge \big(\frac{1}{2}\big)^{n+1}.
    \end{align}
    Then define $Q^+$ ($Q^-$ respectively), the expected values of playing $a^\textsf{safe}$ in the first rollout and $a^+$ ($a^-$ respectively) in the second rollout. Formally:
    \[
    Q^+=\mathbb{E}[\max(R^+-R^\textsf{safe},0)]\quad\text{and}\quad Q^-=\mathbb{E}[\max(R^--R^\textsf{safe},0)].
    \]
    Moreover, notice that
    \begin{align*}
        Q^+&=\frac 1C\mathbb{E}[\max(\eta+X-L,0)]\\
        &=\frac 1C\mathbb{E}[\max(X-L,0)]+\frac 1C\eta\mathbb{P}[X\ge L].
    \end{align*}
    where the last equality follows from $\eta<1$ and $X,L$ are natural numbers.
    Similarly, one can verify that:
    \begin{align*}
        Q^+&=\frac 1C\mathbb{E}[\max(X-L+2Z-1,0)]\\
        &=\frac 1C\mathbb{E}[\max(X-L,0)]+\frac{1}{2C}\mathbb{P}[X=L].
    \end{align*}

    Notably, we can conclude that 
    \(
    Q^--Q^+=\frac{1}{C}\left(\frac12\mathbb{P}[X=L]-\eta\mathbb{P}[X\ge T]\right).
    \)

    \paragraph{Completeness.}
    Assume that there exists a set $S^*\subseteq S$ such that $\sum_{j\in S^*} w_j=L$. Assume we played $a^\textsf{safe}$ in the first rollout. Now we are given the history of the first rollout, and we have to determine the best action to play. Playing either $a^+$ or $a^-$ is optimal because, with some probability, we will get a large payoff of $(T+3)/C$ in state $s_n$, which will dominate the reward of $R^\textsf{safe}=(L+1)/C$, collected in the first rollout. Since we are in a Yes instance, we know that:
    \begin{align*}
    Q^--Q^+&=\frac{1}{C}(\frac12\mathbb{P}[X=L]-\eta\mathbb{P}[X\ge L])\\
    &\ge \frac1C\Big(\frac12\big(\frac12\big)^{n+1}-\eta\Big)\tag{\Cref{eq:NP_Tmp1} and $\mathbb{P}[X\ge L]\le 1$}\\
    &=\frac1C\frac{2^{-n}}{8}\ge \beta,
    \end{align*}
    showing that playing $a^-$ is better with a gap of $\beta$ than playing $a^+$ in this history.

    \paragraph{Soundness.}

    Consider the case in which there is no such subset $S^*$. Then $\mathbb{P}(X=L)=0$. However, we can always get more than $(L+1)/C$ since at state $s_n$ we can receive a reward of $(L+3)/C$ with probability $1/2$, i.e., $\mathbb{P}(X>L)\ge 1/2$. Hence
    \[
    Q^--Q^+=-\frac{\eta}{C}\mathbb{P}[X\ge L]\le -\frac{\eta}{C}\frac{1}{2}=-\frac{2^{-n}}{16C}=-\beta.
    \]
    showing that playing $a^-$ is worst with a gap of $\beta$ than playing $a^+$ in this history.
\end{proof}

%% file: imgs/mdp.tex
\begingroup
\thinmuskip=0mu
\medmuskip=0mu
\thickmuskip=0mu

\tikzset{every picture/.style={line width=0.75pt}}

\begin{tikzpicture}[
    node distance=3.1cm,
    >=Latex,
    state/.style={circle,draw=black!75,fill=white,minimum size=7mm,inner sep=0pt,align=center,font=\small},
    redstate/.style={state,draw=red!75!black,very thick,text=red!70!black,minimum size=7mm},
    sinkstate/.style={circle,draw=black!75,fill=white,minimum size=7mm,inner sep=0pt,align=center,font=\small},
    forced/.style={->,double,draw=black!80,line width=0.7pt,double distance=1.4pt},
    blueaction/.style={blue!75!black,->,draw=blue!75!black,very thick},
    greenaction/.style={green!55!black,->,draw=green!55!black,very thick,dashed},
    terminal/.style={->,draw=black!80,line width=0.85pt},
    lab/.style={font=\small,align=center,fill=white,inner sep=1.2pt},
    smalllab/.style={font=\scriptsize,align=center,fill=white,inner sep=1.1pt}
]

\node[state] (s0) at (0,0) {$s_1$};

\node[state] (u0) at (-2.8,-2.2) {$u_0$};
\node[state] (u1) at (2.8,-2.2) {$u_1$};

\node[redstate] (x) at (0,-4.55) {$\bar s$};

\node[state] (yS) at (-3.8,-7.05) {$y_S$};
\node[state] (yH) at (0,-7.05) {$y_H$};
\node[state] (yL) at (3.8,-7.05) {$y_L$};

\node[sinkstate] (sink) at (0,-9.35) {$s_{\mathrm{sink}}$};

\draw[forced,dashed] (s0) edge[bend right=12] node[lab] {$p(u_0|s_1, \cdot) = \tfrac12$\\$r(s_1,\cdot)=0$} (u0);
\draw[forced,dashed] (s0) edge[bend left=12] node[lab] {$p(u_1|s_1, \cdot) = \tfrac12$\\$r(s_1,\cdot)=0$} (u1);

\draw[forced] (u0) edge[bend right=10] node[lab] {$r(u_0,\cdot)=0$} (x);
\draw[forced] (u1) edge[bend left=10] node[lab] {$r(u_1,\cdot)=1$} (x);

\draw[blueaction] (x) edge[bend right=13] node[lab] {$p(y_S|\bar s, a_s) = 1$\\$r(\bar s,a_s)=0$} (yS);
\draw[greenaction] (x) edge node[lab] {$p(y_H|\bar s, a_r) = \tfrac14$\\$r(\bar s,a_r)=0$} (yH);
\draw[greenaction] (x) edge[bend left=13] node[lab] {$p(y_L|\bar s, a_r) = \tfrac34$\\$r(\bar s,a_r)=0$} (yL);

\draw[terminal] (yS) edge[bend right=12] node[lab] {$r(y_S,\cdot)=\frac12$} (sink);
\draw[terminal] (yH) edge node[lab] {$r(y_H,\cdot)=1$} (sink);
\draw[terminal] (yL) edge[bend left=12] node[lab,] {$r(y_L,\cdot)=0$} (sink);

\end{tikzpicture}
\endgroup

%% file: src/appendix_learning_old.tex
\section{Proofs of \Cref{sec:learning}: Learning}\label{app:learning}

\subsection{Proof of \Cref{thm:lb}}

\begin{proof}[Proof of \Cref{thm:lb}]
The proof of \Cref{thm:lb} is articulated in four steps.
First, we present the family of instances that we consider to derive the results. In words, we introduce a base instance, in which all actions are identical, and then, for each state-action pair, we introduce an alternative instance where that action is slightly better for that state. In the second step, we show that in any alternative instance any $\epsilon$-optimal policy should put sufficient mass on the optimal action for that instance. Finally, in the third step, we use this result to define a good event, which we use in the last step (in combination with change-of-measure arguments) to lower bound the expected stopping time of any $(\epsilon, \delta)$-correct algorithm.

\paragraph{Step 1: Instances Construction}

The base instance $\mathcal{M}_0 = (\mathcal{S}, \mathcal{A}, p^{(0)}, r, H)$ is as follows. The set of states is $\mathcal{S} = \{g\} \cup \overline{\mathcal{S}}$, where $g$ is a rewarding and absorbing goal state, and $\overline{\mathcal{S}} = \{s_1, \dots, s_n \}$ is a set of states from which we can try to reach the goal. For the goal state, we have $p^{(0)}(g \mid g,a) = 1$, and $r(g,a) = 1$ for all actions $\forall a \in \mathcal{A}$. For any other state $s_i \in \overline{\mathcal{S}}$ and action $a \in \mathcal{A}$, instead, we have $r(s_i, a) = 1$ and $p^{(0)}(g \mid s_i, a) = p$ and $p^{(0)}(s_i \mid s_i, a) = 1-p$, where $p = c_0 \tfrac{1}{KH}$ for some positive constant $c_0 \in \Reals_{> 0}$ which is defined below. Next, for each state-action pair $(s,a) \in \overline{\mathcal{S}} \times \mathcal{A}$, we introduce an alternative instance $\mathcal{M}_{s,a} = (\mathcal{S}, \mathcal{A}, p^{(s,a)}, r, H)$ which is equal to $\mathcal{M}_0$ everywhere but in $(s,a)$. In particular, $p^{(s,a)}(g\mid s,a) = p + \Delta$ and $p^{(s,a)}(s \mid s,a) = 1-p-\Delta$, where $\Delta = c_1 \frac{\epsilon}{KH^2}$ and $c_1 \in \Reals_{> 0}$ is a positive constant that is defined below.  In the rest of the proof, we denote by $\mathfrak{M} = \mathcal{M}_0 \cup \{ \mathcal{M}_{s,a} \}_{s \in \overline{\mathcal{S}}, a \in \mathcal{A}}$ the family of instances that we are considering.

We choose $c_0 = 1/96$ and $c_1 = 8192$. Under the assumption that $\epsilon \le \tfrac{H}{96 \cdot 8196}$, we have that $\Delta \le \tfrac{p}{2}$. Furthermore, we also observe that $p \le \tfrac14$ for any choice of $K,H$ so that all transitions are correctly defined. In the rest of the proof, we also make the assumption that $H$ is divisible by $4$ to simplify some computations. Finally, we suppose that $A \ge 2$ and $\delta \le \tfrac{1}{16}$.

\paragraph{Step 2: Value Function Analysis} 
Fix any $\mathcal{M}_{s,a} \in \mathfrak{M}$ and let $\pi \in \compPol$ and let $L = \tfrac{H}{4}$. Since $\mathcal{M}_{s,a}$ is fixed within step, we drop the dependency on $\mathcal{M}_{s,a}$ in symbols such as $\E$ and $\Prob$. With some algebraic manipulations, we have that:
\begin{align*}
    V^\star_{1,1}(s) - V^\pi_{1,1}(s) & = V^\star_{1,1}(s) - \E_{\pi}\left[ V^\pi_{H,K}(\mathcal{H}_{H,K}) \right] \\
    & = V^\star_{1,1}(s) - \E_{\pi}\left[ \max_{i \in [K]} \sum_{h \in [H]} r(s_{h,i}, a_{h,i}) \mid \mathcal{H}_{H,K} \right]  \\
    & = \sum_{i \in [K]} \sum_{h \in [H]} \E_\pi\left[ V^*_{h,i}(\mathcal{H}_{h,i}) - V^\star_{h^+, i^+}(\mathcal{H}_{h^+, i^+}) \right] \tag{Telescoping argument}\\
    & \ge \sum_{i \in [K]} \sum_{h \in [L]} \E_\pi\left[ V^\star_{h,i}(\mathcal{H}_{h,i}) - V^\star_{h^+, i^+}(\mathcal{H}_{h^+, i^+}) \right],
\end{align*} 
where the last step follows from the fact that $\E_\pi[V^\star_{h,i}(\mathcal{H}_{h,i})] \ge \E_{\pi}[V^\star_{h^+,i^+}(\mathcal{H}_{h^+,i^+})]$. Indeed, 
\begin{align*}
    \E_\pi\left[ V^\star_{h,i}(\mathcal{H}_{h,i}) \right] & = \E_\pi \left[ \max_{a \in \mathcal{A}} \sum_{s' \in \mathcal{S}} p(s' \mid s,a) V^\star_{h^+, i^*}(\mathcal{H}_{h,i} \circ (s',a)) \right] \\
    & \ge \E_\pi \left[ \sum_{a \in \mathcal{A}} \pi_{h,i}(a \mid \mathcal{H}_{h,i}) \sum_{s' \in \mathcal{S}} p(s' \mid s_{h,i},a) V^\star_{h^+, i^*}(\mathcal{H}_{h,i} \circ (s',a)) \right] \\
    & = \E_\pi[V^\star_{h^+,i^+}(\mathcal{H}_{h^+, i^+})].
\end{align*}
Now, for any pair $(h,i)$, let us denote by $\mathcal{F}_{h,i}$ the event that, for all $(\bar h, \bar i) \prec (h,i)$, $s_{h,i} \ne g$. Then, we have that:
\begin{align}\label{eq:lb-proof-eq-1}
    V^\star_{1,1}(s) - V^\pi_{1,1}(s) & \ge \sum_{i \in [K]} \sum_{h \in [L]} \E_\pi\left[ \left(V^\star_{h,i}(\mathcal{H}_{h,i}) - V^\star_{h^+, i^+}(\mathcal{H}_{h^+, i^+}) \right) \mathds{1}\{ \mathcal{F}_{h,i} \} \right].
\end{align}
We now analyze a single term within the r.h.s of \Cref{eq:lb-proof-eq-1}. To this end, we first make some consideration. First of all, we note that (a) on $\mathcal{F}_{h,i}$ the state in $\mathcal{H}_{h,i}$ is $s_{h,i} = s$ and (b) the optimal policy always play action $a$ in state $s$. Furthermore, (c) we observe that $V^\star(\mathcal{H}_{h,i})$ does not depend on the identity of the actions that have been played throughout the history, but only on the current state, the reward cumulated in the current rollouts, and the best return observed so far. Therefore, in the following, we simplify the notation and write $V_{h^+,i^+}^\star(\mathcal{H}_{h,i} \circ \bar s)$ instead of $V_{h^+,i^+}^\star(\mathcal{H}_{h,i} \circ (\bar s, \bar a))$ for any state-action pair $(\bar s, \bar a) \in \mathcal{S} \times \mathcal{A}$. Finally, since we are considering $\pi \in \compPol$ and since we are interested only in the case where $\mathcal{F}_{h,i}$ holds, we observe that $\pi_{h,i}(\cdot \mid \mathcal{H}_{h,i})$ is uniquely determined by the pair $(h,i)$.\footnote{Indeed, as we noted above, the state is necessarily $s$, the previous best return is $0$, and the cumulative reward in the current rollout is $0$ as well.} Therefore, we will shrink the notation and write $\alpha_{h,i}^{\pi,s,a} \coloneqq \pi_{h,i}(a \mid \mathcal{H}_{h,i} \land \mathcal{F}_{h,i})$. Here, the dependency on $s$ is due to the fact, on $\mathcal{F}_{h,i}$ the state is necessarily $s$ (see point (a) above. To ease the notation, since $(s,a)$ is fixed in this step, we will drop the dependency on $s,a$ within $\alpha_{h,i}^{\pi,s,a}$. At this point, we can proceed by analyzing the r.h.s. of \Cref{eq:lb-proof-eq-1}.
We first focus on $\E_\pi\left[ V^\star_{h,i}(\mathcal{H}_{h,i}) \mathds{1} \{ \mathcal{F}_{h,i} \}\right]$. We have that:
\begin{align*}
    \E_\pi\left[ V^\star_{h,i}(\mathcal{H}_{h,i}) \mathds{1} \{ \mathcal{F}_{h,i} \}\right] = \E_\pi\left[ \left( (p+\Delta) V^\star_{h^+, i^+}(\mathcal{H}_{h,i} \circ g) + (1-p-\Delta)V^\star_{h^+, i^+}(\mathcal{H}_{h, i} \circ s) \right) \mathds{1} \{ \mathcal{F}_{h,i} \}\right],
\end{align*}
where the result is a direct application of the definition of value function.
For the second term of the r.h.s in \Cref{eq:lb-proof-eq-1}, namely $\E_\pi\left[  V^\star_{h^+, i^+}(\mathcal{H}_{h^+, i^+}) \mathds{1}\{ \mathcal{F}_{h,i} \} \right]$, instead, we have that:
\begin{align*}
    \E_\pi\left[  V^\star_{h^+, i^+}(\mathcal{H}_{h^+, i^+}) \mathds{1}\{ \mathcal{F}_{h,i} \} \right] & = \E_\pi\left[  \E_\pi \left[ V^\star_{h^+, i^+}(\mathcal{H}_{h^+, i^+}) \mathds{1}\{ \mathcal{F}_{h,i} \} \mid \mathcal{H}_{h,i}  \right] \right] \tag{Tower law} \\
    & =  \E_\pi\left[ \left( (p+\Delta \alpha_{h,i}^\pi) V^\star_{h^+, i^+}(\mathcal{H}_{h,i} \circ g) + (1-p-\Delta \alpha_{h,i}^\pi) V^\star_{h^+, i^+}(\mathcal{H}_{h,i} \circ s)  \right) \mathds{1}\{\mathcal{F}_{h,i} \} \right].
\end{align*}
where, in the second step, we have exploited the definition of $\mathcal{M}_{s,a}$ together with the fact that the value function of $V^\star$ does not depend on the identity of the previously played actions but only on the state, the value of the current rollout, and the best return collected so far. Plugging in these results within \Cref{eq:lb-proof-eq-1}, we have that:
\begin{align}\label{eq:lb-proof-eq-2}
    V^\star_{1,1}(s) - V^\pi_{1,1}(s) & \ge \sum_{i \in [K]} \sum_{h \in [L]} \E_\pi\left[ \Delta (1-\alpha_{h,i}^\pi) \left( V^\star_{h^+, i^+}(\mathcal{H}_{h,i} \circ g) - V^\star_{h^+, i^+}(\mathcal{H}_{h,i} \circ s) \right) \mathds{1}\{ \mathcal{F}_{h,i} \} \right].
\end{align}
Now, we lower bound and upper bound $V^\star_{h^+, i^+}(\mathcal{H}_{h,i} \circ g)$ and $V^\star_{h^+, i^+}(\mathcal{H}_{h,i} \circ s)$. For the first of these two terms, we can use that the agent reached the goal sufficiently early since $h \in [L]$. Thus, we have that:
\[
    V^\star_{h^+, i^+}(\mathcal{H}_{h,i} \circ g) \ge H-L \ge \frac{3}{4}H.
\]
For the second term, instead, we exploit the fact that the goal is an hard-to-reach state:
\begin{align*}
    V^\star_{h^+, i^+}(\mathcal{H}_{h,i} \circ s) & \le V^\star_{1,1}(s)  \le H \Prob_{\pi^\star}(  \exists (h,i):  s_{h,i} = g  )  \le KH^2 (p+\Delta) \le \frac{H}{64} 
\end{align*}
where the last step follows from the definition of $p$ and the fact that $c_0 \le 1/96$.
Plugging these results within \Cref{eq:lb-proof-eq-2} and using some probabilistic arguments we obtain:
\begin{align*}
    V^\star_{1,1}(s) - V^\pi_{1,1}(s) & \ge \frac{47\Delta H}{64} \sum_{i \in [K]} \sum_{h \in [L]}  (1-\alpha_{h,i}^\pi) \Prob_\pi\left(  \forall (\bar h, \bar i) \prec (h,i): s_{\bar h, \bar i} \ne g ~\right) \\
    & = \frac{47\Delta H}{64} \sum_{i \in [K]} \sum_{h \in [L]}  (1-\alpha_{h,i}^\pi) \prod_{(\bar h, \bar i) \prec (i,h)} \Prob_\pi\left( s_{\bar h, \bar i} \ne g \mid \mathcal{F}_{\bar h^-, \bar i^-} \right) \tag{Chain rule} \\
    & = \frac{47\Delta H}{64} \sum_{i \in [K]} \sum_{h \in [L]}  (1-\alpha_{h,i}^\pi) \prod_{(\bar h, \bar i) \prec (i,h)} \left( \alpha^\pi_{\bar i \bar h} \left(1-p-\Delta \right) + (1-\alpha^\pi_{\bar h, \bar i} (1-p)  \right)) \\
    & \ge \frac{47\Delta H}{64} \sum_{i \in [K]} \sum_{h \in [L]}  (1-\alpha_{h,i}^\pi) (1-p-\Delta)^{KH} \\
    & \ge \frac{1}{4} \cdot \frac{47\Delta H}{64} \left(KL - \sum_{i \in [K]} \sum_{h \in [L]} \alpha_{h,i}^\pi \right) \tag{Bernoulli's inequality, def. of $p,\Delta$ and  $c_0 \le 1/96$} 
\end{align*}
Therefore, rearranging the terms, we have that, for any $\epsilon$-optimal policy in $\mathcal{M}_{s,a}$ it must hold that: 
\begin{align*}
    \sum_{i \in [K]} \sum_{h \in [L]} \alpha_{h,i}^\pi & \ge KL - \frac{256\epsilon}{47\Delta H} \\
    & = KH \left( \frac{1}{4} - \frac{256}{47 c_1} \right) \tag{Def. of $\Delta$} \\
    & \ge \frac{KL}{2}. \tag{$c_1 \ge 8192$}
\end{align*}

\paragraph{Step 3: Good Event} In the previous step, we have shown that, for any $\mathcal{M}_{s,a}$, an $\epsilon$-optimal policy must satisfy 
$\sum_{i \in [K]} \sum_{h \in [L]} \alpha_{h,i}^{\pi,s,a} \ge \tfrac{KL}{2}$. Therefore, for the policy $\hat \pi$ that is returned by an $(\epsilon,\delta)$-correct algorithm on $\mathcal{M}_{s,a}$ must satisfy this requirement in high-probability. Precisely, for each $(s,a) \in \overline{S} \times A$, we denote by define $\mathcal{E}_{s,a}$ as follows:
\begin{align*}
    \mathcal{E}_{s,a} \coloneqq \left\{ \sum_{i \in [K]} \sum_{h \in [L]} \alpha_{h,i}^{\hat \pi,s,a} \ge \tfrac{KL}{2} \right\}.
\end{align*}
Since the algorithm is $(\epsilon,\delta)$-correct, it must hold that 
\begin{align}\label{eq:lb-proof-eq-3}
    \Prob_{\mathcal{M}_{s,a}}(\mathcal{E}_{s,a}) \ge 1-\delta.
\end{align}

\paragraph{Step 4: Change of Measure} Finally, we combine the previous results with change-of-measure arguments. Let us denote by $N_{s,a}(\tau_{\delta}) = \sum_{t=1}^{\tau_{\delta}} \mathds{1} \{ s_{t} = s, a_t = a \}$ the number of samples gathered by algorithm at state-action pair $(s,a)$ up to time $\tau_{\delta}$. Then, for $\mathcal{M}_{s,a} \in \mathcal{M}$. For any random any random-variable $Z$ that takes value in $[0,1]$, we have that:
\begin{align*}
    \textup{kl}(\E_{\mathcal{M}_0}[Z], \E_{\mathcal{M}_{s,a}}[Z]) & \le \textup{KL}\left( \Prob_{\mathcal{M}_0}, \Prob_{\mathcal{M}_{s,a}} \right) \tag{\Cref{lemma:com-1}}\\
    & \le \E_{\mathcal{M}_0}[N_{s,a}(\tau_{\delta})] \cdot \textup{KL}(p^{(0)}(\cdot \mid s,a), p^{(s,a)}(\cdot \mid s,a)) \\
    & \le \E_{\mathcal{M}_0}[N_{s,a}(\tau_{\delta})] \cdot \frac{\textup{TV}(p^{(0)}(\cdot \mid s,a), p^{(s,a)}(\cdot \mid s,a))^2}{(p+\Delta)} \tag{Reverse Pinsker’s Inequality}\\
    & = \E_{\mathcal{M}_0}[N_{s,a}(\tau_{\delta})] \cdot \frac{8\Delta^2}{p},
\end{align*}
where $\textup{kl}(p,q)$ denotes the KL-divergence between Bernoulli distributions with mean $p$ and $q$ and $\textup{KL}\left( \Prob_{\mathcal{M}_0}, \Prob_{\mathcal{M}_{s,a}} \right)$ denotes the KL divergence between $\Prob_{\mathcal{M}_0}$ and $\Prob_{\mathcal{M}_{s,a}}$. Next, we apply the previous result using $Z = \mathds{1}\{ \mathcal{E}_{s,a}\}$ and we obtain that:
\begin{align*}
    \E_{\mathcal{M}_0} [\tau_{\delta}] & \ge \sum_{s \in \overline{S}} \sum_{a \in \mathcal{A}} \E_{\mathcal{M}_0}[N_{s,a}(\tau_{\delta})] \\
    & \ge \frac{p}{8\Delta^2} \sum_{s \in \overline{S}} \sum_{a \in \mathcal{A}}  \textup{kl} \left( \Prob_{\mathcal{M}_0}(\mathcal{E}_{s,a}), \Prob_{\mathcal{M}_{s,a}}(\mathcal{E}_{s,a})  \right) \\
    & \ge \frac{p}{8\Delta^2} \sum_{s \in \overline{S}} \sum_{a \in \mathcal{A}} \left( 1 - \Prob_{\mathcal{M}_0}\left( \mathcal{E}_{s,a} \right) \right)  \log\left( \frac{1}{1-\Prob_{\mathcal{M}_{s,a}}(\mathcal{E}_{s,a})} \right) - \log(2) \tag{\Cref{lemma:com-2}} \\
    & \ge \frac{p}{2\Delta^2}\left( \overline{S} A \log\left( \frac{1}{\delta} \right) - \log(2) \overline{S} A - \log\left( \frac{1}{\delta} \right) \sum_{s \in \overline{S}} \sum_{a \in \mathcal{A}}  \Prob_{\mathcal{M}_0}\left( \mathcal{E}_{s,a} \right) \right)  \tag{\Cref{eq:lb-proof-eq-3}}.
\end{align*}
At this point, let us analyze in more detail $\sum_{s \in \overline{S}} \sum_{a \in \mathcal{A}} \Prob_{\mathcal{M}_0}(\mathcal{E}_{s,a})$. Observe, that, for any policy $\pi$ and any state $s$, it holds that 
\begin{align}\label{eq:lb-proof-eq-4}
    \sum_{a \in \mathcal{A}} \sum_{i \in [K]} \sum_{h \in [L]} \alpha^{\pi, s,a}_{h,i} = KL.
\end{align}
Therefore, it holds that:
\begin{align*}
    \sum_{s \in \overline{S}} \sum_{a \in \mathcal{A}} \Prob_{\mathcal{M}_0}(\mathcal{E}_{s,a}) & = \sum_{s \in \overline{S}}  \sum_{a \in \mathcal{A}} \E_{\mathcal{M}_0}\left[ \mathds{1} \left\{ \sum_{i \in [K]} \sum_{h \in [L]} \alpha_{h,i}^{\hat \pi,s,a} \ge \tfrac{KL}{2} \right\} \right]   \\
    & = \sum_{s \in \overline{S}} \E_{\mathcal{M}_0} \left[ \sum_{a \in \mathcal{A}} \mathds{1} \left\{ \sum_{i \in [K]} \sum_{h \in [L]} \alpha_{h,i}^{\hat \pi,s,a} \ge \tfrac{KL}{2} \right\} \right] \\
    & \le\overline{S},
\end{align*}
where in the inequality step we have used \Cref{eq:lb-proof-eq-4}. Thus, we obtained:
\begin{align*}
    \E_{\mathcal{M}_0}[\tau_{\delta}] & \ge \frac{p}{2\Delta^2}\left( \overline{S} A \log\left( \frac{1}{\delta} \right) - \log(2) \overline{S} A - \log\left( \frac{1}{\delta} \right) \overline{S} \right) \\
    & \ge \frac{p}{16\Delta^2} S A \log\left( \frac{1}{\delta} \right) \tag{$A, S\ge 2$ and $\delta \le \frac{1}{16}$} \\
    & = \frac{c_0}{8 c_1^2} \cdot  \frac{KH^3AS }{\epsilon^2} \log\left( \frac{1}{\delta} \right),
\end{align*}
which concludes the proof.
\end{proof}

\subsection{Proof of \Cref{thm:learning}}

\paragraph{Proof roadmap.} The proof proceeds in four main steps. First, \Cref{lemma:error-decomposition} separates the learning error into a reward-discretization term, of order $H\kappa$, and two value-estimation terms for the discretized MDPs $\mathcal M_\kappa$ and $\widehat{\mathcal M}_\kappa$. Second, we reduce these two MDPs to finite-horizon MDPs on the augmented state space $\mathcal X$, whose state records the current original state, the best return obtained in previous rollouts, and the cumulative reward in the current rollout. This reduction allows us to compare $\mathcal M_\kappa$ and $\widehat{\mathcal M}_\kappa$ only through their transition kernels. Third, \Cref{lemma:simulation} and \Cref{lemma:simulation-lemma-part-2} give simulation bounds for the value-estimation errors: the first controls the discretized optimal policy $\mu^\star_\kappa$, while the second controls the data-dependent empirical optimizer $\hat\mu$ through the reference value function of $\mu^\star_\kappa$. Combining these two bounds yields the refined decomposition in \Cref{lemma:refined-error-decomposition}, which reduces the problem to bounding the quantities $C_{\mu^\star_\kappa,\mu^\star_\kappa}$ and $C_{\hat\mu,\mu^\star_\kappa}$. These quantities measure the accumulated one-step transition-estimation error along the trajectory distribution induced by the first policy in the subscript, where the empirical transition error is tested against the continuation value function of the second policy. So $C_{\mu^\star_\kappa,\mu^\star_\kappa}$ controls the estimation error for the fixed discretized optimizer, whereas $C_{\hat\mu,\mu^\star_\kappa}$ controls the data-dependent empirical optimizer through the reference value function $V^{\mu^\star_\kappa,\kappa}$. Fourth, \Cref{lemma:variance-lemma} proves that the cumulative variance of the reference value function along any policy is at most $3H^2$, and \Cref{lemma:high-probability-error-bound} combines this variance estimate with concentration for the empirical transition kernel to obtain a high-probability bound of order $\sqrt{KH^3/m}+KH^2/m+H\kappa$. The theorem then follows by choosing $\kappa=\epsilon/(9H)$ and $m$ large enough so that the statistical and discretization errors are each absorbed into the target accuracy $\epsilon$.

\subsubsection{Error Decomposition and Simulation Lemmas}

We start by introducing some additional notation used throughout the section. For any fixed $\kappa > 0$ and $m \in \Naturals$, we denote by $\hat \mu$ the optimal policy for $\widehat{\mathcal{M}}_\kappa = (\mathcal{S}, \mathcal{A}, s_0, H, \{\hat p _h \}_{h \in [H]}, \{ \tilde r_{h, \kappa} \}_{h \in [H]} )$. We recall that $\hat \mu$ is the policy returned by the algorithm we presented in \Cref{sec:algo}. Furthermore, we denote by $\pi^\star$ the optimal policy for $\mathcal{M}$, and by $\mu^\star_\kappa$ the optimal policy for $\mathcal{M}_\kappa$. 
For any policy $\pi$, let $V^\pi$ and $\widehat V^\pi$ denote the value functions of $\pi$ in $\mathcal{M}$ and $\widehat{\mathcal{M}}$, respectively. Similarly, let $V^{\pi,\kappa}$ and $\widehat V^{\pi,\kappa}$ denote the value functions of $\pi$ in $\mathcal{M}_\kappa$ and $\widehat{\mathcal{M}}_\kappa$, respectively.

The following error decomposition shows that it suffices to control value-estimation errors for the discretized-reward MDPs.

\begin{lemma}[Error Decomposition]\label{lemma:error-decomposition}
    It holds that:
    \begin{align*}
        V^\star - V^{\hat \mu} \le V^{\mu^\star_\kappa, \kappa} - V^{\hat \mu, \kappa}+  3H\kappa \le  \left( V^{\mu^\star_\kappa, \kappa} - \widehat V^{\mu^\star_\kappa, \kappa} \right) + \left( \widehat V^{\hat \mu, \kappa}- V^{\hat \mu, \kappa} \right) + 3H\kappa.
    \end{align*}
\end{lemma}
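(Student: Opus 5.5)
Both inequalities are pointwise statements about value functions (at any history, in particular at $V_{1,1}(s)$). The plan is to get the first one by moving everything from $\mathcal{M}$ to $\mathcal{M}_\kappa$ with \Cref{lemma:same-pol-diff-rew}. The second then follows by inserting the empirical discretized model $\widehat{\mathcal{M}}_\kappa$ and using that $\hat\mu$ maximizes there. The three $H\kappa$ terms in the first inequality come from three separate reward swaps.

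\textbf{First inequality.} Let $\pi^\star$ be optimal for $\mathcal{M}$; by \Cref{prop:sufficient-condition-for-optimality}, $V^\star = V^{\pi^\star}$. Write
\[
V^{\pi^\star} - V^{\hat\mu} = \bigl(V^{\pi^\star} - V^{\pi^\star,\kappa}\bigr) + \bigl(V^{\pi^\star,\kappa} - V^{\mu^\star_\kappa,\kappa}\bigr) + \bigl(V^{\mu^\star_\kappa,\kappa} - V^{\hat\mu,\kappa}\bigr) + \bigl(V^{\hat\mu,\kappa} - V^{\hat\mu}\bigr).
\]
\begin{itemize}
\item The second bracket is $\le 0$. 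The reason is that $\mu^\star_\kappa$ is optimal in $\mathcal{M}_\kappa$ over all of $\histPol$: apply \Cref{prop:sufficient-condition-for-optimality} to $\mathcal{M}_\kappa$.
\item The first and fourth brackets compare the same policy under the rewards $r$ and $\tilde r_\kappa$, with identical transitions. Each is at most $H\|r-\tilde r_\kappa\|_\infty \le H\kappa$ by \Cref{lemma:same-pol-diff-rew}.
\end{itemize}
This gives the bound with $2H\kappa \le 3H\kappa$.

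\textbf{Subtlety in the fourth bracket.} $\hat\mu$ conditions on the \emph{discretized} compressed history. When it is run in $\mathcal{M}$, we interpret it as a history-dependent policy that recomputes the discretized statistics from the observed state-action history. This is legitimate because $\tilde r_\kappa$ is a known deterministic function of $(s,a,h)$. So $\hat\mu\in\histPol$ is the same object in both MDPs, and \Cref{lemma:same-pol-diff-rew}, which holds for any history-dependent policy, applies.

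\textbf{Possible extra $H\kappa$.} If the learning setup instead evaluates $\hat\mu$ through a reconstruction where the compressed statistic may differ, I would charge one more $H\kappa$ for that mismatch. This would account for the constant $3$, but it is not needed in the argument above.

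\textbf{Second inequality.} Decompose
\[
V^{\mu^\star_\kappa,\kappa} - V^{\hat\mu,\kappa} = \bigl(V^{\mu^\star_\kappa,\kappa} - \widehat V^{\mu^\star_\kappa,\kappa}\bigr) + \bigl(\widehat V^{\mu^\star_\kappa,\kappa} - \widehat V^{\hat\mu,\kappa}\bigr) + \bigl(\widehat V^{\hat\mu,\kappa} - V^{\hat\mu,\kappa}\bigr).
\]
The middle term is $\le 0$ because $\hat\mu$ is the backward-induction optimizer in $\widehat{\mathcal{M}}_\kappa$. By \Cref{prop:sufficient-condition-for-optimality} applied to $\widehat{\mathcal{M}}_\kappa$, it dominates every history-dependent policy there, in particular $\mu^\star_\kappa$.

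\textbf{Main obstacle.} The only step that needs care is the policy-class bookkeeping. We must make sure that $\mu^\star_\kappa$ and $\hat\mu$, which are defined through discretized compressed histories, are well-defined elements of $\histPol$ in all four MDPs $\mathcal{M}$, $\mathcal{M}_\kappa$, $\widehat{\mathcal{M}}_\kappa$ and $\widehat{\mathcal{M}}$. Once that holds, the optimality comparisons and \Cref{lemma:same-pol-diff-rew} apply directly, and both inequalities hold at every history.
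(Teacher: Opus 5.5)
Your proof is correct and follows the paper's argument. The paper splits $V^\star - V^{\hat\mu}$ into $(V^\star - V^{\mu^\star_\kappa,\kappa})$, the middle gap in $\mathcal{M}_\kappa$, and $(V^{\hat\mu,\kappa} - V^{\hat\mu})$, bounds the outer terms via \Cref{theo:approximation} and \Cref{lemma:same-pol-diff-rew}, then inserts $\widehat V^{\hat\mu,\kappa}$ and uses optimality of $\hat\mu$ in $\widehat{\mathcal{M}}_\kappa$. Your explicit insertion of $V^{\pi^\star,\kappa}$ makes the first step transparent and gives $2H\kappa \le 3H\kappa$, so your speculative extra $H\kappa$ is indeed not needed.
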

\begin{proof}
With algebraic manipulations, we have that:
\begin{align*}
    V^\star - V^{\hat \mu} &  = \left( V^\star - V^{\mu^\star_\kappa, \kappa} \right) + \left( V^{\mu^\star_\kappa, \kappa} - V^{\hat \mu, \kappa} \right) + \left( V^{\hat \mu, \kappa}- V^{\hat \mu} \right) \\
    & \le V^{\mu^\star_\kappa, \kappa} - V^{\hat \mu, \kappa}+  3H\kappa   \tag{\Cref{theo:approximation} + \Cref{lemma:same-pol-diff-rew}} \\
    & = V^{\mu^\star_\kappa, \kappa} - \widehat V^{\hat \mu, \kappa} + \widehat V^{\hat \mu, \kappa}- V^{\hat \mu, \kappa}+  3H\kappa \\
    & \le \left( V^{\mu^\star_\kappa, \kappa} - \widehat V^{\mu^\star_\kappa, \kappa} \right) + \left( \widehat V^{\hat \mu, \kappa}- V^{\hat \mu, \kappa} \right) +  3H\kappa \tag{$\hat \mu$ is optimal for $\widehat{\mathcal{M}}_\kappa$},
\end{align*}
which concludes the proof. 
\end{proof}

\paragraph{Reduction to finite horizon MDPs} Given \Cref{lemma:error-decomposition}, the two quantities of interest are $V^{\mu^\star_\kappa, \kappa} - \widehat V^{\mu^\star_\kappa, \kappa}$ and $\widehat V^{\hat \mu, \kappa}- V^{\hat \mu, \kappa}$. Both terms involve only value functions of policies that belong to $\compPol$ evaluated in MDPs that use the discretized reward function, \ie $\mathcal{M}_\kappa$ and $\widehat{\mathcal{M}}_\kappa$. Observe that these MDPs differ only in the transition kernel. As a consequence, in order to bound these terms, it is convenient to reduce $\mathcal{M}_\kappa$ and $\widehat{\mathcal{M}}_\kappa$ to (almost) standard finite-horizon MDPs. Let $\mathcal{X} \coloneqq \mathcal{S} \times \lceil \frac{H}{\kappa} \rceil \times  \lceil \frac{H}{\kappa} \rceil$ be an extended set of states such that each state $x = (s, M, G)\in \mathcal{X}$ can be interpreted as follows: $s \in \mathcal{S}$ is a state of the original state space, $M$ denotes the best return observed in the previous rollouts, and $G$ denotes the current cumulative reward in the current rollout. The action set $\mathcal{A}$ remains the same. Then, for each triple $(x,a,x')\in\mathcal{X}\times\mathcal{A}\times\mathcal{X}$, we define the new transition kernel as follows:
\begin{align*}
    & T_{h, i} (x' \mid x,a) = p_h(s' \mid s,a) \mathds{1} \{ M' = M \} \mathds{1} \{ G' = G + \tilde r_{h,\kappa}(s,a) \} \quad \forall h < H, \forall i \in [K] \\
    & T_{H, i}(x' \mid x,a) = \mathds{1} \{ s' = s_1\} \mathds{1} \{ M' = \max\{M, G + \tilde r_{H,\kappa}(s,a)\} \} \mathds{1}\{ G' = 0 \}\quad \forall i \in [K].
\end{align*}
As one can see, the state evolves according to (1) the original transition kernel $p$ and (2) the discretized reward function $\tilde r$ so that $M$ and $G$ preserve the meaning of the \maxatk objective. Finally, we define the new reward function as follows:
\begin{align*}
    & R_{h,i}(x,a) = 0 \quad \forall (h,i) \ne (H, K)\\
    & R_{H,K}(x,a) = \max \{M, G + \tilde{r}_{H}(s,a)\}.
\end{align*}
At this point, we turned the original problem into a traditional finite-horizon one where the goal is to find a policy that maximizes the standard RL objective in the MDP 
\[
\mathcal{M}^{\text{ext}}_{\kappa} \coloneqq \left(\mathcal{X}, \mathcal{A}, (s_1,0,0), KH, \{T_{h,i}\}_{h \in [H], i \in [K]}, \{R_{h,i} \}_{h \in [H], i \in [K]} \right),
\]
where $(s_1,0,0)\in\mathcal{X}$ is the initial state in the extended state space.

Given this reduction, in the following, we treat the value functions of policies in $\compPol$ as $|\mathcal{X}|$-dimensional vectors (once the step $h$ and the rollout $i$ are fixed). Next, to simplify the notation, it will be convenient to write $(h^+, i^+)$ to denote the (step,rollout) pair that follows $(h,i)$. Precisely, we have that:
\[
(h^+,i^+) =
\begin{cases}
(h+1,i), & \text{if } h < H,\\
(1,i+1), & \text{if } h = H.
\end{cases}
\]
Moreover, for any two pair $(h_1, i_1)$ and $(h_2, i_2)$, we write $(h_1, i_1) \preceq (h_2, i_2)$ if $(h_1, i_1)$ precedes $(h_2, i_2)$. Furthermore, given any policy $\mu \in \compPol$, step $h$ and rollout $i$, we introduce the policy-induced transition kernel $T^{\mu}_{h,i}$, that is defined as $T^{\mu}_{h,i}(x' \mid x) = \sum_{a \in \mathcal{A}} \mu_{h,i}(a \mid x)\, T_{h,i}(x' \mid x,a)$. In this way, we can compactly express the value function as follows:
\begin{align*}
&V^{\mu,\kappa}_{h,i} = T^\mu_{h,i} V^{\mu,\kappa}_{h^+,i^+}, & \forall (h,i) \neq (H,K), \\ &V^{\mu,\kappa}_{H,K}(x) = V^{\mu,\kappa}_{H,K}=\sum_{a\in\mathcal{A}}\mu_{H,K}(a\mid x)\max\{M,G+\tilde r_{H,\kappa}(s,a)\} & \forall x=(s,M,G).
\end{align*}
Finally, before continuing, we extend this construction to the case in which we use an approximation $\hat p$ of $p$. In this case, we denote the extended MDP with $\widehat{\mathcal{M}}_{\kappa}^{\text{ext}}$. This object differs from ${\mathcal{M}}_{\kappa}^{\text{ext}}$ only in the transition kernel, where $T$ is replaced with its empirical estimate $\widehat{T}$. 

Now, we can proceed with the proof. First we introduce some notation. For any pair of policies $\mu, \nu \in \compPol$, we define, for brevity, $D_\mu, D_{\mu,\nu}$ as follows:
\begin{align*}
    & D_{\mu} \coloneqq \max_{\bar h, \bar i,\bar x } \| \widehat T^{\mu}_{\bar h, \bar i}(\cdot \mid \bar x) - T^{\mu}_{\bar h,\bar i}(\cdot \mid \bar x) \|_1 \max_{\bar h, \bar i } \|V^{\mu, \kappa}_{\bar h, \bar i} - \widehat V^{\mu, \kappa}_{\bar h, \bar i} \|_{\infty} \\
    & D_{\mu,\nu} \coloneqq \max_{\bar h, \bar i,\bar x } \| \widehat T^{\mu}_{\bar h, \bar i}(\cdot \mid \bar x) - T^{\mu}_{\bar h,\bar i}(\cdot \mid \bar x) \|_1 \max_{\bar h, \bar i } \|V^{\mu, \kappa}_{\bar h, \bar i} - V^{\nu, \kappa}_{\bar h, \bar i} \|_{\infty}.
\end{align*}
Moreover, for the sake of brevity, for a policy $\mu$ and a fixed $(h,i,x)$, we denote $\E_{\mu}[\cdot \mid X_{h,i} = x]$ by $\E_\mu^x[\cdot]$. At this point, we can proceed by analyzing the error terms that appeared in \Cref{lemma:error-decomposition}. We first prove two ``simulation lemmas'' that will be used to control the error terms of \Cref{lemma:error-decomposition}.

\begin{lemma}[Simulation Lemma 1]\label{lemma:simulation}
    Consider any policy $\mu \in \compPol$. Suppose that:
    \begin{align}\label{eq:simulation-lemma-cond}
        \max_{h,i,x} \| \widehat T_{h,i}^\mu(\cdot \mid x) - T_{h,i}^\mu(\cdot \mid x) \|_1 \le \frac{1}{2KH}.
    \end{align}
    Then, it holds that:
    \begin{align*}
        \max_{h, i} \| V^{\mu, \kappa}_{h,i} - \widehat V^{\mu, \kappa}_{h,i} \|_{\infty} \le 2 \max_{h,i, x} \sum_{(\bar h, \bar i) \succeq (h,i)} \E_{\mu}^x \left[ \Big| \sum_{x' \in \mathcal{X}} (T^\mu_{\bar h, \bar i}(x' \mid X_{\bar h, \bar i}) - \widehat T^{\mu}_{\bar h, \bar i}(x' \mid X_{\bar h, \bar i}) ) V_{\bar h^+, \bar i^+}^{\mu, \kappa}(x') \Big| \right].
    \end{align*}
\end{lemma}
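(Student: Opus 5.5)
The plan is the standard simulation-lemma argument on the extended finite-horizon MDP $\mathcal{M}^{\text{ext}}_\kappa$, with horizon $KH$, followed by a self-bounding step that uses \Cref{eq:simulation-lemma-cond}. Write $e_{h,i} \coloneqq V^{\mu,\kappa}_{h,i} - \widehat V^{\mu,\kappa}_{h,i}$, viewed as a vector over $\mathcal{X}$. The terminal values $V^{\mu,\kappa}_{H,K}$ and $\widehat V^{\mu,\kappa}_{H,K}$ depend only on the reward $R_{H,K}$ and on $\mu$, and not on the kernel, so $e_{H,K}=0$. For $(h,i)\neq(H,K)$ I would use the recursions $V^{\mu,\kappa}_{h,i}=T^\mu_{h,i}V^{\mu,\kappa}_{h^+,i^+}$ and $\widehat V^{\mu,\kappa}_{h,i}=\widehat T^\mu_{h,i}\widehat V^{\mu,\kappa}_{h^+,i^+}$. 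Adding and subtracting $\widehat T^\mu_{h,i} V^{\mu,\kappa}_{h^+,i^+}$ gives
\begin{align*}
e_{h,i} = (T^\mu_{h,i}-\widehat T^\mu_{h,i})V^{\mu,\kappa}_{h^+,i^+} + \widehat T^\mu_{h,i} e_{h^+,i^+}.
\end{align*}

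I would not unroll this under $\widehat T$, since the target is an expectation under the true dynamics $\E_\mu^x$. Instead I would rewrite the second term as $T^\mu_{h,i}e_{h^+,i^+} + (\widehat T^\mu_{h,i}-T^\mu_{h,i})e_{h^+,i^+}$. Unrolling this recursion forward from $(h,i)$ to $(H,K)$ under the true policy-induced chain yields
\begin{align*}
e_{h,i}(x) = \sum_{(\bar h,\bar i)\succeq(h,i)} \E^x_\mu\Big[ \big((T^\mu_{\bar h,\bar i}-\widehat T^\mu_{\bar h,\bar i})V^{\mu,\kappa}_{\bar h^+,\bar i^+}\big)(X_{\bar h,\bar i}) + \big((\widehat T^\mu_{\bar h,\bar i}-T^\mu_{\bar h,\bar i})e_{\bar h^+,\bar i^+}\big)(X_{\bar h,\bar i})\Big].
\end{align*}
Taking absolute values and applying H\"older, each second-type term is at most $\max_{h,i,x}\|\widehat T^\mu_{h,i}(\cdot\mid x)-T^\mu_{h,i}(\cdot\mid x)\|_1 \cdot \max_{h,i}\|e_{h,i}\|_\infty \le \tfrac{1}{2KH}\max_{h,i}\|e_{h,i}\|_\infty$, where the last step uses \Cref{eq:simulation-lemma-cond}. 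The sum has at most $KH$ terms, so the second-type contributions total at most $\tfrac12 \max_{h,i}\|e_{h,i}\|_\infty$. (This is the quantity $D_\mu$ defined just before the lemma.)

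Let $E\coloneqq\max_{h,i}\|e_{h,i}\|_\infty$, which is finite because all values lie in $[0,H]$. Taking the maximum over $(h,i,x)$ gives $E\le B+\tfrac12E$, where $B$ is the maximum appearing on the right-hand side of the statement. Rearranging gives $E\le 2B$, which is the claim.

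The main point of care is the unrolling step: the expectation must be under the true chain $T^\mu$, matching $\E^x_\mu$, and the leftover kernel-difference term must be charged to $E$ rather than to $V^{\mu,\kappa}$. Both rollout boundaries are already handled, since $T_{H,i}$ encodes the reset and the terminal reward sits at $(H,K)$ only. The count of at most $KH$ steps is exactly what makes the factor $\tfrac{1}{2KH}$ yield the constant $\tfrac12$.
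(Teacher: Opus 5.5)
Your proposal is correct and follows essentially the same route as the paper: the same decomposition $T^\mu V^{\mu,\kappa} - \widehat T^\mu \widehat V^{\mu,\kappa} = (T^\mu - \widehat T^\mu)V^{\mu,\kappa} + T^\mu e + (\widehat T^\mu - T^\mu)e$. In both, the cross term is charged to $D_\mu$ via H\"older, the recursion is unrolled under the true chain, and the at most $KH$ copies of $D_\mu \le \tfrac{1}{2KH}E$ are absorbed to give $E \le B + \tfrac12 E$. The only cosmetic difference is the order of operations: you unroll the exact signed identity and take absolute values at the end, whereas the paper takes absolute values first and then unrolls the resulting inequality.
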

\begin{proof}
    The lemma easily follows from the definition of value functions at $(H,K)$. We now prove it for $(h,i) \ne (H,K)$. Specifically:
    \begin{align*}
        \Big| V^{\mu, \kappa}_{h,i} - \widehat V^{\mu, \kappa}_{h,i} \Big| & =  \Big| T^{\mu}_{h,i} V^{\mu, \kappa}_{h^+,i^+} - \widehat T^{\mu}_{h,i} \widehat V^{\mu, \kappa}_{h^+,i^+} \Big| \\
        & \le \Big| (T^{\mu}_{h,i} - \widehat T^{\mu}_{h,i}) V^{\mu, \kappa}_{h^+,i^+}\Big| +  T^{\mu}_{h,i} \Big| V^{\mu, \kappa}_{h^+,i^+} - \widehat V^{\mu, \kappa}_{h^+,i^+}\Big| + \Big|( \widehat T^\mu_{h, i} - T^{\mu}_{h,i} ) (V^{\mu, \kappa}_{h^+,i^+} - \widehat V^{\mu, \kappa}_{h^+,i^+} )\Big| \\
        & \le \Big| (T^{\mu}_{h,i} - \widehat T^{\mu}_{h,i}) V^{\mu, \kappa}_{h^+,i^+}\Big| +  T^{\mu}_{h,i} \Big|V^{\mu, \kappa}_{h^+,i^+} - \widehat V^{\mu, \kappa}_{h^+,i^+}\Big| + D_{\mu}
    \end{align*}
    Now, fix a state $x \in \mathcal{X}$ and analyze the upper bound on $\Big|V^{\mu, \kappa}_{h,i}(x) - \widehat V^{\mu, \kappa}_{h,i}(x)\Big|$ that we just obtained. Using the previous equation we have that:
    \begin{align*}
        \Big | V^{\mu, \kappa}_{h,i}(x) - \widehat V^{\mu, \kappa}_{h,i}(x) \Big | 
        & = \Big| \sum_{x' \in \mathcal{X}} (T^\mu_{h,i}(x'\mid x) - \widehat T^{\mu}_{h,i}(x'\mid x) ) V_{h^+, i^+}^{\mu, \kappa}(x') \Big| + \E_{T^\mu_{h,i}(x)} \left[ \Big|V^{\mu, \kappa}_{h^+,i^+}(x') - \widehat V^{\mu, \kappa}_{h^+,i^+}(x')\Big|  \right]  + D_\mu.
    \end{align*}
    Unrolling the the r.h.s. of this equation, we have that:
    \begin{align*}
        \Big|V^{\mu, \kappa}_{h,i}(x) - \widehat V^{\mu, \kappa}_{h,i}(x)\Big| \le \sum_{(\bar h, \bar i) \succeq (h,i)} \E_{\mu}^x \left[ \Big| \sum_{x' \in \mathcal{X}} (T^\mu_{\bar h, \bar i}(x' \mid X_{\bar h, \bar i}) - \widehat T^{\mu}_{\bar h, \bar i}(x' \mid X_{\bar h, \bar i}) ) V_{\bar h^+, \bar i^+}^{\mu, \kappa}(x') \Big|  \right] +  KH D_\mu.
    \end{align*}
    Now, taking maximum on both sides, using the definition of $D_\mu$ together with \Cref{eq:simulation-lemma-cond}, and re-arranging the terms yield the result.
\end{proof}

Next, we continue with the second simulation lemma, which bounds the value estimation error for a policy $\mu$ using the value function of a reference policy $\nu$.

\begin{lemma}[Simulation Lemma 2]\label{lemma:simulation-lemma-part-2}
    Consider any policy $\mu, \nu \in \compPol$. Suppose that:
    \begin{align}\label{eq:simulation-lemma-cond-2}
        \max_{h,i,x} \| \widehat T_{h,i}^\mu(\cdot \mid x) - T_{h,i}^\mu(\cdot \mid x) \|_1 \le \frac{1}{2KH}.
    \end{align}
    It holds that:
    \begin{align*}
        \max_{h, i} \| V^{\mu, \kappa}_{h,i} - \widehat V^{\mu, \kappa}_{h,i} \|_{\infty} \le  2\max_{h,i, x} \sum_{(\bar h, \bar i) \succeq (h,i)} \E_{\mu}^x \left[ \Big| \sum_{x' \in \mathcal{X}}  (T^\mu_{\bar h, \bar i}(x' \mid X_{\bar h, \bar i}) - \widehat T^{\mu}_{\bar h, \bar i}(x' \mid X_{\bar h, \bar i}) ) V_{\bar h^+, \bar i^+}^{\nu, \kappa}(x') \Big| \right] + 2KH D_{\mu,\nu}.
    \end{align*}
\end{lemma}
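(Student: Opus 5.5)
We would mimic the proof of \Cref{lemma:simulation}, except that at every step we centre the transition error on the reference value function $V^{\nu,\kappa}$ rather than on $V^{\mu,\kappa}$. The key observation is the algebraic splitting, valid for every $(h,i)\neq(H,K)$,
\begin{align*}
V^{\mu,\kappa}_{h,i} - \widehat V^{\mu,\kappa}_{h,i}
&= (T^\mu_{h,i}-\widehat T^\mu_{h,i}) V^{\nu,\kappa}_{h^+,i^+}
+ (T^\mu_{h,i}-\widehat T^\mu_{h,i})\big(V^{\mu,\kappa}_{h^+,i^+}-V^{\nu,\kappa}_{h^+,i^+}\big)\\
&\quad + (T^\mu_{h,i}-\widehat T^\mu_{h,i})\big(\widehat V^{\mu,\kappa}_{h^+,i^+}-V^{\mu,\kappa}_{h^+,i^+}\big)
+ T^\mu_{h,i}\big(V^{\mu,\kappa}_{h^+,i^+}-\widehat V^{\mu,\kappa}_{h^+,i^+}\big).
\end{align*}
This follows by writing $T^\mu V - \widehat T^\mu \widehat V = (T^\mu-\widehat T^\mu)V + \widehat T^\mu(V-\widehat V)$ and then decomposing $\widehat T^\mu = T^\mu + (\widehat T^\mu - T^\mu)$ and $V^{\mu,\kappa} = V^{\nu,\kappa} + (V^{\mu,\kappa}-V^{\nu,\kappa})$. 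At $(H,K)$ both value functions coincide, since the terminal reward $\max\{M,G+\tilde r\}$ does not depend on the kernel. So the base case is trivial.

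Next we take absolute values at a fixed $x$. The first term is exactly the summand appearing in the claimed bound. By H\"older, the second term is at most $\max_{\bar h,\bar i,\bar x}\|\widehat T^\mu-T^\mu\|_1\cdot\max\|V^{\mu,\kappa}-V^{\nu,\kappa}\|_\infty = D_{\mu,\nu}$. Again by H\"older, the third term is at most
\[
\max\|\widehat T^\mu-T^\mu\|_1\cdot E, \qquad E \coloneqq \max_{\bar h,\bar i}\|V^{\mu,\kappa}_{\bar h,\bar i}-\widehat V^{\mu,\kappa}_{\bar h,\bar i}\|_\infty,
\]
and under \Cref{eq:simulation-lemma-cond-2} this is at most $E/(2KH)$. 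The last term is the expectation under $T^\mu_{h,i}(\cdot\mid x)$ of the next-step error. Unrolling this recursion along the trajectory of $\mu$ in $\mathcal{M}^{\text{ext}}_\kappa$ from $X_{h,i}=x$ gives
\[
|V^{\mu,\kappa}_{h,i}(x)-\widehat V^{\mu,\kappa}_{h,i}(x)| \le \sum_{(\bar h,\bar i)\succeq(h,i)} \E^x_\mu\Big[\big|(T^\mu_{\bar h,\bar i}-\widehat T^\mu_{\bar h,\bar i})V^{\nu,\kappa}_{\bar h^+,\bar i^+}(X_{\bar h,\bar i})\big|\Big] + KH\,D_{\mu,\nu} + KH\cdot\frac{E}{2KH},
\]
because there are at most $KH$ pairs $(\bar h,\bar i)\succeq(h,i)$.

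Finally we maximise over $(h,i,x)$. The left-hand side becomes $E$, so $E\le \Sigma + KH D_{\mu,\nu} + E/2$, where $\Sigma$ denotes the maximum over $(h,i,x)$ of the sum of expectations. Rearranging gives $E\le 2\Sigma+2KH D_{\mu,\nu}$, which is the claim. The only delicate point is the self-referential third term: it must be bounded through the global quantity $E$ rather than through $D_\mu$, so that the $1/(2KH)$ condition absorbs it. This requires $E$ to be finite, which is immediate since all values lie in $[0,H]$. One must also keep the $\nu$-centred cross term separate, so that it produces $D_{\mu,\nu}$ with the factor $KH$ rather than being absorbed.
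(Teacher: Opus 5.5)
Your proposal is correct and follows the paper's proof essentially step for step. It uses the same four-term splitting of $T^\mu V^{\mu,\kappa} - \widehat T^\mu \widehat V^{\mu,\kappa}$, with the $\nu$-centred cross term producing $D_{\mu,\nu}$, the same unrolling along $\mu$'s trajectory with at most $KH$ additive terms, and the same absorption of the self-referential term via the $1/(2KH)$ condition to get the factor $2$. The only cosmetic difference is that the paper names that self-referential bound $D_\mu$. By definition $D_\mu$ is exactly your $\max\|\widehat T^\mu - T^\mu\|_1\cdot E$, so bounding it ``through $E$'' and ``through $D_\mu$'' are the same thing.
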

\begin{proof}
    The lemma easily follows from the definition of value functions at $(H,K)$. Now, we  prove it for $(h,i) \ne (H,K)$. We have that:
    \begin{align*}
        \Big| V^{\mu, \kappa}_{h,i} - \widehat V^{\mu, \kappa}_{h,i} \Big| & =  \Big| T^{\mu}_{h,i} V^{\mu, \kappa}_{h^+,i^+} - \widehat T^{\mu}_{h,i} \widehat V^{\mu, \kappa}_{h^+,i^+} \Big| \\
        & \le \Big| (T^{\mu}_{h,i} - \widehat T^{\mu}_{h,i}) V^{\mu, \kappa}_{h^+,i^+}\Big| +  T^{\mu}_{h,i} \Big| V^{\mu, \kappa}_{h^+,i^+} - \widehat V^{\mu, \kappa}_{h^+,i^+}\Big| + \Big|( \widehat T^\mu_{h, i} - T^{\mu}_{h,i} ) (V^{\mu, \kappa}_{h^+,i^+} - \widehat V^{\mu, \kappa}_{h^+,i^+} )\Big| \\
        & \le \Big| (T^{\mu}_{h,i} - \widehat T^{\mu}_{h,i}) V^{\mu, \kappa}_{h^+,i^+}\Big| +  T^{\mu}_{h,i} \Big|V^{\mu, \kappa}_{h^+,i^+} - \widehat V^{\mu, \kappa}_{h^+,i^+}\Big| + D_{\mu}\\
        & \le \Big| (T^{\mu}_{h,i} - \widehat T^{\mu}_{h,i}) V^{\nu, \kappa}_{h^+,i^+}\Big| + \Big| (T^{\mu}_{h,i} - \widehat T^{\mu}_{h,i}) (V^{\mu,\kappa}_{h^+, i^+} - V^{\nu, \kappa}_{h^+,i^+})\Big| +  T^{\mu}_{h,i} \Big|V^{\mu, \kappa}_{h^+,i^+} - \widehat V^{\mu, \kappa}_{h^+,i^+}\Big| + D_{\mu} \\
        & \le \Big| (T^{\mu}_{h,i} - \widehat T^{\mu}_{h,i}) V^{\nu, \kappa}_{h^+,i^+}\Big| +  T^{\mu}_{h,i} \Big|V^{\mu, \kappa}_{h^+,i^+} - \widehat V^{\mu, \kappa}_{h^+,i^+}\Big| + D_{\mu} + D_{\mu, \nu}.
    \end{align*}
    Now, fix a state $x \in \mathcal{X}$ and analyze the upper bound on $\Big|V^{\mu, \kappa}_{h,i}(x) - \widehat V^{\mu, \kappa}_{h,i}(x)\Big|$ that we just obtained. Let $D \coloneqq D_{\mu} + D_{\mu, \nu}$. Then, we can rewrite the previous equation as:
    \begin{align*}
        \Big | V^{\mu, \kappa}_{h,i}(x) - \widehat V^{\mu, \kappa}_{h,i}(x) \Big |
        & = \Big| \sum_{x' \in \mathcal{X}} (T^\mu_{h,i}(x'\mid x) - \widehat T^{\mu}_{h,i}(x'\mid x) ) V_{h^+, i^+}^{\nu, \kappa}(x') \Big| + \E_{T^\mu_{h,i}(x)} \left[ \Big|V^{\mu, \kappa}_{h^+,i^+}(x') - \widehat V^{\mu, \kappa}_{h^+,i^+}(x')\Big|  \right]  + D.
    \end{align*}
    Unrolling the the r.h.s. of this equation, we have that:
    \begin{align*}
        \Big|V^{\mu, \kappa}_{h,i}(x) - \widehat V^{\mu, \kappa}_{h,i}(x)\Big| \le \sum_{(\bar h, \bar i) \succeq (h,i)} \E_{\mu}^x \left[ \Big| \sum_{x' \in \mathcal{X}} (T^\mu_{\bar h, \bar i}(x' \mid X_{\bar h, \bar i}) - \widehat T^{\mu}_{\bar h, \bar i}(x' \mid X_{\bar h, \bar i}) ) V_{\bar h^+, \bar i^+}^{\nu, \kappa}(x') \Big| \right] +  KH D.
    \end{align*}
    Now, taking maximum on both sides, using the definition of $D_\mu$ together with \Cref{eq:simulation-lemma-cond-2} and re-arranging the terms yield the result.
\end{proof}

We now use \Cref{lemma:simulation} and \Cref{lemma:simulation-lemma-part-2} to obtained a refined version of \Cref{lemma:error-decomposition}.

\begin{lemma}[Refined Error Decomposition]\label{lemma:refined-error-decomposition}
    For any pair of policies $\mu, \nu \in \compPol$, let $C_{\mu, \nu}$ be defined as
    \begin{align*}
        C_{\mu, \nu} \coloneqq \max_{h,i, x} \sum_{(\bar h, \bar i) \succeq (h,i)} \E_{\mu}^x \left[ \Big| \sum_{x' \in \mathcal{X}} (T^\mu_{\bar h, \bar i}(x' \mid X_{\bar h, \bar i}) - \widehat T^{\mu}_{\bar h, \bar i}(x' \mid X_{\bar h, \bar i}) ) V_{\bar h^+, \bar i^+}^{\nu, \kappa}(x') \Big|  \right].
    \end{align*}
    Suppose that $\max_{h,i,x} \| \widehat T_{h,i}^{\mu}(\cdot \mid x) - T_{h,i}^\mu(\cdot \mid x) \|_1 \le (4KH)^{-1}$ holds for all $\mu \in \{\mu^\star_\kappa, \hat \mu \}$. Then, we have that:
    \begin{align*}
        \max_{h,i} \| V^\star_{h,i} - V^{\hat \mu}_{h,i} \|_{\infty} \le 4  C_{\mu^\star_\kappa, \mu^\star_\kappa} + 4C_{\hat \mu, \mu^\star_\kappa} + 3H\kappa.
    \end{align*}
\end{lemma}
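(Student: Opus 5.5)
The plan is to start from \Cref{lemma:error-decomposition}, which already gives
\[
V^\star - V^{\hat \mu} \le \left( V^{\mu^\star_\kappa, \kappa} - \widehat V^{\mu^\star_\kappa, \kappa} \right) + \left( \widehat V^{\hat \mu, \kappa}- V^{\hat \mu, \kappa} \right) + 3H\kappa,
\]
pointwise in $(h,i)$ and in the (augmented) state. Since $\hat\mu$ is optimal in $\widehat{\mathcal M}_\kappa$ and the comparison only needs an upper bound, the left-hand side is nonnegative. We therefore bound each of the two bracketed terms in sup-norm and take the maximum over $(h,i)$. All value functions involved are values of policies in $\compPol$ on MDPs with discretized rewards, so we may work in the extended MDPs $\mathcal M^{\text{ext}}_\kappa$ and $\widehat{\mathcal M}^{\text{ext}}_\kappa$, where the two simulation lemmas apply. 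Their hypothesis $\|\widehat T^\mu - T^\mu\|_1 \le (2KH)^{-1}$ is implied by the assumed $(4KH)^{-1}$ bound for $\mu\in\{\mu^\star_\kappa,\hat\mu\}$.

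For the first term, we apply \Cref{lemma:simulation} with $\mu=\mu^\star_\kappa$. This yields directly $\max_{h,i}\|V^{\mu^\star_\kappa,\kappa}_{h,i}-\widehat V^{\mu^\star_\kappa,\kappa}_{h,i}\|_\infty \le 2C_{\mu^\star_\kappa,\mu^\star_\kappa}$.

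For the second term, we apply \Cref{lemma:simulation-lemma-part-2} with $\mu=\hat\mu$ and reference $\nu=\mu^\star_\kappa$. This gives
\[
\max_{h,i}\|V^{\hat\mu,\kappa}_{h,i}-\widehat V^{\hat\mu,\kappa}_{h,i}\|_\infty\le 2C_{\hat\mu,\mu^\star_\kappa}+2KH\,D_{\hat\mu,\mu^\star_\kappa}.
\]
Using the $(4KH)^{-1}$ assumption, $2KH\,D_{\hat\mu,\mu^\star_\kappa}\le \tfrac12\max_{h,i}\|V^{\hat\mu,\kappa}_{h,i}-V^{\mu^\star_\kappa,\kappa}_{h,i}\|_\infty$.

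The main obstacle is absorbing this cross term, which compares two different policies in the \emph{true} discretized model. We decompose it through the empirical model:
\[
V^{\mu^\star_\kappa,\kappa}-V^{\hat\mu,\kappa} = (V^{\mu^\star_\kappa,\kappa}-\widehat V^{\mu^\star_\kappa,\kappa})+(\widehat V^{\mu^\star_\kappa,\kappa}-\widehat V^{\hat\mu,\kappa})+(\widehat V^{\hat\mu,\kappa}-V^{\hat\mu,\kappa}).
\]
Taken one-sidedly, the middle bracket is $\le 0$ by optimality of $\hat\mu$ in $\widehat{\mathcal M}_\kappa$, and symmetrically $V^{\hat\mu,\kappa}-V^{\mu^\star_\kappa,\kappa}\le 0$ by optimality of $\mu^\star_\kappa$ in $\mathcal M_\kappa$. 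Hence $0\le V^{\mu^\star_\kappa,\kappa}-V^{\hat\mu,\kappa}\le E_1+E_2$, where $E_1,E_2$ are the sup-norm errors of the two bracketed terms. Writing $E_2\le 2C_{\hat\mu,\mu^\star_\kappa}+\tfrac12(E_1+E_2)$, we rearrange to get $E_2\le 4C_{\hat\mu,\mu^\star_\kappa}+E_1$. Combined with $E_1\le 2C_{\mu^\star_\kappa,\mu^\star_\kappa}$, this gives $E_1+E_2\le 4C_{\mu^\star_\kappa,\mu^\star_\kappa}+4C_{\hat\mu,\mu^\star_\kappa}$. Plugging into \Cref{lemma:error-decomposition} yields the claim with the $3H\kappa$ discretization term. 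If the one-sided sign argument breaks for the max over $(h,i)$ (since optimality holds at every $(h,i,x)$, it should not), we would fall back on using the slack in the constant $4$.
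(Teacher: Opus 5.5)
Your proposal is correct and is essentially the paper's argument. Both apply \Cref{lemma:simulation} to $\mu^\star_\kappa$ and \Cref{lemma:simulation-lemma-part-2} to $\hat\mu$ with reference $\nu=\mu^\star_\kappa$, then absorb $2KH\,D_{\hat\mu,\mu^\star_\kappa}$ via $\|V^{\mu^\star_\kappa,\kappa}-V^{\hat\mu,\kappa}\|_\infty\le E_1+E_2$. The only cosmetic difference is that the paper solves the resulting inequality for $\max_{h,i}\|V^{\mu^\star_\kappa,\kappa}_{h,i}-V^{\hat\mu,\kappa}_{h,i}\|_\infty$ rather than for $E_2$, with the same constants.

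Your explicit two-sided optimality argument justifies a step the paper leaves implicit. One small correction: nonnegativity of $V^\star-V^{\hat\mu}$ comes from $V^\star$ being the supremum over $\histPol$, not from optimality of $\hat\mu$ in $\widehat{\mathcal M}_\kappa$.
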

\begin{proof}
    Let us start from \Cref{lemma:error-decomposition} and analyze $V^{\mu^\star_\kappa, \kappa} - V^{\hat \mu, \kappa}$. For any $(h,i)$ it holds that:
    \begin{align*}
        \max_{h,i} \| V^{\mu^\star_\kappa, \kappa}_{h,i} - V^{\hat \mu, \kappa}_{h,i} \|_{\infty} & \le \| V^{\mu^\star_\kappa, \kappa}_{h,i} - \widehat V^{\mu^\star_\kappa, \kappa}_{h,i} \|_\infty + \| \widehat V^{\hat \mu, \kappa}_{h,i}- V^{\hat \mu, \kappa}_{h,i} \|_{\infty} \\
        & \le 2 C_{\mu^\star_\kappa, \mu^\star_\kappa} + \| \widehat V^{\hat \mu, \kappa}_{h,i}- V^{\hat \mu, \kappa}_{h,i} \|_{\infty} \tag{\Cref{lemma:simulation} with $\mu = \mu^\star_\kappa$} \\
        & \le 2 C_{\mu^\star_\kappa, \mu^\star_\kappa} + 2C_{\hat \mu, \mu^\star_\kappa} + 2KH D_{\hat \mu, \mu^\star_\kappa} \tag{\Cref{lemma:simulation-lemma-part-2} with $\mu=\hat \mu$ and $\nu = \mu^\star_\kappa$} \\
        & = 2 C_{\mu^\star_\kappa, \mu^\star_\kappa} + 2C_{\hat \mu, \mu^\star_\kappa} + 2KH \max_{\bar h, \bar i,\bar x } \| \widehat T^{\hat \mu}_{\bar h, \bar i}(\cdot \mid \bar x) - T^{\hat \mu}_{\bar h,\bar i}(\cdot \mid \bar x) \|_1 \max_{\bar h, \bar i } \|V^{\hat \mu, \kappa}_{\bar h, \bar i} - V^{\mu^\star_\kappa, \kappa}_{\bar h, \bar i} \|_{\infty} \\
        & \le 2 C_{\mu^\star_\kappa, \mu^\star_\kappa} + 2C_{\hat \mu, \mu^\star_\kappa} + \frac{1}{2}\max_{\bar h, \bar i } \|V^{\hat \mu, \kappa}_{\bar h, \bar i} - V^{\mu^\star_\kappa, \kappa}_{\bar h, \bar i} \|_{\infty} \tag{By assumption on $\widehat T^{\hat \mu}$},
    \end{align*}
    where the equality step follows from the definition of $D_{\hat \mu, \mu^\star_\kappa}$. Rearranging the terms leads to:
    \begin{align*}
         \max_{h,i} \| V^{\mu^\star_\kappa, \kappa}_{h,i} - V^{\hat \mu, \kappa}_{h,i} \|_{\infty} \le 4  C_{\mu^\star_\kappa, \mu^\star_\kappa} + 4C_{\hat \mu, \mu^\star_\kappa}.
    \end{align*}
    Combining these results with \Cref{lemma:error-decomposition} yields the result.
\end{proof}

\subsubsection{High Probability Error Bound}

Next, we prove an important auxiliary step that will be subsequently used to upper bound the terms that appeared in \Cref{lemma:refined-error-decomposition}.

\begin{lemma}[Variance Lemma]\label{lemma:variance-lemma}
    For any policy $\mu \in \compPol$, any $(h,i) \in [H] \times [K]$ and any $x \in \mathcal{X}$, it holds that:
    \begin{align*}
        \sum_{(\bar h, \bar i) \succeq (h,i)} \E_{\mu}^x \left[ \Var_{x' \sim T^{\mu}_{\bar h, \bar i}(\cdot | X_{\bar h, \bar i} )} \left( V^{\mu^\star_{\kappa}, \kappa}_{\bar h^+, \bar i^+}(x') \right)  \right] \le 3H^2.
    \end{align*} 
\end{lemma}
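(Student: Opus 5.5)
The plan is a martingale (law-of-total-variance) argument on the extended finite-horizon MDP $\mathcal{M}^{\text{ext}}_\kappa$. The new ingredient compared with the classical proof is the mismatch between the policy $\mu$ that generates the trajectory and the reference value $V^{\nu,\kappa}$ with $\nu=\mu^\star_\kappa$. The optimality of $\nu$ for $\mathcal{M}_\kappa$ turns this mismatch into a \emph{nonnegative} drift, and that drift is what I will control.

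\textbf{Setup.} Fix $(h,i)$ and $x$, and run $\mu$ from $X_{h,i}=x$ in $\mathcal{M}^{\text{ext}}_\kappa$. Index the pairs $(\bar h,\bar i)\succeq(h,i)$ by $t=1,\dots,T$, so $T\le KH$. Write:
\begin{itemize}
\item $W_t \coloneqq V^{\nu,\kappa}_{\bar h^+,\bar i^+}(X_{\bar h^+,\bar i^+})$, the reference value after step $t$;
\item $D_t \coloneqq W_t - \big(T^\mu_{\bar h,\bar i}V^{\nu,\kappa}_{\bar h^+,\bar i^+}\big)(X_{\bar h,\bar i})$, the one-step innovation;
\item $b_t \coloneqq V^{\nu,\kappa}_{\bar h,\bar i}(X_{\bar h,\bar i}) - \big(T^\mu_{\bar h,\bar i}V^{\nu,\kappa}_{\bar h^+,\bar i^+}\big)(X_{\bar h,\bar i})$, the drift.
\end{itemize}
The $D_t$ are martingale differences for the natural filtration, and the summand in the statement is exactly $\E^x_\mu[D_t^2]$. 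Orthogonality of martingale increments then gives
\[
\sum_t \E_\mu^x\Big[\Var_{x'\sim T^\mu(\cdot\mid X)}\big(V^{\nu,\kappa}(x')\big)\Big]=\E_\mu^x\Big[\Big(\sum_t D_t\Big)^2\Big].
\]
Since $D_t = W_t - V^{\nu,\kappa}(X_t) + b_t$, the sum telescopes to
\[
\sum_t D_t = Z + B,\qquad Z\coloneqq R - V^{\nu,\kappa}_{h,i}(x),\qquad B\coloneqq \sum_t b_t,
\]
where $R\in[0,H]$ is the terminal reward $\max\{M,G+\tilde r\}$ (at the last pair one uses the convention $V_{(H,K)^+}\equiv R$). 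Both $R$ and $V^{\nu,\kappa}$ lie in $[0,H]$, so $|Z|\le H$.

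\textbf{Sign and conditional mean of the drift.} Because $\nu=\mu^\star_\kappa$ is optimal in $\mathcal{M}_\kappa$, its value satisfies the Bellman optimality equation~\eqref{eq:compressed-bellman-opt} on the compressed state (via \Cref{prop:sufficient-condition-for-optimality} applied to $\mathcal{M}_\kappa$). Hence $V^{\nu,\kappa}_{\bar h,\bar i}\ge T^\mu_{\bar h,\bar i}V^{\nu,\kappa}_{\bar h^+,\bar i^+}$ pointwise, so $b_t\ge0$. Now let $B_{\ge t}\coloneqq\sum_{s\ge t}b_s$. Taking conditional expectations of the telescoped identity from time $t$ onward, and using that the $D_s$ have zero conditional mean, gives
\[
\E[B_{\ge t}\mid\mathcal F_t]=V^{\nu,\kappa}(X_t)-\E[R\mid\mathcal F_t]\le H .
\]
In particular $\E[B]\le H$.

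\textbf{Second moment of the drift.} This is the step I expect to be the main obstacle: $B$ has no useful pointwise bound. The classical trick for nonnegative additive functionals handles it. Since every $b_t\ge0$,
\[
B^2 = \sum_t b_t\,(2B_{\ge t}-b_t)\le 2\sum_t b_t B_{\ge t}.
\]
Each $b_t$ is $\mathcal F_t$-measurable, so conditioning gives
\[
\E[B^2]\le 2\,\E\Big[\sum_t b_t\,\E[B_{\ge t}\mid\mathcal F_t]\Big]\le 2H\,\E[B]\le 2H^2 .
\]

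\textbf{Combining.} Expanding the square,
\[
\E[(Z+B)^2]=\E[Z^2]+2\E[ZB]+\E[B^2].
\]
The naive estimates $\E[Z^2]\le H^2$ and $\E[ZB]\le H\,\E[B]\le H^2$ give $5H^2$ in total. To reach the stated $3H^2$, I would use two refinements. First, $\E[Z+B]=0$, so $\E[B]=-\E[Z]=V^{\nu,\kappa}_{h,i}(x)-\E[R]$. Second, $Z\le H-V^{\nu,\kappa}_{h,i}(x)$ pointwise. Together these pair the cross term against the budget $H-V$ rather than $H$. If that does not close, the lemma still holds with a slightly worse absolute constant, which only changes constants downstream. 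Concretely, I would first try to show $\E[Z^2]+2\E[ZB]\le H^2$, which together with $\E[B^2]\le 2H^2$ gives exactly $3H^2$.
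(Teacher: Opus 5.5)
There is a gap, though a small one. Most of your martingale bookkeeping is correct:
\begin{itemize}
\item the identity $\sum_t\E^x_\mu[D_t^2]=\E^x_\mu[(Z+B)^2]$ holds;
\item $b_t\ge 0$ follows from Bellman optimality of $\mu^\star_\kappa$;
\item $\E[B^2]\le 2H\,\E[B]\le 2H^2$ follows from $B^2\le 2\sum_t b_tB_{\ge t}$ and $\E[B_{\ge t}\mid\mathcal F_t]\le H$.
\end{itemize}
The gap is the last step. As written you only prove $5H^2$. The inequality $\E[Z^2]+2\E[ZB]\le H^2$ is stated as something you ``would try'', and falling back on a worse constant does not prove the lemma as stated.

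The step does close with exactly the two refinements you name. Let $v=V^{\mu^\star_\kappa,\kappa}_{h,i}(x)$. Since $Z=R-v\in[-v,H-v]$ and $B\ge 0$, you have $ZB\le (H-v)B$ pointwise. Hence $2\E[ZB]\le 2(H-v)\E[B]=2(H-v)(v-\E[R])\le 2v(H-v)$, and trivially $\E[Z^2]\le\max\{v,H-v\}^2$. Split into two cases:
\begin{itemize}
\item if $v\ge H/2$, then $\E[Z^2]+2\E[ZB]\le v^2+2v(H-v)=H^2-(H-v)^2$;
\item if $v<H/2$, then $\E[Z^2]+2\E[ZB]\le (H-v)^2+2v(H-v)=H^2-v^2$.
\end{itemize}
In both cases the sum is at most $H^2$, and adding $\E[B^2]\le 2H^2$ gives $3H^2$.

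Compared with the paper, your route is heavier than necessary. Write $V$ for $V^{\mu^\star_\kappa,\kappa}_{\bar h,\bar i}$ and $V_+$ for $V^{\mu^\star_\kappa,\kappa}_{\bar h^+,\bar i^+}$. The paper never squares $\sum_t D_t$. Instead it adds and subtracts $V(X_{\bar h,\bar i})^2$ inside each conditional variance, which splits the sum into two pieces:
\begin{itemize}
\item $\sum\E\bigl[T^\mu V_+^2-V^2\bigr]$ telescopes along the trajectory to a difference of squared values in $[0,H]$, hence is at most $H^2$;
\item $V^2-(T^\mu V_+)^2=b_t\,(V+T^\mu V_+)\le 2H\,b_t$ holds pointwise, so this piece sums to at most $2H\,\E[B]\le 2H^2$.
\end{itemize}
The paper therefore needs only $b_t\ge 0$ and the first moment of the drift. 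You additionally need the second moment of $B$ (the $B_{\ge t}$ argument) and a separate cross-term estimate. Both arguments rest on the same key fact: optimality of $\mu^\star_\kappa$ makes the drift nonnegative with total expectation at most $H$.
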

\begin{proof}
    In the rest of this proof, in order to simplify the notation, we denote $V_{h, i}^{\mu^\star_\kappa, \kappa}$ by $V_{h, i}$. We start by decomposing the variance. Observe that, for all $y \in \mathcal{X}$ and any $(h,i) \in [H] \times [K]$ it holds that:
    \begin{align*}
        \Var_{x' \sim T^{\mu}_{h, i}(\cdot | y)} \left( V_{ h^+,  i^+}(x') \right) = \sum_{x' \in \mathcal{X}}  T_{h,i}^\mu(x' \mid y) V_{ h^+,  i^+}(x')^2 - \left( \sum_{x' \in \mathcal{X}} T^\mu_{h,i}(x' \mid y) V_{h^+, i^+}(x')  \right)^2.
    \end{align*}
    Therefore, we can rewrite $\sum_{(\bar h, \bar i) \succeq (h,i)} \E_{\mu}^x \left[ \Var_{x' \sim T^{\mu}_{\bar h, \bar i}(\cdot | X_{\bar h, \bar i} )} \left( V_{\bar h^+, \bar i^+}(x') \right)  \right]$ as:
    \begin{align*}
        \sum_{(\bar h, \bar i) \succeq (h,i)} \E_{\mu}^x \left[ \sum_{x' \in \mathcal{X}} T^\mu_{\bar h, \bar i}(x' \mid X_{\bar h, \bar i}) V_{\bar h^+, \bar i^+}(x')^2 - \left( \sum_{x' \in \mathcal{X}} T^\mu_{\bar h, \bar i}(x' \mid X_{\bar h, \bar i}) V_{\bar h^+, \bar i^+}(x') \right)^2 \right]. \\
    \end{align*}
    Let us add and subtract $V_{\bar h, \bar i}(X_{\bar h,\bar i})^2$ within the inner sums over states. Then, we can decompose this equation into the following two terms:
    \begin{align*}
        & A \coloneqq \sum_{(\bar h, \bar i) \succeq (h,i)} \E_{\mu}^x \left[ \sum_{x' \in \mathcal{X}} T^\mu_{\bar h, \bar i}(x' \mid X_{\bar h, \bar i}) V_{\bar h^+, \bar i^+}(x')^2 - V_{\bar h, \bar i}(X_{\bar h, \bar i})^2  \right] \\
        & B \coloneqq \sum_{(\bar h, \bar i) \succeq (h,i)} \E_{\mu}^x \left[ V_{\bar h, \bar i}(X_{\bar h, \bar i})^2 - \left( \sum_{x' \in \mathcal{X}} T^\mu_{\bar h, \bar i}(x' \mid X_{\bar h, \bar i}) V_{\bar h^+, \bar i^+}(x') \right)^2  \right].
    \end{align*}
    We first upper bound $A$. It holds that:
    \begin{align*}
        A & = \sum_{(\bar h, \bar i) \succeq (h,i)} \E_{\mu}^x \left[ \sum_{x' \in \mathcal{X}} T^\mu_{\bar h, \bar i}(x' \mid X_{\bar h, \bar i}) V_{\bar h^+, \bar i^+}(x')^2 - V_{\bar h, \bar i}(X_{\bar h, \bar i})^2  \right] \\
        & = \sum_{(\bar h, \bar i) \succeq (h,i)} \E_{\mu}^x \left[ V_{\bar h^+, \bar i^+}(X_{\bar h^+, \bar i^+})^2 - V_{\bar h, \bar i}(X_{\bar h, \bar i})^2  \right] \\
        & =  \E_{\mu}^x \left[ V_{H, K}(X_{H, K})^2 - V_{h,i}(X_{h,i})^2  \right] \\
        & \le H^2.
    \end{align*}
    Next, we focus on the upper bound for $B$. It holds that:
    \begin{align*}
        B & = \sum_{(\bar h, \bar i) \succeq (h,i)} \E_{\mu}^x \left[ V_{\bar h, \bar i}(X_{\bar h, \bar i})^2 - \left( \sum_{x' \in \mathcal{X}} T^\mu_{\bar h, \bar i}(x' \mid X_{\bar h, \bar i}) V_{\bar h^+, \bar i^+}(x') \right)^2 \right] \\
        & \le 2H \sum_{(\bar h, \bar i) \succeq (h,i)} \E_{\mu}^x \left[ \left( V_{\bar h, \bar i}(X_{\bar h, \bar i}) - \sum_{x' \in \mathcal{X}}T^\mu_{\bar h, \bar i}(x' \mid X_{\bar h, \bar i}) V_{\bar h^+, \bar i^+}(x') \right)  \right] \\
        & = 2H \sum_{(\bar h, \bar i) \succeq (h,i)} \E_{\mu}^x \left[ \left( V_{\bar h, \bar i}(X_{\bar h, \bar i}) - V_{\bar h^+, \bar i^+}(X_{\bar h^+, \bar i^+}) \right)   \right] \\
        & = 2H \,\E_{\mu}^x \left[ V_{h,i}(X_{h,i}) - V_{H,K}(X_{H,K}) \right] \\
        & \le 2H^2.
    \end{align*}
    Here, in the first inequality, we have used that $\E[a^2 - b^2] = \E[(a+b)(a-b)] \le 2H \E[(a-b)]$ with $a,b \in [0,H]$, and $\E[a] \ge \E[b]$. To see that $\E[a] > \E[b]$, recall that:
    \begin{align*}
        \E_{\mu}^x[V_{\bar h, \bar i}(X_{\bar h, \bar i}) ] & = \E_{\mu}^x \left[\max_{a \in \mathcal{A}} \sum_{x' \in \mathcal{X}} T_{\bar h, \bar i}(x' \mid X_{\bar h, \bar i}, a) V_{\bar h^+, \bar i^+}(x') \right]  \\
        & \ge \E_{\mu}^x \left[  \sum_{x' \in \mathcal{X}}T^\mu_{\bar h, \bar i}(x' \mid X_{\bar h, \bar i}) V_{\bar h^+, \bar i^+}(x')   \right],
    \end{align*}
    where the first step follows from the the fact that $V_{h,i}$ is the value function of the optimal policy in $\mathcal{M}_{\kappa}^{\text{ext}}$, \ie $V_{h,i} = V_{h, i}^{\mu^\star_\kappa, \kappa}$.
    Combining these results, we obtained that:
    \begin{align*}
        \sum_{(\bar h, \bar i) \succeq (h,i)} \E_{\mu}^x \left[ \Var_{x' \sim T^{\mu}_{\bar h, \bar i}(\cdot | X_{\bar h, \bar i} )} \left( V^{\mu^\star_{\kappa}, \kappa}_{\bar h^+, \bar i^+}(x') \right)   \right] = A + B \le 3H^2,
    \end{align*}
    which concludes the proof.
\end{proof}

The following lemma provides a high-probability upper bound on the error of algorithm that we proposed. 

\begin{lemma}[High Probability Error Bound]\label{lemma:high-probability-error-bound}
    Suppose that $m$ satisfies
    \begin{align}\label{eq:lemma-cond-on-m}
        m \ge 2 \left( 2 + {128 K^2H^2\log\left( \frac{2SAH}{\delta} \right)} + {128 \,SK^2H^2} \log\left( {384\,SK^2H^2}  \right) \right).
    \end{align}
    With probability at least $1-\delta$, it holds that:
    \begin{align*}
        \max_{h,i} \| V^\star_{h,i} - V^{\hat \mu}_{h,i} \|_{\infty}  \le 8 \sqrt{\frac{2 \log\left( \frac{ 4 |\mathcal{X}|HKA}{\delta} \right) KH^3}{m}} + 8\frac{\log\left( \frac{ 4 |\mathcal{X}|HKA}{\delta} \right)KH^2}{m}+ 3H\kappa.
    \end{align*}
\end{lemma}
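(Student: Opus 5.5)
The plan is to start from \Cref{lemma:refined-error-decomposition} and bound $C_{\mu^\star_\kappa,\mu^\star_\kappa}$ and $C_{\hat\mu,\mu^\star_\kappa}$ with Bernstein concentration combined with \Cref{lemma:variance-lemma}. Both quantities test the kernel error only against the \emph{fixed} reference value $V^{\mu^\star_\kappa,\kappa}$. This function depends on the true model and not on the data, which is exactly why the decomposition was set up this way. The data-dependent $\hat\mu$ enters only through which action is averaged at each augmented state. Each term in $C_{\cdot,\cdot}$ is therefore a mixture over actions:
\[
\sum_a \mu(a\mid x)\,\bigl|(T_{h,i}(\cdot\mid x,a)-\widehat T_{h,i}(\cdot\mid x,a))V_{h^+,i^+}\bigr|.
\]
We can thus apply a union bound over $(h,i,x,a)$ with no policy-uniformity issue.

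\textbf{Step 1: deterministic kernel structure.} For $h<H$, the augmented kernel $\widehat T_{h,i}(\cdot\mid x,a)$ differs from $T_{h,i}$ only through $\hat p_h(\cdot\mid s,a)$, since the $(M,G)$ update is deterministic given $(s,a)$. At $h=H$ the transition is deterministic and the error is zero. Hence, for each $(h,i,x,a)$, the quantity $(T-\widehat T)V$ is an empirical mean of $m$ i.i.d. bounded variables in $[0,H]$. Bernstein's inequality with a union bound over the at most $|\mathcal X|HKA$ tuples, at level $\delta/2$, gives
\[
\bigl|(T-\widehat T)V_{h^+,i^+}\bigr|(x,a)\le \sqrt{\tfrac{2\Var_{T(\cdot\mid x,a)}(V_{h^+,i^+})\,\iota}{m}}+\tfrac{H\iota}{m},
\qquad \iota=\log\!\left(\tfrac{4|\mathcal X|HKA}{\delta}\right).
\]

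\textbf{Step 2: the $\ell_1$ precondition.} Separately, at level $\delta/2$, I verify the $\ell_1$ condition $\|\widehat T^\mu-T^\mu\|_1\le(4KH)^{-1}$ required by \Cref{lemma:refined-error-decomposition}. This reduces to $\|\hat p_h(\cdot\mid s,a)-p_h(\cdot\mid s,a)\|_1\le (4KH)^{-1}$ for all $(s,a,h)$. The standard $L_1$ concentration bound $\sqrt{(2S\log 2+2\log(2SAH/\delta))/m}$ gives this once $m\gtrsim K^2H^2(S+\log(SAH/\delta))$, which is what \Cref{eq:lemma-cond-on-m} provides. The logarithmic factor in \Cref{eq:lemma-cond-on-m} presumably absorbs slack. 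Because it is stated in terms of $(s,a,h)$ only, this condition holds simultaneously for every $\mu$, including the data-dependent $\hat\mu$.

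\textbf{Step 3: summing along trajectories.} On the intersection of the two events, I plug the Step 1 bound into $C_{\mu,\mu^\star_\kappa}$ for $\mu\in\{\mu^\star_\kappa,\hat\mu\}$. Using Jensen on the action mixture, $\sum_a\mu(a)\sqrt{v_a}\le\sqrt{\sum_a\mu(a)v_a}$. I then need $\sum_a \mu(a\mid x)\Var_{T(\cdot\mid x,a)}(V)\le \Var_{T^\mu(\cdot\mid x)}(V)$ up to the law-of-total-variance correction. This holds, since the mixture variance dominates the average of the component variances. Next I apply Cauchy--Schwarz over the at most $KH$ pairs $(\bar h,\bar i)$ and the expectation:
\[
\sum_{(\bar h,\bar i)}\E^x_\mu\sqrt{\Var}\le\sqrt{KH\sum\E^x_\mu\Var}\le\sqrt{3KH^3},
\]
where the last inequality is \Cref{lemma:variance-lemma}. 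The lower-order terms contribute $KH\cdot H\iota/m$. This gives
\[
C_{\mu,\mu^\star_\kappa}\le\sqrt{6KH^3\iota/m}+KH^2\iota/m.
\]
Multiplying by $4$ for each of the two $C$ terms and adding $3H\kappa$ produces the claimed bound, with constants of order $8$, modulo bookkeeping. Finally, the left side of \Cref{lemma:refined-error-decomposition} is stated via $\max\|\cdot\|_\infty$; I will treat that as given.

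\textbf{Main obstacle.} I expect the hardest point to be making the concentration legitimate for the $\hat\mu$ term. The plan handles this by conditioning only on per-$(x,a)$ events against the data-independent $V^{\mu^\star_\kappa,\kappa}$. The variance-mixture inequality in Step 3 also needs care, but it is standard.
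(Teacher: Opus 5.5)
Your proposal is correct and follows essentially the same route as the paper. Both use the same good event: Bernstein per $(x,a,h,i)$ against the data-independent $V^{\mu^\star_\kappa,\kappa}$, intersected with $\ell_1$ concentration of $\hat p_h$, which gives the $(4KH)^{-1}$ precondition uniformly over all $\mu$, including $\hat\mu$. Both then apply \Cref{lemma:refined-error-decomposition} followed by the triangle-inequality/Jensen/Cauchy--Schwarz/\Cref{lemma:variance-lemma} chain. Your direct Weissman-type $\ell_1$ bound, which avoids \Cref{lemma:tech-lemma}, and your law-of-total-variance step for stochastic $\mu$ are valid minor refinements.

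Your leading constant $8\sqrt{6}$ instead of the stated $8\sqrt{2}$ is not an error on your part. The paper's own chain produces $\sqrt{2L\cdot KH\cdot 3H^2}$ from the $3H^2$ of \Cref{lemma:variance-lemma} and silently drops the factor $3$. So the argument, done correctly, only supports $8\sqrt{6}$, and the constant $1152$ in the choice of $m$ in the proof of \Cref{thm:learning} must be increased accordingly.
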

\begin{proof}
    We start by defining a good event $\mathcal{E}$ and prove that it holds with high-probability. Next, we will use $\mathcal{E}$ to prove the result of this lemma. 
    Precisely, we define $\mathcal{E}$ as $\mathcal{E} \coloneqq \mathcal{E}_1 \cap \mathcal{E}_2$, where
    \begin{align*}  
        & \mathcal{E}_1 \coloneqq \bigcap_{(s,a,h)} \left\{ \|\hat p_h(\cdot \mid s,a) - p_h(\cdot \mid s,a) \|_1 \le \sqrt{\frac{ 2\log\left( \frac{2SAH}{\delta} \right)+2S\log(6m) }{m}} \right\} \\
        & \mathcal{E}_2 \coloneqq \bigcap_{(x,a,h,i)} \Bigg\{ \left| \sum_{x' \in \mathcal{X}} \left( T_{h,i}(x' \mid x,a) - \widehat T_{h,i}(x' \mid x,a) \right) V^{\mu^\star_\kappa,\kappa}_{h^+, i^+}(x') \right| \\ &\hspace{3.65cm} \le \sqrt{ \frac{ 2 \log\left( \frac{4 |\mathcal{X}|H K A}{\delta} \right) \Var_{x' \sim T_{h,i}(\cdot \mid x,a)} \left(V^{\mu^\star_\kappa,\kappa}_{h^+, i^+}(x')\right)}{m}} + \frac{ 2H \log\left( \frac{4 |\mathcal{X}|H K A}{\delta} \right) }{3m} \Bigg\}.
\end{align*}

    Then, we first prove that $\Prob(\mathcal{E}) \ge 1-\delta$. To this end, observe that $\Prob(\mathcal{E}) = 1 - \Prob(\mathcal{E}^\complement) \ge 1 - \Prob(\mathcal{E}_1^\complement) - \Prob(\mathcal{E}_2^\complement)$. Hence, we now upper bound $\Prob(\mathcal{E}_1^\complement)$ and $\Prob(\mathcal{E}_2^\complement)$.
    For $\Prob(\mathcal{E}_1^\complement)$ and $\Prob(\mathcal{E}_2^\complement)$, we use \Cref{lemma:high-prob-l1} and \Cref{lemma:bernstein} respectively, and obtain
    \begin{align*}
        \Prob(\mathcal{E}_1^\complement) & \le \sum_{s,a,h} \frac{\delta}{2SAH} \le \frac{\delta}{2}, \quad\quad \Prob(\mathcal{E}_2^\complement) \le \sum_{x,a,h} \frac{\delta}{2 |\mathcal{X}|HKA} \le \frac{\delta}{2}.
    \end{align*}
    Therefore, we proved that $\Prob(\mathcal{E})\ge 1-\delta$. 
    Next, we show an important property that holds on $\mathcal{E}$. Precisely, for any $\mu \in \compPol$, state $x \in \mathcal{X}$ and any $(h,i)$, we have that:
    \begin{align*}
        \|T_{h,i}^\mu(\cdot \mid x) - \widehat T_{h,i}^\mu(\cdot \mid x) \|_1 & = \sum_{x' \in \mathcal{X}} \Big| \sum_{a \in \mathcal{A}} \mu_{h,i}(a \mid x)  \left( T_{h,i}(x' \mid x,a) - \widehat T_{h,i}(x' \mid x,a) \right)   \Big| \\
        & \le \sum_{x' \in \mathcal{X}} \sum_{a \in \mathcal{A}}\mu_{h,i}(a \mid x) \Big|   \left( T_{h,i}(x' \mid x,a) - \widehat T_{h,i}(x' \mid x,a) \right) \Big| \\
        & \le \max_{a \in \mathcal{A}} \| T_{h,i}(\cdot  \mid x,a) - \widehat T_{h,i}(\cdot \mid x,a) \|_1  \\
        & = \max_{s,a,h} \| p_h(\cdot \mid s,a) - \hat p_h(\cdot \mid s,a) \|_1 \tag{Def. of $T$} \\
        & \le \sqrt{\frac{2\log\left( \frac{2SAH}{\delta} \right)+2S\log(6m)}{m}} \tag{Def of $\mathcal{E}_1$} \\
        & \le \frac{1}{4KH}. \tag{\Cref{eq:lemma-cond-on-m} and \Cref{lemma:tech-lemma}}
    \end{align*}
    Therefore, we proved that:
    \begin{align}\label{eq:high-prob-ub-eq1}
        \max_{\mu \in \compPol} \max_{h,i} \|T_{h,i}^\mu(\cdot \mid x) - \widehat T_{h,i}^\mu(\cdot \mid x) \|_1  \le \frac{1}{4KH}.
    \end{align}
    Then, since \Cref{eq:high-prob-ub-eq1} holds, we can apply \Cref{lemma:refined-error-decomposition}, and, with probability at least $1-\delta$ we have that:
    \begin{align}\label{eq:high-prob-ub-eq2}
        \max_{h,i} \| V^\star_{h,i} - V^{\hat \mu}_{h,i} \|_{\infty}  \le 4  C_{\mu^\star_\kappa, \mu^\star_\kappa} + 4C_{\hat \mu, \mu^\star_\kappa} + 3H\kappa.
    \end{align}
    In the following, we upper bound $C_{\mu^\star_\kappa, \mu^\star_\kappa}$ and $C_{\hat \mu, \mu^\star_\kappa}$. Before that, for the sake of readability, we introduce some additional notation to simplify the exposition. First, for any pair of policies $\mu, \nu$, we denote by $C_{\mu, \nu}^{h,i,x}$ the following quantity:
    \begin{align*}
        C_{\mu, \nu}^{h,i,x} := \sum_{(\bar h, \bar i) \succeq (h,i)} \E_{\mu}^x \left[ \Big| \sum_{x' \in \mathcal{X}} (T^{\mu}_{\bar h, \bar i}(x' \mid X_{\bar h, \bar i}) - \widehat T^{\mu}_{\bar h, \bar i}(x' \mid X_{\bar h, \bar i}) ) V_{\bar h^+, \bar i^+}^{\nu,\kappa}(x') \Big| \right],
    \end{align*}
    so that $C_{\mu,\nu}= \max_{h,i,x} C_{\mu,\nu}^{h,i,x}$. Furthermore, let $L \coloneqq \log\left( \frac{ 4 |\mathcal{X}|H K A}{\delta} \right) m^{-1}$. Now, we can start to analyze  $C_{\mu, \mu^\star_\kappa}$ for a generic deterministic policy $\mu \in \compPol$. Fix a triplet $(h,i,x)$. We have that:
    \begin{align*}
        C_{\mu, \mu^\star_\kappa}^{h,i,x} & = \sum_{(\bar h, \bar i) \succeq (h,i)} \E_{\mu}^x \left[ \Big| \sum_{x' \in \mathcal{X}} (T^{\mu}_{\bar h, \bar i}(x' \mid X_{\bar h, \bar i}) - \widehat T^{\mu}_{\bar h, \bar i}(x' \mid X_{\bar h, \bar i}) ) V_{\bar h^+, \bar i^+}^{\mu^\star_\kappa, \kappa}(x') \Big| \right] \\
        & \le  \sum_{(\bar h, \bar i) \succeq (h,i)} \E^x_{\mu} \left[ \sum_{a \in \mathcal{A}} \mu(a \mid X_{\bar h, \bar i})\Big| \sum_{x' \in \mathcal{X}} (T_{\bar h, \bar i}(x' \mid X_{\bar h, \bar i},a) - \widehat T_{\bar h, \bar i}(x' \mid X_{\bar h, \bar i},a) ) V_{\bar h^+, \bar i^+}^{\mu^\star_\kappa, \kappa}(x') \Big|\right] \tag{By $|\cdot|$}\\
        & \le \sum_{(\bar h, \bar i)\succeq (h,i)} \E^x_{\mu} \left[ \sum_{a \in \mathcal{A}} \mu(a \mid X_{\bar h, \bar i}) \sqrt{ 2L \Var_{x'\sim T_{\bar h,\bar i}(\cdot \mid X_{\bar h, \bar i},a)} \left(V^{\mu^\star_\kappa,\kappa}_{\bar h^+, \bar i^+}(x')\right)} + LH \right] \tag{By $\mathcal{E}_2$}\\
        & \le \sqrt{ 2L} \sum_{(\bar h, \bar i)\succeq (h,i)} \E^x_{\mu} \left[  \sqrt{ \sum_{a \in \mathcal{A}} \mu(a \mid X_{\bar h, \bar i})  \Var_{x'\sim T_{\bar h,\bar i}(\cdot \mid X_{\bar h, \bar i},a)} \left(V^{\mu^\star_\kappa,\kappa}_{\bar h^+, \bar i^+}(x')\right)}  \right] + LKH^2 \tag{Concavity of $\sqrt{\cdot}$}\\
        & \le \sqrt{ 2L } \sum_{(\bar h, \bar i)\succeq (h,i)} \E^x_{\mu} \left[  \sqrt{ \Var_{x'\sim T_{\bar h,\bar i}(\cdot \mid X_{\bar h, \bar i},a)} \left(V^{\mu^\star_\kappa,\kappa}_{\bar h^+, \bar i^+}(x')\right)}  \right] + LKH^2 \\
        & \le \sqrt{ 2L} \sqrt{ KH   \sum_{(\bar h, \bar i)\succeq (h,i)} \E^x_{\mu} \left[ \Var_{x'\sim T_{\bar h,\bar i}(\cdot \mid X_{\bar h, \bar i},a)} \left(V^{\mu^\star_\kappa,\kappa}_{\bar h^+, \bar i^+}(x')\right) \right]}   + LKH^2 \tag{Cauchy-Schwarz and Jensen}\\
        & \le \sqrt{ 2L KH^3}  + LKH^2. \tag{\Cref{lemma:variance-lemma}}
    \end{align*}
    Plugging this result within \Cref{eq:high-prob-ub-eq2}, we have that:
    \begin{align*}
        \max_{h,i} \| V^\star_{h,i} - V^{\hat \mu}_{h,i} \|_{\infty}  \le 8 \sqrt{2LKH^3} + 8LKH^2 + 3H\kappa,
    \end{align*}
    which concludes the proof.
\end{proof}

\subsubsection{Putting Everything Together}
Now, we can finally use \Cref{lemma:high-probability-error-bound} to prove \Cref{thm:learning}.

\begin{proof}[Proof of \Cref{thm:learning}]
    The proof follows by combining \Cref{lemma:high-probability-error-bound} with an appropriate choice of $m$ and $\kappa$. We first $\kappa = \tfrac{\epsilon}{9H}$. Then, we observe that 
    \[
    |\mathcal{X}| \le S\left(1+\frac H\kappa\right)^2=S\left(1+\frac{9H^2}\epsilon\right)^2.
    \]
    Let
    \[
    L':=\log\left(\frac{4HKAS(1+9H^2/\epsilon)^2}{\delta}\right).
    \]
    Then, we set $m$ as follows:
    \begin{align*}
        m & = \max \left\{ 2 \left( 2 + {128 K^2H^2\log\left( \frac{2SAH}{\delta} \right)} + {128\,SK^2H^2} \log\left( {384 \,SK^2H^2} \right) \right), \frac{1152L'KH^3}{\epsilon^2} \right\} \\
        & \le \widetilde{\mathcal{O}}\left( SK^2H^2 \log\left(\frac{1}{\delta} \right) + \frac{KH^3\log(\frac{1}{\delta})}{\epsilon^2}  \right).
    \end{align*}
    
    Now we can apply \Cref{lemma:high-probability-error-bound} and obtain that, with probability at least $1-\delta$:
    \begin{align*}
        \max_{h,i} \| V^\star_{h,i} - V^{\hat \mu}_{h,i} \|_{\infty} & \le 8 \sqrt{\frac{2 \log\left( \frac{ 4 |\mathcal{X}|HKA}{\delta} \right) KH^3}{m}} + 8 \frac{\log\left( \frac{ 4 |\mathcal{X}|HKA}{\delta} \right)KH^2}{m}+ 3H\kappa \le \epsilon, 
    \end{align*}
    thus concluding the proof.
\end{proof}

%% file: src/appendix_aux.tex
\section{Auxiliary Lemmas}

\begin{lemma}[Application of Bernstein's Inequality]\label{lemma:bernstein}
    Let $D \in \Naturals_{> 0}$ and $q \in \Delta(q)$. Let $\hat q$ be the maximum likelihood estimator of $q$ computed using $n$ independent samples from $q$. Furthermore, let $L > 0$ and $f: D \to [0, L]$ be any bounded function. Then, fix $\delta \in (0,1)$. With probability at least $1-\delta$ it holds that:
    \begin{align*}
        | (q - \hat q)^\top f   | \le \sqrt{\frac{2 \log\left( \frac{2}{\delta} \right) \Var_q(f)}{n}} + \frac{2L \log\left( \frac{2}{\delta} \right)}{3n}.
    \end{align*}
\end{lemma}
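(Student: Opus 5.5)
The statement is the Bernstein-type bound for the empirical estimate $\hat q$ of a discrete distribution $q$ on $[D]$, tested against a fixed bounded function $f$. The plan is to reduce it to the scalar Bernstein inequality for an i.i.d. average.

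\textbf{Reduction to a sample mean.} Let $Y_1,\dots,Y_n$ be the i.i.d.\ samples from $q$. The maximum likelihood estimator is the empirical frequency vector $\hat q(y)=\tfrac1n\sum_{t=1}^n \mathds{1}\{Y_t=y\}$. Hence
\[
\hat q^\top f=\frac1n\sum_{t=1}^n f(Y_t).
\]
This is the average of the i.i.d.\ random variables $Z_t=f(Y_t)$, which have mean $q^\top f$ and variance $\Var_q(f)$. Since $f$ takes values in $[0,L]$, the centered variables satisfy $|Z_t-\E Z_t|\le L$.

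\textbf{Applying Bernstein.} The classical Bernstein inequality for i.i.d.\ variables with $|Z_t-\E Z_t|\le b$ and variance $\sigma^2$ gives, for each sign separately,
\[
\Prob\Big(\pm\tfrac1n\textstyle\sum_t (Z_t-\E Z_t)\ge \epsilon\Big)\le \exp\Big(-\frac{n\epsilon^2}{2\sigma^2+\tfrac{2}{3}b\epsilon}\Big).
\]
I set the right-hand side to $\delta/2$ and solve the quadratic in $\epsilon$. The relevant root is bounded by $\sqrt{2\sigma^2\log(2/\delta)/n}+\tfrac{2b\log(2/\delta)}{3n}$, using $\sqrt{a+c}\le\sqrt a+\sqrt c$. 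A union bound over the two tails then yields the two-sided bound with probability at least $1-\delta$. Substituting $\sigma^2=\Var_q(f)$ and $b=L$ gives exactly the displayed inequality.

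\textbf{Main obstacle.} The only delicate point is matching the constants. The centered range is actually at most $L$, since $f\in[0,L]$, so $b=L$ is legitimate. Inverting the Bernstein tail also requires some care: with $\ell=\log(2/\delta)$, the root $\epsilon^\star=\frac{b\ell}{3n}+\sqrt{\big(\frac{b\ell}{3n}\big)^2+\frac{2\sigma^2\ell}{n}}$ is at most $\frac{2b\ell}{3n}+\sqrt{\frac{2\sigma^2\ell}{n}}$, which is precisely the stated form.

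Finally, $f$ must be deterministic, or at least independent of the samples. In the paper this holds because the lemma is applied with $f=V^{\mu^\star_\kappa,\kappa}_{h^+,i^+}$, which is computed from the true model only.
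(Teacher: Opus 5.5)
Your proof is correct and takes the same approach as the paper, which simply cites Bernstein's inequality for the i.i.d.\ average $\hat q^\top f=\frac1n\sum_t f(Y_t)$. You additionally write out the choice $b=L$, the inversion of the tail bound to get $\sqrt{2\ell\,\Var_q(f)/n}+\frac{2L\ell}{3n}$, and the two-sided union bound with $\delta/2$ per tail; all of these check out.
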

\begin{proof}
    Direct application of Bernstein's inequality \citep{boucheron2003concentration}.
\end{proof}

\begin{lemma}[Concentration of $\|\cdot \|_1$ norms]\label{lemma:high-prob-l1}
    Consider $D \in \Naturals_{>0}$ and $q \in \Delta(D)$. Let $\hat q$ be the maximum likelihood estimator of $q$ after $n$ i.i.d. samples. Then, for $\delta \in (0,1)$, it holds that:
    \begin{align*}
        \Prob\left( \| q - \hat q \|_1 \ge \sqrt{\frac{2 \log\left(\frac{1}{\delta} \right)+2D\log(6n)}{n}}\right) \le \delta.
    \end{align*}
\end{lemma}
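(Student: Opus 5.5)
The plan is to write the $\ell_1$ distance as a maximum of finitely many linear functionals of the empirical distribution, apply Hoeffding's inequality to each functional, and take a union bound. Concretely, for any $u \in \{-1,+1\}^D$ we have $u^\top(\hat q - q) \le \|\hat q - q\|_1$, with equality for $u_j = \mathrm{sign}(\hat q_j - q_j)$. Hence
\[
\|\hat q - q\|_1 = \max_{u \in \{-1,+1\}^D} u^\top(\hat q - q).
\]
The maximizing $u$ depends on the data, but the maximum runs over only $2^D$ fixed vectors, so a union bound over them is legitimate.

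Fix $u \in \{-1,+1\}^D$. Let $Y_1,\dots,Y_n$ be the i.i.d.\ samples. Since $\hat q$ is the empirical frequency vector, $u^\top \hat q = \tfrac1n\sum_{t=1}^n u_{Y_t}$. This is an average of i.i.d.\ variables taking values in $[-1,1]$, with mean $u^\top q$. Hoeffding's inequality then gives
\[
\Prob\big(u^\top(\hat q - q) \ge \varepsilon\big) \le \exp\!\big(-n\varepsilon^2/2\big).
\]
A union bound over the $2^D$ sign vectors yields
\[
\Prob\big(\|\hat q - q\|_1 \ge \varepsilon\big) \le 2^D \exp\!\big(-n\varepsilon^2/2\big).
\]

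Finally, choose $\varepsilon = \sqrt{(2\log(1/\delta) + 2D\log 2)/n}$, which makes the right-hand side equal to $\delta$. Since $n \ge 1$, we have $\log 2 \le \log(6n)$, so the threshold in the statement is at least this $\varepsilon$. The probability of exceeding the larger threshold is therefore at most $\delta$, which proves the claim.

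No step is a real obstacle. The point needing care is that the union bound is taken over the deterministic set of sign vectors, not over the data-dependent maximizer. The $\log(6n)$ factor in the statement hints at an $\epsilon$-net or method-of-types argument, for instance covering the empirical types, of which there are at most $(n+1)^D$. If one preferred that route, one would bound the number of types and combine it with Sanov/Hoeffding for each type. The sign-vector argument above is sharper and already implies the stated bound, so that is the route I would take.
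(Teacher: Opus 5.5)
Your proof is correct, but it follows a different route from the paper. The paper proves the lemma in one line. It applies Pinsker's inequality, $\|\hat q - q\|_1 \le \sqrt{2\,\mathrm{KL}(\hat q, q)}$, together with Proposition 1 of \cite{jonsson2020planning}. That proposition is a method-of-mixtures concentration bound for the empirical KL divergence, of the form $n\,\mathrm{KL}(\hat q_n, q) \le \log(1/\delta) + (D-1)\log\bigl(e(1+n/(D-1))\bigr)$, holding with probability at least $1-\delta$ \emph{simultaneously for all} $n$. Bounding $e(1+n/(D-1)) \le 2en \le 6n$ gives exactly the $2D\log(6n)$ term. So the $\log(6n)$ in the statement is the price of a time-uniform KL bound, not of a covering or method-of-types argument as you guessed.

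Your argument is the classical sign-vector bound: Hoeffding for each $u \in \{-1,+1\}^D$, then a union bound over these $2^D$ fixed vectors. It is elementary and self-contained, and for fixed $n$ it is sharper, with $D\log 2$ in place of $D\log(6n)$. The paper's route buys validity uniformly over the sample size, and it controls the stronger KL quantity. Neither is needed here, because the algorithm of \Cref{sec:algo} draws a fixed, deterministic number $m$ of samples per triple. Your fixed-$n$ version therefore suffices for the downstream use in $\mathcal{E}_1$, and would even slightly relax the condition on $m$.
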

\begin{proof}
    Direct application of Pinsker's inequality and Proposition 1 in \cite{jonsson2020planning}. 
\end{proof}

\begin{lemma}[Technical Lemma; Lemma D.9 in \cite{barbara2026optimal}]\label{lemma:tech-lemma}
    Let $c_1, c_2, n, L, \delta \in \mathbb{R}_{> 0}$ and consider 
    \[
    \bar{T} = \inf \left\{ t \in \mathbb{N}: \frac{\log(c_1/\delta) + n \log(c_2t)}{t} \le L^2 \right\}.
    \]
    It holds that:
    \[
        \bar{T} \le 2 \left( 2 + \frac{4\log(c_1/\delta)}{L^2} + \frac{4n}{L^2} \log\left( \frac{2nc_2}{L^2} \right)\right).
    \]
\end{lemma}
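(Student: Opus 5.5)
The plan is to exhibit an explicit integer $t$ that satisfies the defining inequality of $\bar T$ and is at most the claimed bound. Since $\bar T$ is an infimum over $\mathbb{N}$, that gives the result. Write $\ell \coloneqq \log(c_1/\delta)$ and $a \coloneqq 2n/L^2$. The defining inequality $\frac{\ell + n\log(c_2 t)}{t}\le L^2$ holds as soon as both halves are controlled:
\begin{align*}
\frac{\ell}{t}\le \frac{L^2}{2} \qquad\text{and}\qquad \frac{n\log(c_2 t)}{t}\le \frac{L^2}{2}.
\end{align*}
The second condition is equivalent to $t \ge a\log(c_2 t)$. I would verify the two conditions separately and then add the resulting lower bounds on $t$.

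The first condition holds for every $t\ge 2\ell/L^2$. For the second, I would use the elementary inequality $\log y\le y/2$, valid for all $y>0$ (its maximum gap is at $y=2$, where $\log 2<1$). Writing $\log(c_2 t)=\log(ac_2)+\log(t/a)$, this gives
\begin{align*}
a\log(c_2 t)\le a\log(ac_2)+\frac{t}{2}.
\end{align*}
The right-hand side is at most $t$ whenever $t\ge 2a\log(ac_2) = \frac{4n}{L^2}\log\left(\frac{2nc_2}{L^2}\right)$. This is exactly the last term of the claimed bound, and no side condition on $ac_2$ is needed.

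Both conditions are lower bounds on $t$, so they are preserved when $t$ increases. Hence any integer
\begin{align*}
t\ \ge\ t_0\coloneqq \frac{2\ell}{L^2}+\frac{4n}{L^2}\log\left(\frac{2nc_2}{L^2}\right)
\end{align*}
satisfies the defining inequality, where each summand is replaced by its positive part if needed. Taking $t=\lceil t_0\rceil\le t_0+1$, I would then check that $t_0+1$ is at most $2\bigl(2+\frac{4\ell}{L^2}+\frac{4n}{L^2}\log\left(\frac{2nc_2}{L^2}\right)\bigr)$. When both logarithmic terms are nonnegative, the factor $2$ and the additive $4$ leave ample slack.

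The main, if minor, obstacle is the degenerate regime where $\ell<0$ or $\log(2nc_2/L^2)<0$. Then the positive-part replacement and the stated bound must be reconciled: I would treat the negative terms by monotonicity, since they only make the defining inequality easier to satisfy. I would also note that in the paper's application, via \Cref{lemma:high-probability-error-bound}, both logarithms are positive, so the clean case is the one that matters.
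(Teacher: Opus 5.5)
Your argument is correct and complete when $c_1\ge\delta$ and $2nc_2\ge L^2$. Splitting $L^2$ into two halves, applying $\log y\le y/2$ to $y=t/a$ with $a=2n/L^2$, and bounding $\bar T\le\max\{1,\lceil t_0\rceil\}\le t_0+1$ lands well below $4+\frac{8}{L^2}\log(c_1/\delta)+\frac{8n}{L^2}\log\bigl(\frac{2nc_2}{L^2}\bigr)$. Keep the $\max$ with $1$, since $t=0$ is not admissible and $t_0$ can vanish. The paper does not prove this lemma; it imports it from an external reference, so there is nothing to compare your route against. Your route also proves the stronger fact that \emph{every} integer $t\ge t_0$ satisfies the defining inequality. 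That is the form actually used in \Cref{lemma:high-probability-error-bound}, where the inequality is needed at the chosen $m$ rather than at $\bar T$.

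The gap is your plan for the degenerate regime: it cannot be carried out, because the lemma is false there. Monotonicity points the wrong way. Decreasing $c_1/\delta$ or $c_2$ enlarges the admissible set and so can only decrease $\bar T$. But it also decreases the claimed bound without limit, while $\bar T\ge 1$ always. Concretely, take $n=L=1$, $c_2=1/2$, $c_1=\delta/e$. The right-hand side equals $2(2-4+0)=-4$, whereas $t=1$ already satisfies the defining inequality (its left-hand side is $-1-\log 2\le 1$), so $\bar T=1$. The fix is either to add the hypotheses $c_1\ge\delta$ and $2nc_2\ge L^2$, or to replace both logarithms in the bound by their positive parts. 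Your argument proves the positive-part version verbatim once $t_0$ is taken as the sum of positive parts. As you observed, nothing is lost for the paper: in \Cref{lemma:high-probability-error-bound} the lemma is applied with $c_1=2SAH$, $c_2=6$, $n=S$, $L^2=(32K^2H^2)^{-1}$, so $c_1/\delta>1$ and $2nc_2/L^2=384SK^2H^2>1$.
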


\begin{lemma}[Lemma 1 in \cite{garivier2019explore}]\label{lemma:com-1}
    Consider a measurable space $(\Omega, \mathcal{F})$ equipped with two distributions $\Prob_1$ and $\Prob_2$. For any $\mathcal{F}$-measurable function $Z: \Omega \to [0,1]$ we have:
    \begin{align*}
        \textup{KL}(\Prob_1, \Prob_2) \ge \textup{kl}(\E_{1}[Z], \E_2[Z]),
    \end{align*}
    where $\E_1$ and $\E_2$ are the expectations under $\Prob_1$ and $\Prob_2$ and $\textup{kl}(p,q)$ denotes the KL-divergence between Bernoulli distributions with mean $p$ and $q$.
\end{lemma}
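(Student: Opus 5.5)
The plan is to prove this as an instance of the data-processing inequality for the KL divergence, working directly with densities. A coarse-graining argument via a Markov kernel would also do it, but it needs the measure-theoretic chain rule, and the density route avoids that. First dispose of the trivial case: if $\textup{KL}(\Prob_1,\Prob_2)=+\infty$ there is nothing to prove. Otherwise $\Prob_1 \ll \Prob_2$, and we write $\Lambda = \mathrm{d}\Prob_1/\mathrm{d}\Prob_2$, so that $\textup{KL}(\Prob_1,\Prob_2)=\E_2[\Lambda\log\Lambda]$. Set $p=\E_1[Z]=\E_2[Z\Lambda]$ and $q=\E_2[Z]$, and similarly $1-p=\E_2[(1-Z)\Lambda]$ and $1-q=\E_2[1-Z]$.

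The key step splits $\E_2[\Lambda\log\Lambda]$ into the two pieces $\E_2[Z\,\Lambda\log\Lambda]$ and $\E_2[(1-Z)\,\Lambda\log\Lambda]$; this uses $Z\in[0,1]$. Treat the first piece, assuming $q>0$. Define the probability measure $\mathrm{d}\mathbb{Q}=Z\,\mathrm{d}\Prob_2/q$. Apply Jensen's inequality to the convex map $\varphi(t)=t\log t$ under $\mathbb{Q}$:
\[
\E_2[Z\,\Lambda\log\Lambda] = q\,\E_{\mathbb{Q}}[\varphi(\Lambda)] \ge q\,\varphi\big(\E_{\mathbb{Q}}[\Lambda]\big) = q\,\varphi(p/q) = p\log\tfrac{p}{q}.
\]
The same argument with $(1-Z)$ and the measure $(1-Z)\,\mathrm{d}\Prob_2/(1-q)$ gives
\[
\E_2[(1-Z)\,\Lambda\log\Lambda] \ge (1-p)\log\tfrac{1-p}{1-q}.
\]
Summing the two bounds yields $\textup{KL}(\Prob_1,\Prob_2)\ge \textup{kl}(p,q)$.

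The main (though minor) obstacle is the set of degenerate cases and integrability issues. If $q=0$, then $Z=0$ $\Prob_2$-a.s., hence also $\Prob_1$-a.s., so $p=0$. The corresponding term of $\textup{kl}$ then vanishes under the convention $0\log(0/0)=0$, and the first piece equals $0$, so the bound holds trivially. The case $q=1$ is symmetric. On integrability, $\varphi\ge -1/e$ is bounded below, so both pieces are well defined in $(-\infty,+\infty]$ and the split of the expectation is legitimate. Finally, Jensen's inequality applies to $\varphi$ because $\E_{\mathbb{Q}}[\Lambda]=p/q$ is finite.
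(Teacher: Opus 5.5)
Your proof is correct. That includes the degenerate cases $q\in\{0,1\}$ and the integrability step: finiteness of $\textup{KL}$ together with $t\log t\ge -1/e$ makes $\Lambda\log\Lambda$ integrable, so splitting it against $Z$ and $1-Z$ is legitimate.

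The paper does not prove this lemma; it imports it from \citet{garivier2019explore}, whose argument takes a different route. They enlarge the space to $\Omega\times[0,1]$ with the product measures $\Prob_j\otimes\lambda$, where $\lambda$ is Lebesgue measure. When $\Prob_1\ll\Prob_2$ these have Radon--Nikodym derivative $\Lambda(\omega)$, so their $\textup{KL}$ equals $\textup{KL}(\Prob_1,\Prob_2)$. They then apply the data-processing inequality to the single event $E=\{(\omega,u): u\le Z(\omega)\}$, whose probability under $\Prob_j\otimes\lambda$ is $\E_j[Z]$ by Fubini.

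That route turns the soft test $Z$ into a genuine Bernoulli event and uses data processing as a black box. It makes the statement read simply as ``KL contracts under any test.'' It also needs only the trivial product-density identity, not a general chain rule, so your aside that the kernel route requires the measure-theoretic chain rule overstates its cost.

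Your route instead proves the data-processing inequality by hand for the two-outcome kernel $\omega\mapsto\mathrm{Ber}(Z(\omega))$. Jensen for $t\log t$ under the tilted measures $Z\,\mathrm{d}\Prob_2/q$ and $(1-Z)\,\mathrm{d}\Prob_2/(1-q)$ is exactly a continuous log-sum inequality. The gain is a self-contained, elementary proof; the price is the case analysis you already carried out.
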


\begin{lemma}[Lemma 15 in \cite{domingues2021episodic}]\label{lemma:com-2}
    For any $p,q \in [0,1]$:
    \begin{align*}
        \textup{kl}(p,q) \ge (1-p) \log\left( \frac{1}{1-q} \right) - \log(2),
    \end{align*}
    where $\textup{kl}(p,q)$ denotes the KL-divergence between Bernoulli distributions with mean $p$ and $q$.
\end{lemma}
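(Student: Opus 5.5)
We would prove the inequality by expanding the Bernoulli KL divergence and bounding each resulting term separately. The natural move is to isolate the cross-entropy part, which contains the desired $(1-p)\log\frac{1}{1-q}$, from the entropy part, which is controlled by $\log 2$.

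\textbf{Step 1: expansion.} By definition, with the conventions $0\log 0 = 0$ and $0\log\frac{0}{0}=0$,
\[
\textup{kl}(p,q) = p\log\frac{p}{q} + (1-p)\log\frac{1-p}{1-q}.
\]
Splitting each logarithm gives
\[
\textup{kl}(p,q) = p\log\frac{1}{q} + (1-p)\log\frac{1}{1-q} - \mathrm{h}(p),
\]
where $\mathrm{h}(p) = -p\log p-(1-p)\log(1-p)$ is the binary entropy.

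\textbf{Step 2: bounding the two extra terms.} First, $p\log\frac1q \ge 0$ because $q\in[0,1]$, so we may drop it. Second, $\mathrm{h}(p)\le \log 2$. This holds because $\mathrm{h}$ is concave and symmetric about $1/2$, where it attains the value $\log 2$; equivalently, it is the entropy of a two-point distribution. Together these give $\textup{kl}(p,q)\ge (1-p)\log\frac{1}{1-q}-\log 2$.

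\textbf{Step 3: boundary cases.} This is the only mildly delicate point; the rest is routine.
\begin{itemize}
\item If $q=1$ and $p<1$, then $\textup{kl}(p,q)=+\infty$, so the inequality holds trivially. Here the right-hand side is $+\infty$ as well, and one reads the inequality in the extended reals, consistent with the convention that the KL divergence is infinite.
\item If $q=1$ and $p=1$, both sides reduce to finite or conventionally zero terms, since $(1-p)\log\frac{1}{1-q}=0\cdot\infty=0$. The claim becomes $0\ge -\log 2$.
\item If $q=0$ and $p>0$, the left-hand side is $+\infty$.
\end{itemize}
In all these cases the inequality is immediate. For $q\in(0,1)$, the computation of Steps 1 and 2 applies verbatim.
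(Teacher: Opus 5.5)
Your proof is correct and is the standard argument behind the cited Lemma 15 of Domingues et al.\ (the paper gives no proof, only the citation): write $\textup{kl}(p,q)=p\log\frac1q+(1-p)\log\frac{1}{1-q}-\mathrm{h}(p)$, drop the nonnegative first term, and use $\mathrm{h}(p)\le\log 2$. One small omission: your Step 3 does not list the boundary case $p=q=0$, which is immediate since $\textup{kl}(0,0)=0\ge-\log 2$.
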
